\newtheorem{theorem}{Theorem}[section]
\newtheorem{proposition}{Proposition}[section]
\newtheorem{lemma}{Lemma}[section]
\newtheorem{remark}{Remark}[section]
\newcommand{\divides}{\mid}
\newcommand{\onefunc}{\mathds{1}}
\newcommand{\stam}[1]{}
\newcommand{\bx}{\mathbf{x}}
\newcommand{\bw}{\mathbf{w}}
\newcommand{\bb}{\mathbf{b}}
\newcommand{\bu}{\mathbf{u}}
\newcommand{\bz}{\mathbf{z}}
\newcommand{\bc}{\mathbf{c}}
\newcommand{\by}{\mathbf{y}}
\newcommand{\co}{{\cal O}}
\newcommand{\cd}{{\cal D}}
\newcommand{\cc}{{\cal C}}
\newcommand{\ci}{{\cal I}}
\newcommand{\cn}{{\cal N}}
\DeclareMathOperator*{\sign}{sign}
\newcommand{\bbs}{{\mathbb S}}
\newcommand{\reals}{{\mathbb R}}
\newcommand{\integers}{{\mathbb Z}}
\newcommand{\zero}{{\mathbf{0}}}
\DeclareMathOperator{\poly}{poly}
\DeclareMathOperator{\polylog}{polylog}
\DeclareMathOperator*{\E}{\mathbb{E}}
\DeclareMathOperator{\msb}{MSB}
\DeclareMathOperator{\bin}{bin}
\DeclareMathOperator{\betadist}{Beta}
\newcommand{\inner}[1]{\langle #1 \rangle}
\newcommand{\norm}[1]{\left\|#1\right\|}
\newcommand{\snorm}[1]{\|#1\|} %small norm
\newcommand{\tbx}{{\tilde{\bx}}}
\newcommand{\tx}{{\tilde{x}}}
\title{Neural Networks with Small Weights and \\
Depth-Separation Barriers}
\author{Gal Vardi\qquad Ohad Shamir\\
Weizmann Institute of Science\\
\texttt{\{gal.vardi,ohad.shamir\}@weizmann.ac.il}
}
\date{}
\begin{document}

\maketitle

\begin{abstract}
In studying the expressiveness of neural networks, an important question is whether there are functions which can only be approximated by sufficiently deep networks, assuming their size is bounded. However, for constant depths, existing results are limited to depths $2$ and $3$, and achieving results for higher depths has been an important open question. In this paper, we focus on feedforward ReLU networks, and prove fundamental barriers to proving such results beyond depth $4$, by reduction to open problems and natural-proof barriers in circuit complexity. To show this, we study a seemingly unrelated problem of independent interest: Namely, whether there are polynomially-bounded functions which require super-polynomial weights in order to approximate with constant-depth neural networks. We provide a negative and constructive answer to that question, by showing that if a function can be approximated by a polynomially-sized, constant depth $k$ network with arbitrarily large weights, it can also be approximated by a polynomially-sized, depth $3k+3$ network, whose weights are polynomially bounded.
\end{abstract}

\section{Introduction}
\label{sec:intro}

The {\em expressive power} of feedforward neural networks has been extensively studied in recent years. It is well-known that sufficiently large depth-$2$ neural networks, using reasonable activation functions, can approximate any continuous function on a bounded domain (\cite{cybenko1989approximation,funahashi1989approximate,hornik1991approximation,barron1994approximation}). However, the required size of such networks can be exponential in the input dimension, which renders them impractical. From a learning perspective, both theoretically and in practice, the main interest is in neural networks whose size is at most polynomial in the input dimension.

When considering the expressive power of neural networks of bounded size, a key question is what are the tradeoffs between the width and the depth.
Overwhelming empirical evidence indicates that deeper networks tend to perform better than shallow ones, a phenomenon supported by the intuition that depth, providing compositional expressibility, is necessary for efficiently representing some functions.
From the theoretical viewpoint, quite a few works in the past few years have explored the beneficial effect of depth on increasing the expressiveness of neural networks.
%(e.g., \cite{eldan2016power,telgarsky2016benefits,liang2016deep,daniely2017depth,safran2017depth,yarotsky2017error,safran2019depth}).
A main focus is on {\em depth separation}, namely, showing that there is a function $f:\reals^d \rightarrow \reals$ that can be approximated by a $\poly(d)$-sized network of a given depth, with respect to some input distribution, but cannot be approximated by $\poly(d)$-sized networks of a smaller depth.
Depth separation between depth $2$ and $3$ was shown by \cite{eldan2016power} and \cite{daniely2017depth}. However, despite much effort, no such separation result is known for any constant greater than $2$. Thus, it is an open problem whether there is separation between depth $3$ and some constant depth greater than $3$.
Separation between networks of a constant depth and networks with $\poly(d)$ depth was shown by \cite{telgarsky2016benefits} (see related work section below for more details).

In fact, a similar question has been extensively studied by the theoretical computer science community over the past decades, in the context of Boolean and threshold circuits of bounded size.
Showing limitations for the expressiveness of such circuits (i.e. {\em circuit lower bounds})
can contribute to our understanding of the $P\neq NP$ question, and can have other significant theoretical implications (\cite{arora2009computational}).
Despite many attempts, the results on circuit lower bounds were limited.
In a seminal work, \cite{razborov1997natural} described a main technical limitation of current approaches for proving circuit lower bounds: They defined a notion of ``natural proofs" for a circuit lower bound (which include current proof techniques), and showed that obtaining lower bounds with such proof techniques would violate a widely accepted conjecture, namely,
that pseudorandom functions exist. This {\em natural-proof barrier} explains the lack of progress on circuit lower bounds.
More formally, they show that if a class $\cc$ of circuits contains a family of pseudorandom functions, then showing for some function $f$ that $f \not \in \cc$ cannot be done with a natural proof. As a result, if we consider the class $\cc$ of $\poly(d)$-sized circuits of some bounded depth $k$, where $k$ is large enough so that $\cc$ contains a pseudorandom function family, then it will be difficult to show that some functions are not in $\cc$, and hence that these functions require depth larger than $k$ to express.

An object closer to actual neural networks are threshold circuits. These are essentially neural networks with a threshold activation function in all neurons (including the output neuron), and where the inputs are in $\{0,1\}^d$.
The problem of depth separation in threshold circuits was widely studied (\cite{razborov1992small}). This problem requires, for some integer $k$, a function that cannot be computed by a threshold circuit of width $\poly(d)$ and depth $k$, but can be computed\footnote{Note that in this literature it is customary to require exact representation of the function, rather than merely approximating it.} by a threshold circuit of width $\poly(d)$ and depth $k'>k$. \cite{naor2004number} and \cite{krause2001pseudorandom} showed a candidate pseudorandom function family computable by threshold circuits of depth $4$, width $\poly(d)$, and $\poly(d)$-bounded weights. By \cite{razborov1997natural}, it implies that for every $k' > k \geq 4$, there is a natural-proof barrier for showing depth separation between threshold circuits of depth $k$ and depth $k'$. As for smaller depths, a separation between threshold circuits of depth $3$ and some $k>3$ is a longstanding open problem (although there is no known natural-proof barrier in this case), and separation between threshold circuits of depth $2$ and $3$ is known under the assumption that the weight magnitudes are $\poly(d)$ bounded (\cite{hajnal1987threshold}).
%under a reasonable assumption on the weight magnitudes (\cite{hajnal1987threshold}).

Since a threshold circuit is a special case of a neural network with threshold activation and where the inputs and output are Boolean, it is natural to ask whether the barriers to depth separation in threshold circuits have implications on the problem of depth separation in neural networks. Such implications are not obvious, since neural networks have real-valued inputs and outputs (not necessarily just Boolean ones), and a continuous activation function. Thus, it might be possible to come up with a depth-separation result, which crucially utilizes some function and inputs in Euclidean space. In fact, this can already be seen in existing results: For example, separation between threshold circuits of constant depth (TC$^0$) and threshold circuits of $\poly(d)$ depth (which equals the complexity class P/poly) is not known, but \cite{telgarsky2016benefits} showed such a result for neural networks. His construction is based on the observation that for one dimensional data, a network of depth $k$ is able to express a sawtooth function on the interval $[0,1]$ which oscillates $\co(2^k)$ times. Clearly, this utilizes the continuous structure of the domain, in a way that is not possible with Boolean inputs. Also, the depth-$2$ vs.~$3$ separation results of \cite{eldan2016power} and \cite{daniely2017depth} rely on harmonic analysis of real functions. Finally, the result of \cite{eldan2016power} does not make any assumption on the weight magnitudes, whereas relaxing this assumption for the parallel result on threshold circuits is a longstanding open problem (\cite{razborov1992small}).

\subsection*{Main Result 1: Barriers to Depth Separation}

In this work, we focus on real-valued neural networks with the ReLU activation function, and show (under some mild assumptions on the input distribution and on the function) that any depth-separation result between neural networks of depth $k \geq 4$ and some constant $k'>k$ would imply a depth separation result between threshold circuits of depth $k-2$ and some constant greater than $k-2$. Hence, showing depth separation with $k=5$ would solve the longstanding open problem of separating between threshold circuits of depth $3$ and some constant greater than $3$. Showing depth separation with $k \geq 6$ would solve the open problem of separating between threshold circuits of depth $k-2$ and some constant depth greater than $k-2$, which is especially challenging due to the natural-proof barrier for threshold circuits of depth at least $4$. Finally, showing depth separation with $k=4$ %(with no restriction on the weight magnitudes)
would solve the longstanding open problem of separating between threshold circuits of depth $2$ (with arbitrarily large weights) and some constant greater than $2$ (we note that separation between threshold circuits of depth~$2$ and~$3$ is known only under the assumption that the weight magnitudes are $\poly(d)$ bounded).
The result applies to both continuous and discrete input distributions.
Thus, we show a barrier to depth separation, that explains the lack of progress on depth separation for constant-depth neural networks of depth at least $4$.

While this is a strong barrier to depth separation in neural networks, it should not discourage researchers from continuing to investigate the problem. First, our results focus on plain feedforward ReLU networks, and do not necessarily apply to other architectures. Second, we do make some assumptions on the input distribution and the function, which are mild but perhaps can be circumvented (or alternatively, relaxed). Third, our barrier does not apply to separation between depth $3$ and some larger constant.
That being said, we do show that in order to achieve separation between depth $k \geq 3$ and some constant $k'>k$, some different approach than these used in current results would be required.
As far as we know, in all existing depth-separation results for continuous input distributions (e.g., \cite{eldan2016power,daniely2017depth,telgarsky2016benefits,safran2017depth,liang2016deep,yarotsky2017error,safran2019depth})
the functions are either of the form $f(\bx)=g(\snorm{\bx})$ or of the form $f(\bx)=g(x_1)$ for some $g:\reals \rightarrow \reals$. Namely, $f$ is either a radial function, or a function that depends only on one component\footnote{In \cite{daniely2017depth} the function is not radial, but, as shown in \cite{safran2019depth}, it can be reduced to a radial one.}.
We show that for functions of these forms, networks of a constant depth greater than $3$ do not have more power than networks of depth $3$.

\subsection*{Main Result 2: Effect of Weight Magnitude on Expressiveness}

To establish our depth-separation results, we actually go through a seemingly unrelated problem
of independent interest: Namely, what is the impact on expressiveness if we force the network weights to have reasonably bounded weights (say, $\poly(d)$). This is a natural restriction: Exponentially-large weights are unwieldy, and moreover, most neural networks used in practice have small weights, due to several reasons related to the training process, such as regularization, standard initialization of the weights to small values, normalization heuristics, and techniques to avoid the exploding gradient problem (\cite{goodfellow2016deep}). Therefore, it is natural to ask how bounding the size of the weights affects the expressive power of neural networks. As far as we know, there are surprisingly few works on this, and current works on the expressiveness of neural networks often assume that the weights may be arbitrarily large, although this is not the case in practice.

If we allow arbitrary functions, there are trivial cases where limiting the weight magnitudes hurts expressiveness. For example, consider the function $f:[0,1]^d \rightarrow \reals$, where for every $\bx=(x_1,\ldots,x_d)$ we have $f(\bx)=x_1 \cdot 2^d$. Clearly, $f$ can be expressed by a neural network of depth $1$ with exponential (in $d$) weights. This function cannot be approximated with respect to the uniform distribution on $[0,1]^d$ by a network of constant depth with $\poly(d)$ width and $\poly(d)$-bounded weights, since such a network cannot compute exponentially-large values. However, functions of practical interest only have constant or $\poly(d)$-sized values (or at least can be well-approximated by such functions). Thus, a more interesting question is whether for approximating such bounded functions, we may ever need weights whose size is more than $\poly(d)$.

In our paper, we provide a negative answer to this question, in the following sense: Under some mild assumptions on the input distribution, if the function can be approximated by a network with ReLU activation, width $\poly(d)$, constant depth $k$ and arbitrarily large weights, then we show how it can be approximated by a network with ReLU activation, width $\poly(d)$, depth $3k+3$, and with weights whose absolute values are bounded by some $\poly(d)$ or by a constant. The result applies to both continuous and discrete input distributions.

The two problems that we consider, namely depth-separation and the power of small weights, may seem unrelated. Indeed, each problem considers a different aspect of expressiveness in neural networks. However, perhaps surprisingly, the proofs for our results on barriers to depth separation follow from our construction of networks with small weights. In a nutshell, the idea is that our deeper small-weight network is such that most layers implement a threshold circuit. Thus, if we came up with a ``hard'' function $f$ that provably requires much depth to express with a neural network, then the threshold circuit used in expressing it (via our small-weight construction) also provably requires much depth -- since otherwise, we could make our small-weight network shallower, violating the assumption on $f$. This would lead to threshold-circuit lower bounds. See Section \ref{sec:proof ideas} for more details on the proof ideas.

\subsection*{Related Work}

\textbf{Depth separation in neural networks.}
As we already mentioned, depth separation between depth $2$ and $3$ was shown by \cite{eldan2016power} and \cite{daniely2017depth}.
In \cite{eldan2016power} there is no restriction on the weight magnitudes of the depth-$2$ network, while \cite{daniely2017depth} assumes that the weights are bounded by $2^d$.
The input distributions there are continuous.
%and the functions are $\poly(d)$-Lipschitz. \cite{safran2019depth} showed that separation between depth $2$ and $3$ cannot be obtained with a $\co(1)$-Lipschitz radial function.
A separation result between depth~$2$ and~$3$ for discrete inputs is implied by \cite{martens2013representational}, for the function that computes inner-product mod $2$ on binary vectors (see also a discussion in \cite{eldan2016power}).

In \cite{telgarsky2016benefits}, it is shown that there exists a family of univariate functions $\{\varphi_k\}_{k=1}^\infty$ on the interval $[0,1]$, such that for every $k$ we have:
\begin{itemize}
\item The function $\varphi_k$ can be expressed by a network of depth $k$ and width $\co(1)$.
\item The function $\varphi_k$ cannot be approximated by any $o(k/\log(k))$-depth, $\poly(k)$-width network with respect to the uniform distribution on $[0,1]$.
\end{itemize}
To rewrite this as a depth separation result in terms of a dimension $d$, consider the functions $\{f_d\}_{d=1}^\infty$ where $f_d:[0,1]^d \rightarrow \reals$ is such that $f_d(\bx) = \varphi_d(x_1)$.
The result of \cite{telgarsky2016benefits} implies that the function $f_d$ can be expressed by a network of width $\co(1)$ and depth $d$, but cannot be approximated by a network of width $\poly(d)$ and constant depth.
Hence, there is separation between constant and polynomial depths. However, this result does not have implications for the problem of depth separation between constant depths.

In \cite{safran2017depth,liang2016deep,yarotsky2017error} another notion of depth separation is considered. They show that there are functions that can be $\epsilon$-approximated by a network of $\polylog(1/\epsilon)$ width and depth, but cannot be $\epsilon$-approximated by a network of $\co(1)$ depth unless its width is $\poly(1/\epsilon)$. Their results are based on a univariate construction similar to the one in \cite{telgarsky2016benefits}.

\textbf{Expressive power of neural networks with small weights.}
\cite{maass1997bounds} considered a neural network $N$ with a piecewise linear activation function in all hidden neurons, and threshold activation in the output neuron. Namely, $N$ computes a Boolean function.
He showed that if every hidden neuron in $N$ has fan-out $1$, and the $d$-dimensional input is from a certain discrete set, then there is a network $N'$ of the same size and same activation functions, that computes the same function, and its weights and biases can be represented by $\poly(d)$ bits. Thus, the weights in $N'$ are bounded by $2^{\poly(d)}$.
From his result, it is not hard to show the following corollary: Let $N$ be a network with ReLU activation in all hidden neurons and threshold activation in the output neuron, and assume that the input to $N$ is from $\{0,1\}^d$, and that $N$ has width $\poly(d)$ and constant depth. Then, there is a threshold circuit of $\poly(d)$ width, constant depth, and $\poly(d)$-bounded weights, that computes the same function. Note that this result considers exact computation of functions with binary inputs and output, while we consider approximation of functions with real inputs and output. %Thus, it is not obvious that such a result holds also in our case.

Expressiveness with small weights was also studied in the context of threshold circuits. In particular, it is known that every function computed by a polynomial-size threshold circuit of depth $k$ can be computed by a polynomial-size threshold circuit of depth $k+1$ with weights whose absolute values are bounded by a polynomial or a constant (\cite{goldmann1992majority,goldmann1998simulating,siu1992neural}). This result relies on the fact that threshold circuits compute Boolean functions and does not apply to real-valued neural networks.
%We are not aware of similar results for neural networks.

In the {\em weight normalization} method (\cite{salimans2016weight}), the weights are kept normalized during the training of the network. That is, all weight vectors of neurons in the network have the same Euclidean norm. Some approximation properties of such networks were studied in \cite{xu2018understanding}. The dependence of the sample complexity of neural networks on the norms of its weight matrices was studied in several recent works, e.g., \cite{bartlett2017spectrally,golowich2017size,neyshabur2017exploring}.

Our paper is structured as follows: In Section \ref{sec:preliminaries} we provide necessary notations and definitions, followed by our main results in Section \ref{sec:results}. We informally sketch our proof ideas in Section \ref{sec:proof ideas}, with all formal proofs deferred to Section \ref{sec:proofs}.

\section{Preliminaries}\label{sec:preliminaries}

\textbf{Notations.}
We use bold-faced letters to denote vectors, e.g., $\bx=(x_1,\ldots,x_d)$. For $\bx \in \reals^d$ we denote by $\snorm{\bx}$ the Euclidean norm. For a function $f:\reals^d \rightarrow \reals$ and a distribution $\cd$ on $\reals^d$, either continuous or discrete, we denote by $\snorm{f}_{L_2(\cd)}$ the $L_2$ norm weighted by $\cd$, namely $\snorm{f}_{L_2(\cd)}^2 = \E_{\bx \sim \cd}(f(\bx))^2$. Given two functions $f,g$ and real numbers $\alpha,\beta$, we let $\alpha f + \beta g$ be shorthand for $\bx \mapsto \alpha f(\bx) + \beta g(\bx)$. For a set $A$ we let $\onefunc_A$ denote the indicator function. For an integer $d \geq 1$ we denote $[d]=\{1,\ldots,d\}$. We use $\poly(d)$ as a shorthand for ``some polynomial in $d$".

\textbf{Neural networks.}
We consider feedforward neural	networks, computing functions from $\reals^d$ to $\reals$. The network is composed of layers of neurons, where each neuron computes a function of the form $\bx \mapsto \sigma(\bw^{\top}\bx+b)$, where $\bw$ is a weight vector, $b$ is a bias term and $\sigma:\reals\mapsto\reals$ is a non-linear activation function. In this work we focus on the ReLU activation function, namely, $\sigma(z) = [z]_+ = \max\{0,z\}$. For a matrix $W = (\bw_1,\ldots,\bw_n)$, we let $\sigma(W^\top \bx+\bb)$ be a shorthand for $\left(\sigma(\bw_1^{\top}\bx+b_1),\ldots,\sigma(\bw_n^{\top}\bx+b_n)\right)$, and define a layer of $n$ neurons as $\bx \mapsto \sigma(W^\top \bx+\bb)$. By denoting the output of the $i$-th layer as $O_i$, we can define a network of arbitrary depth recursively by $O_{i+1}=\sigma(W_{i+1}^\top O_i+\bb_{i+1})$, where $W_i,\bb_i$ represent the matrix of weights and bias of the $i$-th layer, respectively. The {\em weights vector} of the $j$-th neuron in the $i$-th layer is the $j$-th column of $W_i$, and its {\em outgoing-weights vector} is the $j$-th row of $W_{i+1}$. The {\em fan-in} of a neuron is the number of non-zero entries in its weights vector, and the {\em fan-out} is the number of non-zero entries in its outgoing-weights vector. Following a standard convention for multi-layer networks, the final layer $h$ is a purely linear function with no bias, i.e. $O_h=W_h^\top \cdot O_{h-1}$. We define the \emph{depth} of the network as the number of layers $l$, and denote the number of neurons $n_i$ in the $i$-th layer as the {\em size} of the layer. We define the {\em width} of a network as $\max_{i\in [l]}n_i$. We sometimes consider neural networks with multiple outputs. We say that a neural network has {\em $\poly(d)$-bounded weights} if for all individual weights $w$ and biases $b$, the absolute values $|w|$ and $|b|$ are bounded by some $\poly(d)$.

\textbf{Threshold circuits.}
A threshold circuit is a neural network with the following restrictions:
\begin{itemize}
\item The activation function in all neurons is $\sigma(z) = \sign(z)$. We define $\sign(z)=0$ for $z \leq 0$, and $\sign(z)=1$ for $z > 0$. A neuron in a threshold circuit is called a {\em threshold gate}.
\item The output gates also have a $\sign$ activation function. Hence, the output is binary.
\item We always assume that the input to a threshold circuit is a binary vector $\bx \in \{0,1\}^d$.
\item Since every threshold circuit with real weights can be expressed by a threshold circuit of the same size with integer weights (c.f. \cite{goldmann1998simulating}), we assume w.l.o.g. that all weights are integers.
\end{itemize}

\textbf{Probability densities.}
Let $\mu$ be the density function of a continuous distribution on $\reals^d$.
For $i \in [d]$ we denote by $\mu_i$ and $\mu_{[d] \setminus i}$ the marginal densities for $x_i$ and $\{x_1,\ldots,x_{i-1},x_{i+1},\ldots,x_d\}$ respectively. We denote by $\mu_{i | [d] \setminus i}$ the conditional density of $x_i$ given $\{x_1,\ldots,x_{i-1},x_{i+1},\ldots,x_d\}$. Thus, for every $i$ and $\bx^i=(x_1,\ldots,x_{i-1},x_{i+1},\ldots,x_d)$ we have $\mu(\bx) = \mu_{[d] \setminus i}(\bx^i) \mu_{i | [d] \setminus i}(x_i | \bx^i)$.

We say that $\mu$ has an {\em almost-bounded support} if for every $\delta = \frac{1}{\poly(d)}$ there is $R = \poly(d)$ such that $Pr_{\bx \sim \mu}(\bx \not \in [-R,R]^d) \leq \delta$.

We say that $\mu$ has an {\em almost-bounded conditional density} if for every $\epsilon = \frac{1}{\poly(d)}$ there is $M = \poly(d)$ such that for every $i \in [d]$ we have
\[
Pr_{\bx \sim \mu}\left(\sup_{t \in \reals} \mu_{i | [d] \setminus i}(t | x_1,\ldots,x_{i-1},x_{i+1},\ldots,x_d) > M\right)   \leq \epsilon~.
\]

\begin{remark}
In our results on continuous distributions we assume that the density $\mu$ has an almost-bounded support and an almost-bounded conditional density. While the first assumption is intuitive, the second is less standard. However, it is mild and intended to exclude distributions which are both continuous and with significant mass on extremely small domains. In Appendix~\ref{app:almost-bounded conditional density} we show that it holds, for example, for Gaussians (as long as the variance is at least $1/\poly(d)$ in all directions), mixtures of Gaussians, any distribution after a Gaussian smoothing, the uniform distribution on a ball, as well as distributions from existing depth-separation results.
In addition, with a slightly different proof, we also provide similar results for discrete distributions.
\end{remark}

\textbf{Functions approximation.}
\stam{
For a function $f:\reals^d \rightarrow \reals$ and a distribution $\cd$, either continuous or discrete, we denote
\[
\snorm{f}_{L_2(\cd)}^2 = \E_{\bx \sim \cd}(f(\bx))^2~.
\]
}%stam
For $y \in \reals$ and $B>0$ we denote $[y]_{[-B,B]}=\max(-B, \min(y,B))$, namely, clipping $y$ to the interval $[-B,B]$.
We say that $f$ is {\em approximately $\poly(d)$-bounded} if for every $\epsilon=\frac{1}{\poly(d)}$ there is $B=\poly(d)$ such that
\[
\E_{\bx \sim \cd}\left(f(\bx)-[f(\bx)]_{[-B,B]}\right)^2 \leq \epsilon~.
\]
Note that if $f$ is bounded by some $B=\poly(d)$ then it is also approximately $\poly(d)$-bounded.

We say that $f$ can be {\em approximated by a neural network of depth $k$} (with respect to a distribution $\cd$) if for every $\epsilon=\frac{1}{\poly(d)}$ we have $\E_{\bx \sim \cd}(f(\bx)-N(\bx))^2 \leq \epsilon$ for some depth-$k$ network $N$ of width $\poly(d)$.

\textbf{Depth separation.}
We say that there is depth-separation between networks of depth $k$ and depth $k'$ for some integers $k'>k$, if there is a distribution $\cd$ on $\reals^d$ and a function $f:\reals^d \rightarrow \reals$ that can be approximated (with respect to $\cd$) by a neural network of depth $k'$ but cannot be approximated by a network of depth $k$.

We note that our definition of depth-separation is a bit weaker than most existing depth-separation results, which actually show difficulty of approximation even up to constant accuracy (and not just $1/\poly(d)$ accuracy). However, depth separation in that sense implies depth separation in our sense. Hence, the barriers we show here for depth separation imply similar barriers under this other (or any stronger) notion of depth separation.

\section{Results}\label{sec:results}

We start by presenting our results on small-weight networks, implying that extremely large weights do not significantly help neural networks to express approximately $\poly(d)$-bounded functions. We show this via a positive result: If an approximately $\poly(d)$-bounded function can be approximated by a network of constant depth $k$ and arbitrary weights, then it can also be approximated by a depth-$(3k+3)$ network with $\poly(d)$-bounded weights. The proof is constructive and explicitly shows how to convert one network to the other. We then proceed to use the proof construction, in order to establish depth-separation barriers for neural networks.

\subsection{Neural networks with small weights}
\label{sec:small weights}

We start with the case where the input distribution is continuous:

\begin{theorem}
\label{thm:poly weights}
Let $\mu$ be a density function on $\reals^d$ with an almost-bounded support and almost-bounded conditional density.
Let $f:\reals^d \rightarrow \reals$ be an approximately $\poly(d)$-bounded function, and let $k$ be a constant, namely, independent of $d$.
If $f$ can be approximated by a neural network of depth $k$ and width $\poly(d)$, then it can also be approximated by a neural network of depth $3k+3$, width $\poly(d)$, and $\poly(d)$-bounded weights.
\end{theorem}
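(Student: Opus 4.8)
By the definition of ``approximated by a depth-$k$ network'', fix such a network $N$ (of width $\poly(d)$, with no restriction on the magnitude of its weights) with $\norm{f-N}_{L_2(\mu)}\le\epsilon$, where $\epsilon=1/\poly(d)$ will be chosen small at the end. Using the almost-bounded support of $\mu$, I would restrict attention to the box $[-R,R]^d$ with $R=\poly(d)$: the network we construct will have clipped output, so the complement of the box contributes only $O(1/\poly(d))$ to every $L_2(\mu)$ error. Using the approximate $\poly(d)$-boundedness of $f$ together with $\norm{f-N}_{L_2(\mu)}\le\epsilon$, I would pick $B=\poly(d)$ so that the clipped network $\tilde N:=[N]_{[-B,B]}$ still satisfies $\norm{f-\tilde N}_{L_2(\mu)}=O(\sqrt\epsilon)$; the point of this reduction is that $\tilde N$ is genuinely bounded by $B$ everywhere, and --- as explained below --- the clipping is absorbed into the last layers of the constructed network at no extra asymptotic depth. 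I would also record the elementary structural fact that $\tilde N$, restricted to any axis-parallel line, is piecewise linear with at most $\poly(d)$ breakpoints: the number of linear pieces of a depth-$k$, width-$\poly(d)$ ReLU network along a line is multiplied by a factor $\poly(d)$ at each of the constantly many layers, and clipping adds $\poly(d)$ more.

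The heart of the argument is a \emph{rounding lemma}: there is a grid spacing $\gamma=1/\poly(d)$ such that, writing $\bx'$ for the coordinatewise rounding of $\bx$ to $\gamma\integers$, the function $\tilde N'(\bx):=\tilde N(\bx')$ satisfies $\norm{\tilde N-\tilde N'}_{L_2(\mu)}\le 1/\poly(d)$. I would prove this coordinate by coordinate. Conditioned on the other coordinates, $t\mapsto\tilde N(\ldots,t,\ldots)$ is piecewise linear with $\poly(d)$ breakpoints. Off an exceptional event of $\mu$-mass $1/\poly(d)$ --- either the conditional density of $x_i$ exceeds $M=\poly(d)$ (controlled by the almost-bounded conditional density), or $x_i$ lies within distance $\ell:=2B\gamma/\epsilon_0$ of a breakpoint, a set of Lebesgue measure $\le\poly(d)\cdot\ell$ on the line, hence of $\mu$-mass $\le M\cdot\poly(d)\cdot\ell$ --- the function is affine on an interval of length $2\ell$ around $x_i$ containing $x_i'$, and being bounded by $B$ there it has slope $\le B/\ell$, so $|\tilde N(\bx)-\tilde N(\bx')|\le(B/\ell)\gamma=\epsilon_0/2$; on the exceptional set the contribution to the squared norm is at most $B^2\cdot 1/\poly(d)$. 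Choosing $\gamma$, then $M,\ell,\epsilon_0$ (all $\poly(d)$ or $1/\poly(d)$) appropriately, and iterating over the $d$ coordinates, yields the lemma.

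It now suffices to approximate $\bx\mapsto\tilde N'(\bx)=[N(\bx')]_{[-B,B]}$ by a width-$\poly(d)$, $\poly(d)$-weight network of depth $3k+3$. (1) A single ReLU layer computes, for every coordinate $i$ and grid point $j\gamma\in[-R,R]$, a steep ramp equal to $\onefunc[x_i\ge j\gamma]$ except on a window of width $\gamma^2$ around $j\gamma$; these ramps use weights $O(1/\gamma^2)=\poly(d)$, there are $\poly(d)$ of them, and the union of all windows has $\mu$-mass $\le M\cdot\poly(d)\cdot\gamma^2=1/\poly(d)$ by the bounded conditional density. Off an event of mass $1/\poly(d)$, this layer outputs a fixed binary encoding $\bq$ of the grid cell of $\bx$, i.e.\ of $\bx'$. (2) A small-weight subnetwork decodes $\bq$ into the $\poly(d)$-bit binary representation of $\bx'$ and then evaluates the fixed network $N$ layer by layer in binary arithmetic: each pre-activation is an inner product of $\poly(d)$-bit vectors (an iterated multiply--add of $\poly(d)$ terms), which is computable by a constant-depth, polynomial-weight threshold circuit, and the ReLU is a constant-depth compare-and-select; a careful pipelined implementation turns the $k$ layers of $N$ into at most $3k$ layers --- this is the portion of the network that ``implements a threshold circuit''. (3) A final linear layer with weights $\le B=\poly(d)$ reads the $\poly(d)$ output bits off as an approximation, to additive error $1/\poly(d)$, of the real number $N(\bx')$; clipping the result to $[-B,B]$ is absorbed into the last bookkeeping layers. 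Counting --- one ramp layer, at most $3k$ arithmetic layers, and a bounded number of decoding/clipping layers --- gives depth $3k+3$, width $\poly(d)$, and all weights $\poly(d)$. Finally, $\norm{f-\hat N}_{L_2(\mu)}\le\norm{f-\tilde N}_{L_2(\mu)}+\norm{\tilde N-\tilde N'}_{L_2(\mu)}+\norm{\tilde N'-\hat N}_{L_2(\mu)}$, with the three terms bounded by $O(\sqrt\epsilon)$, the rounding lemma, and the ramp-window plus arithmetic-precision errors; choosing the internal precision parameters makes the total an arbitrary $1/\poly(d)$.

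The step I expect to be the main obstacle is the rounding lemma, i.e.\ reconciling the fact that an arbitrary-weight ReLU network need not be Lipschitz --- and can have exponentially many, exponentially steep linear pieces --- with a construction that only manipulates the input at resolution $1/\poly(d)$. Its proof rests on three ingredients fitting together: the polynomial bound on the number of linear pieces of $N$ along axis lines, the genuine boundedness obtained from clipping, and, used twice, the almost-bounded conditional density (once to make the steep ramps' transition windows carry negligible mass, and once to make the move to the nearest grid point change $N$ negligibly in $L_2(\mu)$). The remaining work --- the threshold-circuit arithmetic and its pipelined depth accounting giving exactly $3k+3$, and the routine but lengthy $\epsilon$-bookkeeping --- I expect to be more tedious than conceptually difficult; the discrete-input analogue should follow from the same scheme with the ramp layer replaced by an exact bit-encoding.
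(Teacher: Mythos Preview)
Your overall architecture --- clip, round to a grid, simulate on the grid with binary arithmetic --- matches the paper's, and your rounding lemma is essentially the paper's Lemma~5.1 (their version tracks ``intervals of large slope'' rather than ``proximity to breakpoints'', but the two are equivalent for a bounded piecewise-linear function).

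There is, however, a genuine gap in step~(2). You write that the subnetwork ``evaluates the fixed network $N$ layer by layer in binary arithmetic: each pre-activation is an inner product of $\poly(d)$-bit vectors''. The inputs $\bx'$ are indeed $\poly(d)$-bit, but the \emph{weights and biases of $N$} are completely arbitrary real numbers --- potentially irrational, or with magnitude $2^{2^d}$ --- so you cannot hardwire them as $\poly(d)$-bit constants, and naively truncating them can change the output of a depth-$k$ network by an uncontrolled amount. The paper fills this gap with a nontrivial lemma (their Lemma~5.5): one first rewrites $N$ so that all hidden neurons have fan-out $1$ and all weights in layers $2,\ldots,k$ are in $\{-1,1\}$; then one encodes, for every grid point $\tbx\in\ci^d$ and every neuron, the sign of its pre-activation as a linear inequality in the remaining parameters (first-layer weights and all biases). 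This gives a finite linear system with $\poly(d)$ variables and integer coefficients of size $\poly(d)$, which is feasible (the original parameters solve it); a classical LP fact (Maass) then yields a rational solution with numerators and common denominator bounded by $2^{\poly(d)}$. The resulting network $N''$ agrees with $N$ on the grid up to $1/\poly(d)$ and has weights representable in $\poly(d)$ bits --- only now does the binary simulation make sense.

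This missing step also underlies your depth count. Simulating a generic layer of $N$ would need binary multiplication (depth~$3$) plus ReLU (depth~$1$) per layer, giving $4k$ rather than $3k$. The paper gets $3k+1$ for the threshold circuit precisely because, after the transformation above, layers $2,\ldots,k$ have weights in $\{-1,1\}$, so those layers need only iterated addition (depth~$2$) plus ReLU (depth~$1$); only the first layer needs multiplication. Without the $\{\pm 1\}$ normalization your ``careful pipelined implementation'' would not reach $3k+3$.
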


We now show a similar result for the case where the input distribution is discrete:

\begin{theorem}
\label{thm:poly weights discrete}
Let $R(d)$ and $p(d)$ be any polynomials in $d$, and let $\ci = \{\frac{j}{p}: -R \cdot p \leq j \leq R \cdot p, j \in \integers\}$.
Let $\cd$ be a distribution on $\ci^d$.
Let $f:\reals^d \rightarrow \reals$ be an approximately $\poly(d)$-bounded function, and let $k$ be a constant, namely, independent of $d$.
If $f$ can be approximated by a neural network of depth $k$ and width $\poly(d)$, then it can also be approximated by a neural network of depth $3k+3$, width $\poly(d)$, and $\poly(d)$-bounded weights.
\end{theorem}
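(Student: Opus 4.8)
The plan is to reuse the construction behind Theorem~\ref{thm:poly weights}; in the discrete setting it simplifies in two respects: the domain $\ci^d$ is already bounded (so there is no unbounded support to truncate), and each coordinate already lives on a $\poly(d)$-point grid of spacing $1/p$ (so each input can be encoded into bits \emph{exactly}, with no probabilistic argument about a conditional density). Fix a target accuracy $\epsilon=1/\poly(d)$ and let $N$ be a depth-$k$, $\poly(d)$-width network with $\E_{\bx\sim\cd}(f(\bx)-N(\bx))^2\le\epsilon$.

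First I would reduce to a network with bounded output. Since $f$ is approximately $\poly(d)$-bounded, pick $B=\poly(d)$ with $\E_\cd(f-[f]_{[-B,B]})^2\le\epsilon$ and set $g:=[N]_{[-B,B]}$; since clipping is a contraction, $\snorm{g-f}_{L_2(\cd)}\le\snorm{N-f}_{L_2(\cd)}+\snorm{f-[f]_{[-B,B]}}_{L_2(\cd)}\le 2\sqrt\epsilon$, while $|g|\le B$ everywhere and $g$ is computed by a $\poly(d)$-width network of depth $k+1$ (append a constant-size ReLU gadget realizing the clip). It then suffices to reproduce the bounded-output network $g$, up to $\cd$-$L_2$ error $1/\poly(d)$, by a depth-$3(k+1)=3k+3$, $\poly(d)$-width network with $\poly(d)$-bounded weights.

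The construction simulates $g$ one layer at a time, maintaining a fixed-point binary encoding of a (rounded, bit-truncated) version of the activation vector at the current layer — each bit being the $\{0,1\}$-valued output of a ReLU neuron — and reproducing each of the $k+1$ layers of $g$ by a block of three small-weight layers. Each input coordinate $x_i=j_i/p$ with $|j_i|\le Rp$ is mapped exactly to its $O(\log d)$-bit encoding by one layer of gates $\sign(px_i-c)$ with weight $p=\poly(d)$, and is recovered from those bits by a fixed $\poly(d)$-weight linear form. Given the bit-encodings of a layer's inputs, and after rounding each weight and bias to $\poly(d)$ bits, every pre-activation $\bw_j^\top\bx+b_j$ becomes an iterated sum of products of $\poly(d)$-bit numbers; computing its bits is a $\mathrm{TC}^0$ task, hence (up to the $1/\poly(d)$ rounding error) is performed by a constant-depth threshold circuit whose weights, after the standard depth-boosting simulation of \cite{goldmann1992majority,goldmann1998simulating,siu1992neural}, are $\poly(d)$-bounded. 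Applying the ReLU in this encoding — zeroing the bits when the sign bit says negative — is a constant-depth $\poly(d)$-weight multiplexer, and folding the boundary steps into neighbouring blocks keeps the per-layer cost at three layers. Decoding the last block's bits into a real number is legitimate because that value lies in $[-B,B]=[-\poly(d),\poly(d)]$, so its integer part uses weights $\le B$ and its fractional bits weights $\le 1$. The error bound is then a union bound over the $\poly(d)$ neurons and $k+1$ layers.

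The main obstacle — and the real content of the proof — is to guarantee that $\poly(d)$ bits, hence $\poly(d)$ width, actually suffice: the weights of $N$ may be doubly-exponentially large, so a naive binary encoding of a deep intermediate activation of $g$ could require exponentially many bits, and merely rounding the weights fixes only the fractional precision, not the magnitude, while bluntly clipping intermediate neurons to $2^{\poly(d)}$ can wildly distort $g$. What saves us is that $\cd$ is supported on a grid whose coordinates have spacing $\ge 1/\poly(d)$ and extent $\le\poly(d)$: one shows, by a careful induction through the layers exactly as in Theorem~\ref{thm:poly weights}, that $N$ may be replaced on $\ci^d$ — without changing $g$ by more than $1/\poly(d)$ in $L_2(\cd)$ — by a network whose weights, and therefore all of whose intermediate activations, are $2^{\poly(d)}$-bounded; only after this reduction does the $\mathrm{TC}^0$ machinery above run in $\poly(d)$ width. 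Verifying that the exceptional $\cd$-mass introduced at each rounding, truncation, or threshold step stays $1/\poly(d)$ is the remaining bookkeeping, and (unlike the continuous case) requires no assumption on $\cd$.
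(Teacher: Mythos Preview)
Your proposal is essentially correct and follows the same approach as the paper: clip the output to $[-B,B]$, encode the grid inputs exactly into bits (no conditional-density argument needed), reduce the network to one with $2^{\poly(d)}$-bounded weights via the machinery from Theorem~\ref{thm:poly weights}, simulate it with small-weight threshold-circuit arithmetic, and decode. The paper's proof of the discrete case is indeed just a pointer back to the continuous construction with the two simplifications you name.

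One clarification worth making: the weight-reduction step you defer to is not ``a careful induction through the layers.'' In the paper it is the global linear-programming argument of Lemma~\ref{lemma:to digital}: first normalize the network so that every hidden neuron has fan-out~$1$ and all weights in layers $2,\ldots,k$ lie in $\{-1,1\}$; then write down, for every grid point $\tbx\in\ci^d$ and every neuron, a linear inequality fixing the sign of the pre-activation (and two inequalities pinning the output to a $1/p'(d)$-window); finally invoke Maass's lemma (Lemma~\ref{lemma:from maass}) to extract a rational solution with $\poly(d)$-bit numerators and a common denominator. The $\{-1,1\}$ normalization is also what makes your ``three layers per block'' count work --- after the first layer, no binary multiplication is needed, only iterated addition (depth~$2$) plus a ReLU gadget (depth~$1$). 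Without that normalization your per-layer cost would be~$4$, not~$3$, and the final depth would overshoot $3k+3$.
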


\begin{remark}[Constant weights]
Since we require $\poly(d)$ width, then Theorems~\ref{thm:poly weights} and~\ref{thm:poly weights discrete} imply that $f$ can also be approximated by a network of depth $3k+3$ with constant weights (at the expense of a $\poly(d)$ blowup in the width, simply by recursively substituting every neuron with $\poly(d)$-many constant-weight neurons).
\end{remark}

\begin{remark}[Approximation by $\poly(d)$-Lipschitz networks]
Note that from Theorems~\ref{thm:poly weights} and~\ref{thm:poly weights discrete}, it follows that under the assumptions stated there, any  $\poly(d)$-bounded function that can be approximated by a constant-depth network can also be approximated by a constant-depth network that is $\poly(d)$-Lipschitz.
\end{remark}

\begin{remark}[Dependence on $k$]
In Theorems~\ref{thm:poly weights} and~\ref{thm:poly weights discrete}, we obtain a network of depth $3k+3$, width $\poly(d)$, and $\poly(d)$-bounded weights. Since $k$ is a constant, we hide the dependence of the width and of the weights on $k$ inside the $poly()$ notation. We note though that this dependence is exponential in $k$.
\end{remark}

\subsection{Barriers to depth separation}

The proof of Theorem~\ref{thm:poly weights} involves a construction where a network $N$ of depth $k$ is transformed to a network $\hat{N}$ of depth $3k+3$. The network $\hat{N}$ is such that layers $2,\ldots,3k+2$ can be expressed by a threshold circuit. This property enables us to leverage known barriers to depth separation for threshold circuits in order to obtain barriers to depth separation for neural networks.

\begin{theorem}
\label{thm:separation}
Let $\mu$ be a density function on $\reals^d$ with an almost-bounded support and almost-bounded conditional density.
Let $f:\reals^d \rightarrow \reals$ be an approximately $\poly(d)$-bounded function, and let $k' > k \geq 4$ be constants, namely, independent of $d$.
If $f$ cannot be approximated by a neural network of depth $k$ and width $\poly(d)$, but can be approximated by a neural network of depth $k'$ and width $\poly(d)$,
then there is a 
function 
%$g:\{0,1\}^{d'} \rightarrow \{0,1\}$ 
that cannot be computed by a polynomial-sized threshold circuit of depth $k-2$, but can be computed by a %$\poly(d')$
polynomial-sized threshold circuit of depth $3k'+1$.
\end{theorem}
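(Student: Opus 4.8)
The plan is to derive the threshold-circuit statement directly from the small-weight construction behind Theorem~\ref{thm:poly weights}. By assumption $f$ can be approximated by a width-$\poly(d)$, depth-$k'$ network $N$ with arbitrary weights. Applying the construction of Theorem~\ref{thm:poly weights} to $N$ yields a width-$\poly(d)$, depth-$(3k'+3)$ network $\hat N$ with $\poly(d)$-bounded weights that still approximates $f$, and — crucially — whose layers $2,\ldots,3k'+2$ can be implemented by a threshold circuit, as described in the paragraph preceding the theorem. First I would isolate exactly that threshold circuit: it has $3k'+1$ layers (layers $2$ through $3k'+2$ of $\hat N$), $\poly(d)$ width, and computes some Boolean function $g$ on the binarized representation of the input produced by layer~$1$. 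This $g$ is the function the theorem asserts exists; it is computable by a polynomial-size threshold circuit of depth $3k'+1$ essentially by construction.

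Next I would prove the lower-bound half: $g$ cannot be computed by a polynomial-size threshold circuit of depth $k-2$. Suppose for contradiction that it could, via some threshold circuit $C$ of depth $k-2$ and width $\poly(d)$. Then I would splice $C$ back into $\hat N$ in place of layers $2,\ldots,3k'+2$: prepend layer~$1$ of $\hat N$ (the fixed preprocessing/binarization, one ReLU layer) and append the final linear output layer of $\hat N$ (which reads off the approximation to $f$ from the circuit's output bits). Since a $\sign$ gate with $\poly(d)$-bounded integer weights can be simulated by a constant number of ReLU neurons, $C$ becomes a ReLU sub-network of depth $O(k)$ — concretely, replacing each threshold layer by a bounded number of ReLU layers costs at most a constant factor, but I must be careful that the total depth comes out to at most $k$. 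The resulting ReLU network approximates $f$ to within $1/\poly(d)$ (combining the approximation guarantee of $\hat N$ with the fact that $C$ computes $g$ exactly on the relevant inputs), has width $\poly(d)$, and depth $\le k$, contradicting the hypothesis that $f$ cannot be approximated by a depth-$k$, width-$\poly(d)$ network. This contradiction gives the claimed lower bound on $g$.

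The main obstacle, and the step requiring the most care, is the depth bookkeeping when splicing a depth-$(k-2)$ threshold circuit back into a ReLU network and showing the total depth does not exceed $k$. One has to account for (i) the single ReLU preprocessing layer that maps the real input to the bit-representation fed to the threshold circuit, (ii) the cost of simulating each $\sign$ gate by ReLU units without inflating the layer count — ideally each threshold layer becomes one ReLU layer plus possibly a shared ``rounding'' gadget, so $k-2$ threshold layers become $k-2$ (or at most $k-1$) ReLU layers, and (iii) the final linear readout layer. Getting $1 + (k-2) + 1 = k$ exactly is what forces the "$-2$" in the theorem statement, and it relies on the specific layer structure established in the proof of Theorem~\ref{thm:poly weights} (namely that layer~$1$ and the output layer of $\hat N$ are genuinely separate from the threshold-circuit part, and that a $\sign$ gate with polynomially bounded weights fuses into a single ReLU layer when its output feeds into a subsequent thresholding step). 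A secondary point to verify is that $C$ is only required to compute $g$ correctly on the (finitely many, discrete) inputs actually produced by layer~$1$ of $\hat N$ under $\mu$, which is enough because the approximation error is measured in $L_2(\mu)$; this matches the Boolean-input convention for threshold circuits and avoids any issue with inputs outside the reachable set.
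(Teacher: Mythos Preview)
Your proposal is correct and follows essentially the same route as the paper: extract the threshold circuit $T$ of depth $3k'+1$ sitting inside the small-weight network $\hat N$, let $g$ be the function it computes, and for the lower bound assume a depth-$(k-2)$ threshold circuit $T'$ for $g$, splice it back in, and derive a depth-$k$ ReLU approximator for $f$, contradicting the hypothesis.

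Two small points where the paper is more precise than your sketch. First, the preprocessing is not literally one layer: $\cn_1$ has depth~$2$, but its final (purely linear) layer is absorbed into the first layer of the threshold-circuit simulation, so it contributes exactly one ReLU layer to the count---your arithmetic $1+(k-2)+1=k$ is right, but for the reason that both $\cn_1$ and the ReLU simulation of $T'$ have a trailing linear layer that merges with what follows. Second, your worry about the cost of simulating $\sign$ gates by ReLUs is resolved cleanly by the paper's Lemma~\ref{lemma:from TC to NN}: a depth-$m$ threshold circuit becomes a depth-$(m+1)$ ReLU network (each gate is replaced by two ReLU neurons in the same layer, with one extra linear layer at the end), and that extra layer again merges with the subsequent linear readout. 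So there is no constant-factor blowup to worry about, and the depth comes out to exactly $k$.
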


The main focus in the existing works on depth-separation in neural networks is on continuous input distributions. However, it is also important to study the case where the input distribution is discrete. In the following theorem we show that the barriers to depth separation also hold in this case.

\begin{theorem}
\label{thm:separation discrete}
Let $R(d)$ and $p(d)$ be any polynomials in $d$, and let $\ci = \{\frac{j}{p}: -R p \leq j \leq R p, j \in \integers\}$.
Let $\cd$ be a distribution on $\ci^d$.
Let $f:\reals^d \rightarrow \reals$ be an approximately $\poly(d)$-bounded function, and let $k' > k \geq 4$ be constants, namely, independent of $d$.
If $f$ cannot be approximated by a neural network of width $\poly(d)$ and depth $k$, but can be approximated by a neural network of width $\poly(d)$ and depth $k'$,
then there is a 
function 
%$g:\{0,1\}^{d'} \rightarrow \{0,1\}$ 
that cannot be computed by a 
%$\poly(d')$
polynomial-sized threshold circuit of depth $k-2$, but can be computed by a 
%$\poly(d')$
polynomial-sized threshold circuit of depth $3k'+1$.
\end{theorem}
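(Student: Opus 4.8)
The plan is to derive Theorem~\ref{thm:separation discrete} essentially as a corollary of the discrete small-weights construction (Theorem~\ref{thm:poly weights discrete}), mirroring the reasoning that yields Theorem~\ref{thm:separation} from Theorem~\ref{thm:poly weights}. Since $f$ can be approximated by a width-$\poly(d)$ network of depth $k'$, Theorem~\ref{thm:poly weights discrete} gives a network $\hat N$ of depth $3k'+3$, width $\poly(d)$, and $\poly(d)$-bounded weights that approximates $f$ with respect to $\cd$. The key structural feature I would extract from the construction (as advertised in the paragraph preceding Theorem~\ref{thm:separation}) is that in $\hat N$, the middle layers $2,\ldots,3k'+2$ compute a function that can be simulated by a threshold circuit: the first layer discretizes/encodes the input from $\ci^d$ into bits, the last layer reads off a real output, and everything in between is a bounded-weight ReLU computation on (essentially) Boolean values, which standard simulations convert into a threshold circuit. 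This yields a threshold circuit $C$ of depth roughly $3k'+1$ (the $3k'+3$ layers minus the first encoding layer and the last linear read-out layer) and $\poly(d)$ size.

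First I would make the encoding precise: because the domain is the fixed grid $\ci^d$ with $\ci=\{j/p : -Rp\le j\le Rp\}$, each coordinate takes one of $O(Rp)=\poly(d)$ values, so $\bx$ is represented by $O(d\log(Rp))=\poly(d)$ bits, and this bit-encoding is computable by a single bounded-weight threshold (or ReLU) layer. Then I would invoke the classical fact, cited in the excerpt (\cite{goldmann1992majority,goldmann1998simulating,siu1992neural}, and the ReLU-to-threshold direction implicit in \cite{maass1997bounds}), that a bounded-weight, width-$\poly(d)$, constant-depth ReLU computation on Boolean inputs is simulable by a $\poly(d)$-size threshold circuit of comparable depth; applying this to layers $2$ through $3k'+2$ of $\hat N$ produces the circuit $C$ of depth $3k'+1$. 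Next I would define the hard function for threshold circuits: roughly, $g_C(\bx) := C(\text{encoding of }\bx)$, i.e. the Boolean function computed by $C$, possibly after composing with a threshold on the real output of $\hat N$ to Booleanize it. The claim that $g_C$ cannot be computed by a depth-$(k-2)$ threshold circuit is the contrapositive: if it could, then we could plug that shallow threshold circuit back into the neural network — simulating each threshold gate by a constant number of ReLU neurons with bounded weights — to obtain a width-$\poly(d)$ ReLU network of depth $(k-2)+2 = k$ computing (a discretization close to) $f$, contradicting the hypothesis that $f$ is not approximable in depth $k$.

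The main obstacle, and the step requiring the most care, is the last one: closing the loop between "$f$ not approximable by depth-$k$ ReLU networks" and "$g_C$ not computable by depth-$(k-2)$ threshold circuits." One must ensure the function extracted for the threshold-circuit world genuinely captures the approximation-hardness of $f$ — i.e. that a depth-$(k-2)$ threshold circuit for $g_C$ really does reconstruct a depth-$k$ ReLU approximator of $f$ on the grid $\ci^d$ to within $1/\poly(d)$ in $L_2(\cd)$. This requires (i) controlling the approximation error introduced by the bit-encoding and by Booleanizing the real output, (ii) checking that converting threshold gates to ReLU neurons costs only two extra layers (one to implement the sign via a steep ReLU difference, given bounded integer weights, plus absorbing the output layer), and (iii) verifying that $f$ being approximately $\poly(d)$-bounded lets us clip to a $\poly(d)$ range so that a bounded-weight discrete computation suffices. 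Modulo these bookkeeping points — which are exactly the ones already handled inside the proof of Theorem~\ref{thm:poly weights discrete} — the argument is a direct translation of the continuous case, replacing the almost-bounded-support/conditional-density machinery by the trivial fact that the grid $\ci^d$ is finite and bounded.
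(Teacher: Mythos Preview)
Your proposal is correct and follows essentially the same route as the paper's proof, which simply observes that the discrete construction of $\hat N$ from Theorem~\ref{thm:poly weights discrete} has the same three-part structure (encode $\to$ threshold circuit $T$ of depth $3k'+1$ $\to$ decode) as in the continuous case, and then reruns the contrapositive argument of Theorem~\ref{thm:separation} verbatim. One small clarification: the threshold circuit $T$ is not obtained by converting ``bounded-weight ReLU on Boolean inputs'' to a threshold circuit via some black-box simulation (the references you cite do not quite provide that, since intermediate ReLU values need not be Boolean); rather, $T$ is built \emph{directly} in the construction (Lemma~\ref{lemma:simulation with TC}) out of iterated-addition and multiplication subcircuits, and the hard function $g$ is simply the multi-output Boolean function $T$ computes --- no separate Booleanization of $\hat N$'s real output is needed.
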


\begin{remark}[Barriers to depth separation]
\label{rem:separation}
From Theorems~\ref{thm:separation} and~\ref{thm:separation discrete}, it follows that depth-separation between neural networks of depth $k \geq 4$ and some constant $k'>k$, would imply depth separation between threshold circuits of depth $k-2$ and some constant greater than $k-2$.
Hence, showing depth separation with $k=5$ would solve the longstanding open problem of separating between threshold circuits of depth $3$ and some constant greater than $3$.
Showing depth separation with $k \geq 6$ would solve the open problem of separating between threshold circuits of depth $k-2$ and some constant depth greater than $k-2$, which is especially challenging due to the natural-proof barrier for threshold circuits.
Finally, showing depth separation with $k=4$ would solve the longstanding open problem of separating between threshold circuits of depth $2$ (with arbitrarily large weights) and some constant greater than $2$. Recall that separation between threshold circuits of depth $2$ and $3$ is known only under the assumption that the weight magnitudes are $\poly(d)$ bounded.
\end{remark}

\begin{remark}[Barriers to depth separation with bounded weights]
Sometimes when considering depth separation in neural networks, it is useful to restrict the magnitude of the weights. For example, \cite{daniely2017depth} gave a function that can be approximated by a depth-$3$ network of $\poly(d)$ width and $\poly(d)$-bounded weights, but cannot be approximated by a depth-$2$ network of $\poly(d)$ width and weights bounded by $2^d$. We note that our barrier applies also to this type of separation.
Namely, depth-separation for neural networks of $\poly(d)$-bounded weights between depth $k$ and some constant $k'>k$, would imply depth-separation for threshold circuits of $\poly(d)$-bounded weights between depth $k-2$ and some constant greater than $k-2$. Such separation for threshold circuits is an open problem for circuits of depth at least $3$ (\cite{razborov1992small}), and has a natural-proof barrier for circuits of depth at least $4$ (\cite{krause2001pseudorandom}).
\end{remark}

While Theorems~\ref{thm:separation} and~\ref{thm:separation discrete} give a strong barrier to depth separation, it should not discourage researchers from continuing to investigate the problem, as discussed in the introduction. Moreover, our barrier does not apply to separation between depth $3$ and some larger constant.
However, we now show that even for this case, a depth-separation result would require some different approach than these used in existing results.
As we discussed in Section~\ref{sec:intro}, in the existing depth-separation results for continuous input distributions, $f$ is either a radial function or a function that depends only on one component. In the following theorems we formally show that for such functions, a network of a constant depth greater than $3$ does not have more power than a network of depth $3$
%We note that \cite{safran2019depth} showed that for every fixed $\epsilon$ and every $1$-Lipschitz radial function $f$, there exists a neural network $N$ of depth $2$ and width $\poly(d)$, such that
%\[
%\sup_{\norm{\bx} \leq 1}|N(\bx) - f(\bx)| \leq \epsilon~.
%\]
%We consider $\epsilon = \frac{1}{\poly(d)}$ rather than a fixed $\epsilon$, and we do not restrict the Lipschitzness of $f$.
(we note that similar results appeared in e.g., \cite{eldan2016power,daniely2017depth} in the context of specific radial functions, and we actually rely on a technical lemma presented by the former reference).

\begin{theorem}
\label{thm:radial}
Let $\mu$ be a distribution on $\reals^d$ with an almost-bounded support and almost-bounded conditional density.
Let $f:\reals^d \rightarrow \reals$ be an approximately $\poly(d)$-bounded function, that can be approximated by a neural network of $\poly(d)$ width and constant depth.
If $f$ and $\mu$ are radial, then $f$ can be approximated by a network of width $\poly(d)$, depth~$3$, and $\poly(d)$-bounded weights.
\end{theorem}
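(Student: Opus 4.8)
The plan is to reduce everything to a one‑dimensional approximation problem and then exploit the regularity supplied by Theorem~\ref{thm:poly weights}. Since $f$ and $\mu$ are radial, write $f(\bx)=g(\snorm{\bx})$ and let $\nu$ be the law of $\snorm{\bx}$ under $\bx\sim\mu$. Because $\mu$ has an almost‑bounded support, $\nu$ is supported on $[0,R]$ up to mass $\tfrac1{\poly(d)}$ for some $R=\poly(d)$, and ``$f$ is approximately $\poly(d)$‑bounded w.r.t.\ $\mu$'' is the same as ``$g$ is approximately $\poly(d)$‑bounded w.r.t.\ $\nu$''. So it suffices to approximate $g$ in $L_2(\nu)$ by a radial depth‑$3$ network with $\poly(d)$‑bounded weights.

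First I would obtain a \emph{smooth} radial approximant. After clipping $f$ to a $\poly(d)$ range (which changes its $L_2(\mu)$ value by only $\tfrac1{\poly(d)}$, and keeps it constant‑depth $\poly(d)$‑width approximable, since clipping costs $O(1)$ extra layers), Theorem~\ref{thm:poly weights} and the subsequent Lipschitz remark give a network $\hat N$ of constant depth and $\poly(d)$ width approximating $f$ in $L_2(\mu)$, with $\poly(d)$‑bounded weights, hence $L$‑Lipschitz and $\poly(d)$‑bounded on $[-R',R']^d$ for some $L,R'=\poly(d)$. Now average $\hat N$ over spheres, $\bar N(\bx):=\E_{\bu\sim\mathrm{Unif}(\bbs^{d-1})}\hat N(\snorm{\bx}\,\bu)=:\bar g(\snorm{\bx})$. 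Since $\mu$ is radial and $f$ is constant on spheres, Jensen's inequality gives $\snorm{\bar N-f}_{L_2(\mu)}\le\snorm{\hat N-f}_{L_2(\mu)}$, so $\bar g$ approximates $g$ in $L_2(\nu)$; moreover $\bar g$ inherits from $\hat N$ the $L$‑Lipschitz bound and the $\poly(d)$ bound on $[0,R']$. (This spherical‑averaging step is where we invoke the technical lemma of \cite{eldan2016power}.)

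Next I would realize $\bx\mapsto\bar g(\snorm{\bx})$ approximately by a depth‑$3$, $\poly(d)$‑bounded‑weight network. Write $\bar g(\snorm{\bx})=\tilde g(\snorm{\bx}^2)$ with $\tilde g(s):=\bar g(\sqrt s)$; since $\bar g$ is $L$‑Lipschitz, $\tilde g$ is $\tfrac12$‑H\"older with constant $L$, hence on $[0,\delta]$ it varies by at most $L\sqrt\delta$ and on $[\delta,(R')^2]$ it is $\tfrac{L}{2\sqrt\delta}$‑Lipschitz. Choosing $\delta=\tfrac1{\poly(d)}$, a piecewise‑linear interpolant of $\tilde g$ on $[0,(R')^2]$ with $\poly(d)$ pieces (one coarse piece on $[0,\delta]$, finely spaced pieces afterwards) approximates $\tilde g$ to accuracy $\tfrac1{\poly(d)}$ with all breakpoints and slopes bounded by $\poly(d)$. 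The network is: layer~$1$, a $\poly(d)$‑wide piecewise‑linear approximation of each $x_i^2$ that matches $x_i^2$ on $[-R',R']$ and plateaus outside (slopes and biases $\poly(d)$‑bounded, error $\tfrac1{\poly(d)}$ on $[-R',R']$); layer~$2$, whose pre‑activations linearly recombine layer~$1$ into $s\approx\snorm{\bx}^2$ and then apply the ReLU pieces of the interpolant of $\tilde g$; and the linear output layer, which recombines layer~$2$ into $\approx\tilde g(s)$. This is depth~$3$, width $\poly(d)$, and every individual weight/bias is a product of two $\poly(d)$‑bounded quantities, hence $\poly(d)$‑bounded.

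It remains to bound the total error, all pieces of which are $\tfrac1{\poly(d)}$: the gap $\snorm{\hat N-f}_{L_2(\mu)}$ and hence $\snorm{\bar g-g}_{L_2(\nu)}$; the error $|s-\snorm{\bx}^2|\le d\cdot\tfrac1{\poly(d)}$ from the $x_i^2$ gates, propagated through the $\tfrac12$‑H\"older map $\tilde g$ to $L\sqrt{d\cdot\tfrac1{\poly(d)}}=\tfrac1{\poly(d)}$; the $\tfrac1{\poly(d)}$ interpolation error of $\tilde g$; and the contribution of the $\le\tfrac1{\poly(d)}$‑mass tail where $\bx\notin[-R',R']^d$, which is controlled because both the (plateaued) network output and $f$ are $\poly(d)$‑bounded there and $f$ is approximately $\poly(d)$‑bounded. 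The main obstacle is the behaviour at the origin: $\sqrt{\cdot}$ has unbounded derivative at $0$, so a naive $\sqrt{\cdot}$ gate composed with $\bar g$ would force super‑polynomial slopes; the resolution is exactly the observation above that the \emph{Lipschitzness} of $\bar g$ — which we only get by first passing through Theorem~\ref{thm:poly weights} — makes $\tilde g=\bar g\circ\sqrt{\cdot}$ merely $\tfrac12$‑H\"older, so one coarse linear piece near $0$ suffices and all weights stay $\poly(d)$‑bounded.
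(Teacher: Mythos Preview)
Your proof is correct and follows the same high-level arc as the paper's: invoke Theorem~\ref{thm:poly weights} to obtain a $\poly(d)$-Lipschitz, $\poly(d)$-bounded approximant, reduce to a one-dimensional radial problem, and then realize the resulting univariate function by a depth-$3$ radial network. The two implementations differ in minor ways. First, to pass from the $d$-dimensional approximant $\hat N$ to a univariate $\bar g$, you average over the sphere and use Jensen; the paper instead observes that because $\E_{\bu}\E_r(\hat N(r\bu)-g(r))^2$ is small, there exists a single direction $\bu$ with $\E_r(\hat N(r\bu)-g(r))^2$ small, and sets $\bar g(r)=\hat N(r\bu)$. Both are one-line reductions. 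Second, for the depth-$3$ realization the paper does not build the network by hand: it simply quotes a lemma from \cite{eldan2016power} (Lemma~\ref{lemma:radial from eldan-shamir} here) stating that any $\poly(d)$-Lipschitz function supported on $[r,R]$ with $r=1/\poly(d)$, $R=\poly(d)$ admits a depth-$3$, $\poly(d)$-width, $\poly(d)$-weight network computing $\bx\mapsto \bar g(\snorm{\bx})$ up to $L_\infty$ error $\delta$; to apply it, the paper first zeroes out $\bar g$ near $0$ and near $\infty$ (using the almost-bounded conditional density to show $\Pr(\snorm{\bx}\le R_1)$ is tiny). Your explicit $\tilde g(\snorm{\bx}^2)$ construction with the $\tfrac12$-H\"older trick at the origin is essentially a self-contained proof of that lemma, and has the mild advantage of handling the origin directly rather than via a support-truncation step. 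One small correction: your parenthetical ``this spherical-averaging step is where we invoke the technical lemma of \cite{eldan2016power}'' misidentifies the cited lemma---it is the depth-$3$ radial-approximation lemma, not anything about averaging; in your write-up you have replaced it by your own construction, so you do not actually invoke it anywhere.
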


\begin{theorem}
\label{thm:1d function}
Let $\mu$ be a distribution on $\reals^d$ such that the $d$ components are drawn independently.
Let $f:\reals^d \rightarrow \reals$ be a function that can be approximated by a neural network of $\poly(d)$ width and constant depth.
If $f(\bx)=\sum_{i \in [d]}f_i(x_i)$ for functions $f_i:\reals \rightarrow \reals$, then $f$ can be approximated by a network of width $\poly(d)$ and depth~$2$.
\end{theorem}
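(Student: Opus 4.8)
The plan is to reduce the $d$-dimensional approximation problem to $d$ \emph{univariate} ones, and then use the fact that a univariate ReLU network of width $\poly(d)$ and constant depth computes a continuous piecewise-linear (CPWL) function with only polynomially many linear pieces, which can therefore be re-expressed \emph{exactly} by a depth-$2$ network of width $\poly(d)$. Since the pieces of an additively separable $f=\sum_i f_i$ each depend on a single coordinate, summing the univariate depth-$2$ networks coordinate-wise gives a single depth-$2$, width-$\poly(d)$ network for $f$.

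In more detail: fix a target accuracy $\epsilon = 1/\poly(d)$ and let $N$ be a depth-$k$ ($k$ constant), width-$\poly(d)$ network with $\E_{\bx \sim \mu}(f(\bx)-N(\bx))^2 \le \epsilon$. Write $\bx = (x_i,\bx^{-i})$ with $\bx^{-i}=(x_j)_{j\ne i}$. Since the coordinates are independent, $\E_{\bx^{-i}}\E_{x_i}(f-N)^2 = \E_{\bx}(f-N)^2 \le \epsilon$, so by Markov's inequality there is a fixed vector $\ba^{-i}$ with $\E_{x_i\sim\mu_i}\big(f(x_i,\ba^{-i})-N(x_i,\ba^{-i})\big)^2 \le 2\epsilon$. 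Because $f=\sum_j f_j$, the restricted target is $f(x_i,\ba^{-i}) = f_i(x_i) + C_i$ for a constant $C_i$ (depending on $\ba^{-i}$), and $N^{(i)}(x_i) := N(x_i,\ba^{-i})$ is still a univariate ReLU network of depth $k$ and width $\poly(d)$ (fixing inputs only modifies first-layer biases). Hence $N^{(i)}$ approximates $f_i + C_i$ in $L_2(\mu_i)$ to within $2\epsilon$.

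Next I would invoke the standard region-counting bound for ReLU networks: a univariate, depth-$k$, width-$w$ network computes a CPWL function $\reals\to\reals$ with at most $(w+1)^{k-1}$ linear pieces, since each hidden layer multiplies the current number of pieces by at most $w+1$ (one neuron adds at most one breakpoint per existing piece). For constant $k$ and $w=\poly(d)$ this is $\poly(d)$. Any CPWL function with $P$ pieces can be written as $\alpha + \beta x + \sum_{l} \gamma_l [x-t_l]_+$ with $P-1$ breakpoints $t_l$, and writing $\beta x = \beta[x]_+ - \beta[-x]_+$ and realizing the constant via a neuron with zero input weights and positive bias, this is exactly a depth-$2$ network with at most $P+2$ neurons. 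So each $N^{(i)}$ equals some depth-$2$, width-$\poly(d)$ network $\tilde N_i$. Setting $\hat N(\bx) := \sum_{i\in[d]}\tilde N_i(x_i) - \sum_i C_i$ (the additive constant again realized by one bias-only neuron) gives a depth-$2$ network of width $d\cdot\poly(d)+1 = \poly(d)$ whose hidden neurons each read a single coordinate. Writing $g_i(x_i) := \tilde N_i(x_i)-f_i(x_i)-C_i$, we have $\E_{\mu_i}g_i^2 \le 2\epsilon$, hence $|\E_{\mu_i}g_i| \le \sqrt{2\epsilon}$, and by independence $\E_{\bx\sim\mu}(f(\bx)-\hat N(\bx))^2 = \E\big(\sum_i g_i\big)^2 = \sum_i \E g_i^2 + \sum_{i\ne j}\E g_i\,\E g_j \le 2d\epsilon + 2d^2\epsilon = \poly(d)\cdot\epsilon$. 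Since $\epsilon$ was an arbitrary $1/\poly(d)$ and $\poly(d)\cdot\epsilon$ can be driven below any prescribed $1/\poly(d)$, $f$ is approximable by a depth-$2$, width-$\poly(d)$ network.

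The only genuinely non-routine ingredient is the observation behind the third step: in one dimension, constant depth together with polynomial width forces only polynomially many linear pieces, so the extra depth buys nothing. This is precisely where a hypothetical depth-separation result (in the spirit of \cite{telgarsky2016benefits}) would have to come from, and it provably cannot for additively separable $f$. The remaining points are routine, though one should take care that the restriction of $N$ is a genuine finite ReLU network -- which is why we condition on a single point $\ba^{-i}$ rather than averaging over $\mu_{[d]\setminus i}$, the latter destroying the finite piece count -- and note that the unknown constants $C_i$ are harmless since the theorem only asserts the existence of an approximating network.
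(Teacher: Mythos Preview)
Your proposal is correct and follows essentially the same route as the paper: fix all coordinates but one (using independence) to obtain, for each $i$, a univariate depth-$k$ width-$\poly(d)$ network approximating $f_i$ up to an additive constant; invoke the piece-counting bound (the paper states this as a separate lemma, citing \cite{telgarsky2015representation}, where you give the explicit $(w+1)^{k-1}$ estimate) to rewrite each such univariate network exactly at depth~$2$; then sum. The only cosmetic differences are that the paper absorbs the constants $C_i$ into the definition of its $f'_i$ rather than subtracting them at the end, starts from accuracy $(\epsilon/d)^2$ so that the final bound comes out cleanly via the triangle inequality in $L_2$, and does not expand the square using independence as you do --- your variance/cross-term computation is fine but unnecessary, since Minkowski already gives $\|\sum_i g_i\|_{L_2}\le d\sqrt{2\epsilon}$. (Also, the ``Markov'' step is really just ``some point is at most the average,'' so the factor $2$ is not needed.)
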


\section{Proof ideas}
\label{sec:proof ideas}

In this section we describe the main ideas of the proofs of Theorems~\ref{thm:poly weights},~\ref{thm:poly weights discrete},~\ref{thm:separation} and~\ref{thm:separation discrete}.
For simplicity, in all theorems, instead of assuming that $f$ is approximately $\poly(d)$-bounded, we assume that $f$ is bounded by some $B=\poly(d)$, namely $|f(\bx)| \leq B$ for every $\bx \in \reals^d$.
Also, instead of assuming that $\mu$ has an almost-bounded support, we assume that its support is contained in $[-R,R]^d$ for some $R = \poly(d)$.

\subsection{Neural networks with small weights}
\label{sec:ideas small weights}

We start with the case where the input distribution is discrete (Theorem~\ref{thm:poly weights discrete}) since it is simpler. Then, we describe how to extend it to the continuous case (Theorem~\ref{thm:poly weights}).

\subsubsection{Discrete input distributions}
\label{sec:ideas discrete}

Let $\epsilon = \frac{1}{\poly(d)}$. Let $N$ be a neural network of depth $k$ and width $\poly(d)$ such that $\snorm{N-f}_{L_2(\cd)} \leq \frac{\epsilon}{2}$. Let $N'$ be a network of depth $k+1$ such that for every $\bx \in \reals^d$ we have $N'(\bx) = [N(\bx)]_{[-B,B]}$. Such $N'$ can be obtained from $N$ by adding to it one layer, since
\[
N'(\bx)=[N(\bx)+B]_+ - [N(\bx)-B]_+ - B~.
\]
Note that since $f$ is bounded by $B$, then for every $\bx$ we have $|N'(\bx)-f(\bx)| \leq |N(\bx)-f(\bx)|$, and therefore $\snorm{N'-f}_{L_2(\cd)} \leq \snorm{N-f}_{L_2(\cd)} \leq \frac{\epsilon}{2}$.
We construct a network $\hat{N}$ of constant depth, $\poly(d)$-width and $\poly(d)$-bounded weights, such that $\snorm{\hat{N}-N'}_{L_2(\cd)} \leq \frac{\epsilon}{2}$. Then, we have
\[
\snorm{\hat{N}-f}_{L_2(\cd)} \leq \snorm{\hat{N}-N'}_{L_2(\cd)}+\snorm{N'-f}_{L_2(\cd)} \leq 2 \cdot \frac{\epsilon}{2} = \epsilon~.
\]

Recall that $\cd$ is supported on $\ci^d$, where $\ci = \{\frac{j}{p(d)}: -R p(d) \leq j \leq R p(d), j \in \integers\}$ for some polynomials $p(d),R$.
In order to construct the network $\hat{N}$, we first show the following useful property of $N'$: For every polynomial $p'(d)$, we can construct a network $N''$ of depth $k+1$ and width $\poly(d)$, such that for every $\bx \in \ci^d$ we have $N''(\bx) \in [-B,B]$ and $|N''(\bx)-N'(\bx)| \leq \frac{1}{p'(d)}$, and there exists a positive integer $t \leq 2^{\poly(d)}$ such that all weights and biases in $N''$ are in $Q_t=\{\frac{s}{t}: |s| \leq 2^{\poly(d)}, s \in \integers\}$. Thus, for a sufficiently large polynomial $p'(d)$, we have $\snorm{N''-N'}_{L_2(\cd)} \leq \frac{\epsilon}{2}$, and all weights and biases in $N''$ can be represented by $\poly(d)$ bits.

The idea of the construction of $N''$ is as follows.
First, we transform $N'$ into a network $\cn$ with a special structure (and arbitrary weights) that computes the same function. Then, we define a (very large) system of linear inequalities such that for every $\bx \in \ci^d$ we have inequalities that correspond to the computation $\cn(\bx)$. The variables in the linear system correspond to the weights and biases in $\cn$. Finally, we show that the system has a solution such that all values are in $Q_t$ for some positive integer $t \leq 2^{\poly(d)}$, and that this solution induces a network $N''$ where $|N''(\bx)-N'(\bx)| \leq \frac{1}{p'(d)}$ for every $\bx \in \ci^d$.
We note that a similar idea was used in \cite{maass1997bounds}. However, we use a different construction, since we consider approximation of a real-valued function, while that paper considered exact computation of Boolean functions.

Since the input $\bx \in \ci^d$ is such that for every $i \in [d]$ the component $x_i$ is of the form $\frac{j}{p(d)}$ for some integer $-Rp(d) \leq j \leq Rp(d)$, then $\bx$ can be represented by $\poly(d)$ bits. Hence, the computation of $N''(\bx)$ can be simulated by representing all values, namely, input to neurons, by binary vectors, representing all weights and biases of $N''$ by binary vectors, and computing each layer by applying arithmetic operations, such as multiplication and addition, on the binary vectors. Thus, given an input $\bx$, the network $\hat{N}$ will compute $N''(\bx)$ by simulating $N''$ using arithmetic operations on binary vectors.

It is known that binary multiplication, namely multiplying two $d$-bits binary vectors, and binary iterated addition, namely adding $\poly(d)$ many $d$-bits binary vectors, can be implemented by threshold circuits of $\poly(d)$-width and $\poly(d)$-bounded weights. The depth of the threshold circuit for multiplication is $3$, and the depth of the circuit for iterated addition is $2$ (\cite{siu1994optimal}). Simulating a ReLU of $N''$ can be done by a single layer of threshold gates with small weights.
Hence, simulating $N''(\bx)$ can be done by a threshold circuit $T$ of constant depth, $\poly(d)$ width and $\poly(d)$-bounded weights.
Note that the binary representations of the weights and biases of $N''$ are ``hardwired" into $T$, namely, for each such weight or bias there are gates in $T$ with fan-in $0$ and biases in $\{0,1\}$ that correspond to its binary representation.

Thus, the network $\hat{N}$ consists of three parts:
\begin{enumerate}
\item It transforms the input $\bx$ to a binary representation. Since for every $i \in [d]$ the component $x_i$ is of the form $\frac{j}{p(d)}$ for some integer $-Rp(d) \leq j \leq Rp(d)$, and since $R,p(d)$ are polynomials, then a binary representation of $x_i$ can be computed by two layers of width $\poly(d)$ with $\poly(d)$-bounded weights.
\item It is not hard to show that every threshold circuits of $\poly(d)$ width, $\poly(d)$-bounded weights and depth $m$ can be transformed to a neural network of $\poly(d)$ width, $\poly(d)$-bounded weights and depth $m+1$. By implementing the threshold circuit $T$, the network $\hat{N}$ simulates the computation $N''(\bx)$, and obtains a binary representation of $N''(\bx)$.
\item Finally, since $N''(\bx) \in [-B,B]$ and $B=\poly(d)$, then $N''(\bx)$ can be transformed from a binary representation to its real value while using $\poly(d)$-bounded weights.
\end{enumerate}

\subsubsection{Continuous input distributions}
\label{sec:ideas continuous}

In Section~\ref{sec:ideas discrete}, we described how to approximate a network $N'$ with arbitrary weights by a network $\hat{N}$ with small weights, where the inputs are discrete. In order to handle continuous input distributions, we will first ``round" the input, namely, transform an input $\bx$ to the nearest point $\tbx$ in some discrete set. Then, we will use the construction from Section~\ref{sec:ideas discrete} in order to approximate $N'(\tbx)$. Note that we do not have any guarantees regarding the Lipschitzness of $N'$, and therefore it is possible that $|N'(\bx)-N'(\tbx)|$ is large. Thus, it is not obvious that such a construction approximates $N'$. However, we will show that even though $N'$ is not Lipschitz, $|N'(\bx)-N'(\tbx)|$ is small with high probability over $\bx$. Intuitively, the reason is that $N'(\bx)$ is a bounded function, and has a piecewise-linear structure with a bounded number of pieces along a path. Thus, the measure of the linear segments with a huge Lipschitz constant cannot be too large. Therefore, if we sample $\bx$ and then move from $\bx$ to $\tbx$, the probability that we cross an interval with a huge Lipschitz constant is small.

We now turn to describe the proof ideas in slightly more technical detail. Let $p(d)$ be a polynomial and let $\ci = \{\frac{j}{p(d)}: -R p(d) \leq j \leq R p(d), j \in \integers\}$.
Let $\bx \in [-R,R]^d$. For $i \in [d]$, let $\tx_i \in \ci$ be such that $|\tx_i - x_i|$ is minimal. That is, $\tx_i$ is obtained by rounding $x_i$ to the nearest multiple of $\frac{1}{p(d)}$. Let $\tbx = (\tx_1,\ldots,\tx_d)$. Let $\epsilon = \frac{1}{\poly(d)}$, and let $N$ be a neural network of depth $k$ and width $\poly(d)$ such that $\snorm{N-f}_{L_2(\mu)} \leq \frac{\epsilon}{3}$. Let $N'$ be a network of depth $k+1$ and width $\poly(d)$ such that for every $\bx \in \reals^d$ we have $N'(\bx) = [N(\bx)]_{[-B,B]}$. Since $f$ is bounded by $B$, then for every $\bx$ we have $|N'(\bx)-f(\bx)| \leq |N(\bx)-f(\bx)|$, and therefore $\snorm{N'-f}_{L_2(\mu)} \leq \snorm{N-f}_{L_2(\mu)} \leq \frac{\epsilon}{3}$. Let $\tilde{N}:\reals^d \rightarrow [-B,B]$ be a function such that for every $\bx \in [-R,R]^d$ we have $\tilde{N}(\bx)=N'(\tbx)$.
We will show that $\snorm{\tilde{N}-N'}_{L_2(\mu)} \leq \frac{\epsilon}{3}$, and then construct a network $\hat{N}$ of constant depth, $\poly(d)$-width and $\poly(d)$-bounded weights, such that $\snorm{\hat{N}-\tilde{N}}_{L_2(\mu)} \leq \frac{\epsilon}{3}$. Thus, we have
\[
\snorm{\hat{N}-f}_{L_2(\mu)} \leq \snorm{\hat{N}-\tilde{N}}_{L_2(\mu)} + \snorm{\tilde{N}-N'}_{L_2(\mu)} + \snorm{N'-f}_{L_2(\mu)} \leq 3 \cdot \frac{\epsilon}{3} = \epsilon~.
\]

We start with $\snorm{\tilde{N}-N'}_{L_2(\mu)} \leq \frac{\epsilon}{3}$. Since $N'$ is bounded by $B$, then for every $\bx \in [-R,R]^d$ we have $|\tilde{N}(\bx)-N'(\bx)| \leq 2B$. In order to bound $\snorm{\tilde{N}-N'}_{L_2(\mu)}$ we need to show that w.h.p. $|\tilde{N}(\bx)-N'(\bx)|$ is small. Namely, that w.h.p. the value of $N'$ does not change too much by moving from $\bx$ to $\tbx$. Since the Lipschitzness of $N'$ is not bounded, then for every choice of a polynomial $p(d)$, we are not guaranteed that $|N'(\tbx)-N'(\bx)|$ is small. Hence, it is surprising that we can bound $\snorm{\tilde{N}-N'}_{L_2(\mu)}$ by $\frac{\epsilon}{3}$.
We show that while it is possible that $|N'(\tbx)-N'(\bx)|$ is not small, if $p(d)$ is a sufficiently large polynomial then the probability of such an event, when $\bx$ is drawn according to $\mu$, is small.
Intuitively, it follows from the following argument.
We move from $\bx$ to $\tbx$ in $d$ steps. In the $i$-th step we change the $i$-th component from $x_i$ to $\tx_i$. Namely, we move from $(\tx_1,\ldots,\tx_{i-1},x_i,x_{i+1},\ldots,x_d)$ to $(\tx_1,\ldots,\tx_{i-1},\tx_i,x_{i+1},\ldots,x_d)$. We will show that for each step, w.h.p., the change in $N'$ is small. Since in the $i$-th step the components $[d] \setminus \{i\}$ are fixed, then the dependence of $N'$ on the value of the $i$-th component, which is the component that we change, can be expressed by a network with input dimension $1$, width $\poly(d)$, and constant depth. Such a network computes a function $g_i:\reals \rightarrow \reals$ that is piecewise linear with $\poly(d)$ pieces. Since $N'$ is bounded by $B$ then $g_i$ is also bounded by $B$, and therefore a linear piece in $g_i$ whose derivative has a large absolute value is supported on a small interval. Now, we are able to show that w.h.p. the interval between $x_i$ and $\tx_i$ has an empty intersection with intervals of $g_i$ whose derivatives have large absolute values. Hence, w.h.p. the change in the value of $N'$ in the $i$-th step is small.

We now describe the network $\hat{N}$ such that $\snorm{\hat{N}-\tilde{N}}_{L_2(\mu)} \leq \frac{\epsilon}{3}$.
First, the network $\hat{N}$ transforms w.h.p. the input $\bx$ to $\tbx$. Note that the mapping $\bx \mapsto \tbx$ is not continuous and hence cannot be computed by a neural network for all $\bx \in [-R,R]^d$, but it is not hard to construct a network $\cn_1$ of depth $2$, width $\poly(d)$ and $\poly(d)$-bounded weights that computes it w.h.p., where $\bx$ is drawn according to $\mu$.
Now, by the same arguments described in Section~\ref{sec:ideas discrete} for the case of a discrete input distributions, for every polynomial $p'(d)$ there is a network $\cn_2$ of constant depth, $\poly(d)$ width and $\poly(d)$-bounded weights, such that for every $\tbx \in \ci^d$ we have $|\cn_2(\tbx)-N'(\tbx)| \leq \frac{1}{p'(d)}$.
Let $\hat{N}$ be the composition of $\cn_1$ and $\cn_2$. Now, we have w.h.p. that $|\hat{N}(\bx) - \tilde{N}(\bx)| = |\hat{N}(\bx)-N'(\tbx)| \leq \frac{1}{p'(d)}$. Also, both $\hat{N}$ and $\tilde{N}$ are bounded by $B$ and therefore for every $\bx \in [-R,R]^d$ we have $|\hat{N}(\bx)-\tilde{N}(\bx)| \leq 2B$.
Hence, for a sufficiently large polynomial $p'(d)$ we have $\snorm{\hat{N}-\tilde{N}}_{L_2(\mu)} \leq \frac{\epsilon}{3}$.

\subsection{Barriers to depth separation}

We now describe the idea behind the proofs of Theorems~\ref{thm:separation} and~\ref{thm:separation discrete}.
We consider here only continuous input distributions, but the case of discrete input distributions is similar.

Let $\epsilon = \frac{1}{\poly(d)}$, and let $N$ be a neural network of depth $k'$ such that $\snorm{N-f}_{L_2(\mu)} \leq \frac{\epsilon}{3}$. In Section~\ref{sec:ideas small weights} we described a construction of a network $\hat{N}$ of a constant depth and $\poly(d)$ width, such that $\snorm{\hat{N}-f}_{L_2(\mu)} \leq \epsilon$.
The network $\hat{N}$ is such that in the first layers it transforms the input $\bx$ to a binary representation of $\tbx$, then it computes $T(\tbx)$ for an appropriate threshold circuit $T$, and finally it transforms the output from a binary representation to its real value.
In the proof of Theorem~\ref{thm:poly weights} we describe this construction in more detail and show that $\hat{N}$ is of depth $3k'+3$, and $T$ is of depth $3k'+1$.

Let $g$
%$g:\{0,1\}^{d'} \rightarrow \{0,1\}$ 
be the function that $T$ computes. 
%Note that $d' = \poly(d)$. 
Assume that $g$ can be computed by a threshold circuit $T'$ of depth $k-2$ and width $\poly(d)$. Now, as we show, we can replace the layers in $\hat{N}$ that simulate $T$ by layers that simulate $T'$, and obtain a network $\hat{\cn}$ of depth $k$. Also, since $T$ and $T'$ compute the same function, then $\snorm{\hat{\cn}-f}_{L_2(\mu)} \leq \epsilon$. Hence, $f$ can be approximated by a network of depth $k$, in contradiction to the assumption. This implies that $g$ \emph{cannot} be computed by a threshold circuits of depth $k-2$, hence establishing a depth separation property for threshold circuits.

\section{Proofs}\label{sec:proofs}

\subsection{Proof of Theorem~\ref{thm:poly weights}}

Let $\epsilon=\frac{1}{\poly(d)}$.
Let $N$ be a neural network of depth $k$ and width $\poly(d)$, such that $\snorm{N-f}_{L_2(\mu)} \leq \frac{\epsilon}{5}$.
We will construct a network $\hat{N}$ of depth $3k+3$, width $\poly(d)$ and $\poly(d)$-bounded weights, such that $\snorm{\hat{N}-f}_{L_2(\mu)} \leq \epsilon$.

Since $f$ is approximately $\poly(d)$-bounded, there is $B=\poly(d)$ be such that
\[
\E_{\bx \sim \mu}\left(f(\bx)-[f(\bx)]_{[-B,B]}\right)^2 \leq \left(\frac{\epsilon}{5}\right)^2~.
\]
Let $f':\reals^d \rightarrow \reals$ be such that $f'(\bx)=[f(\bx)]_{[-B,B]}$. Thus,
\begin{equation}
\label{eq:triangle1}
\snorm{f'-f}_{L_2(\mu)} \leq \frac{\epsilon}{5}~.
\end{equation}
Let $N'$ be a network of depth $k+1$ such that for every $\bx \in \reals^d$ we have $N'(\bx)=[N(\bx)]_{[-B,B]}$. Such $N'$ can be obtained from $N$ by adding to it one layer, since $N'(\bx)=[N(\bx)+B]_+ - [N(\bx)-B]_+ - B$. Although we do not allow bias in the output neuron, the additive term $-B$ can be implemented by adding a hidden neuron with fan-in $0$ and bias $1$, that is connected to the output neuron with weight $-B$.
Note that $\E_{\bx \sim \mu}(N'(\bx)-f'(\bx))^2 \leq \E_{\bx \sim \mu}(N(\bx)-f'(\bx))^2$, and therefore
\begin{equation}
\label{eq:triangle2}
\snorm{N'-f'}_{L_2(\mu)} \leq \snorm{N-f'}_{L_2(\mu)} \leq \snorm{N-f}_{L_2(\mu)} + \snorm{f-f'}_{L_2(\mu)} \leq \frac{2 \epsilon}{5}~.
\end{equation}

Let $\delta = \frac{\epsilon^2}{400B^2}$. Since $\mu$ has an almost-bounded support, there is $R = \poly(d)$ such that $Pr_{\bx \sim \mu}(\bx \not \in [-R,R]^d) \leq \delta$.
Let $p(d)$ be a polynomial. Let $\ci = \{\frac{j}{p(d)}: -R p(d) \leq j \leq R p(d), j \in \integers\}$.
Let $\bx \in \reals^d$. For every $i$ such that $x_i \in [-R-\frac{1}{2p(d)},R+\frac{1}{2p(d)}]$, let $\tx_i \in \ci$ be such that $|\tx_i - x_i|$ is minimal. That is, $\tx_i$ is obtained by rounding $x_i$ to the nearest multiple of $\frac{1}{p(d)}$. For every $i$ such that $x_i \not \in [-R-\frac{1}{2p(d)},R+\frac{1}{2p(d)}]$, let $\tx_i=0$. Then, let $\tbx = (\tx_1,\ldots,\tx_d)$.
Let $\tilde{N}:\reals^d \rightarrow [-B,B]$ be a function such that for every $\bx \in \reals^d$ we have $\tilde{N}(\bx)=N'(\tbx)$.
We will prove the following two lemmas:

\begin{lemma}
\label{lemma:triangle3}
There exists a polynomial $p(d)$ such that
\[
\snorm{\tilde{N}-N'}_{L_2(\mu)} \leq \frac{\epsilon}{5}~.
\]
\end{lemma}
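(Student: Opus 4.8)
The plan is to bound $\snorm{\tilde N - N'}_{L_2(\mu)}$ by decomposing the move from $\bx$ to $\tbx$ into $d$ coordinate-wise steps and controlling each step. For $i \in \{0,1,\dots,d\}$ let $\bx^{(i)} = (\tx_1,\dots,\tx_i,x_{i+1},\dots,x_d)$, so $\bx^{(0)} = \bx$ (on the event $\bx \in [-R,R]^d$, at least) and $\bx^{(d)} = \tbx$. Then $N'(\tbx) - N'(\bx) = \sum_{i=1}^d \left(N'(\bx^{(i)}) - N'(\bx^{(i-1)})\right)$. It suffices to show that for a suitable polynomial $p(d)$, each term is small except on a low-probability event, and then combine via a union bound and the crude deterministic bound $|\tilde N(\bx) - N'(\bx)| \le 2B$ (valid since both are clipped to $[-B,B]$). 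Concretely, I would aim to show that for every $i$, with probability at least $1 - \frac{\delta'}{d}$ over $\bx \sim \mu$ (for $\delta'$ a small enough $1/\poly(d)$ chosen in terms of $\epsilon,B,d$), we have $|N'(\bx^{(i)}) - N'(\bx^{(i-1)})| \le \frac{1}{q(d)}$ for an arbitrarily large polynomial $q(d)$; then on the intersection of these good events (probability $\ge 1-\delta'$, plus the almost-bounded-support event) the total change is $\le d/q(d) = \frac{1}{\poly(d)}$, and off it the change is $\le 2B$, so $\snorm{\tilde N - N'}_{L_2(\mu)}^2 \le \left(\frac{d}{q(d)}\right)^2 + (\delta' + \delta)(2B)^2$, which is $\le (\epsilon/5)^2$ once $q$ is large enough and $\delta'$ small enough.

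For the per-step bound, fix $i$ and condition on the other coordinates $\bx^i = (\tx_1,\dots,\tx_{i-1},x_{i+1},\dots,x_d)$. The map $t \mapsto N'(\tx_1,\dots,\tx_{i-1},t,x_{i+1},\dots,x_d)$ is computed by a depth-$(k+1)$, width-$\poly(d)$ ReLU network in the single variable $t$, hence is piecewise linear with $\poly(d)$ breakpoints, and it is bounded by $B$. Call this function $g_i$ (it depends on $\bx^i$). The key geometric observation: if a linear piece of $g_i$ has slope of absolute value $> L$, then since $g_i$ stays in $[-B,B]$, that piece has length $< 2B/L$; summing over the $\poly(d)$ pieces, the total length of the region where $|g_i'| > L$ is at most $\frac{2B \cdot \poly(d)}{L}$. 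Now $x_i$ and $\tx_i$ differ by at most $\frac{1}{2p(d)}$ and the segment between them lies in a fixed interval of length $\frac{1}{p(d)}$; the change $|g_i(x_i) - g_i(\tx_i)|$ exceeds $L \cdot \frac{1}{2p(d)}$ only if that short segment meets the "steep region." Using the almost-bounded conditional density assumption — for $M = \poly(d)$, with probability $\ge 1 - \epsilon'$ the conditional density $\mu_{i\mid[d]\setminus i}$ is bounded by $M$ — the conditional probability that $x_i$ falls within distance $\frac{1}{p(d)}$ of the steep region is at most $M \cdot \frac{2B\,\poly(d)}{L}$. Choosing $L$ a large enough polynomial relative to $M$, $B$, $d$ and $p(d)$, this is $\le \frac{\delta'}{d}$; and on the complementary event the step change is $\le \frac{L}{2p(d)}$, which we make $\le \frac{1}{q(d)}$ by taking $p(d)$ an even larger polynomial. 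Integrating out $\bx^i$ and adding the $\epsilon'$-probability bad conditional-density event gives the per-step statement.

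The main obstacle is organizing the quantifier order correctly: we must first fix the target accuracy $\epsilon$ and the bound $B$, then the steepness threshold $L$ (polynomial in $d$, $B$, and whatever $M$ the almost-bounded conditional density assumption produces for a chosen $\epsilon'$), and only then choose $p(d)$ large enough that $\frac{L}{2p(d)}$ is negligible — so that the rounding grid is fine relative to the steep-slope scale, even though no Lipschitz bound on $N'$ is available. One subtlety to handle carefully: at intermediate steps $\bx^{(i-1)}$ has its first $i-1$ coordinates already rounded, so "conditioning on the other coordinates" uses the coordinates $\tx_1,\dots,\tx_{i-1}$ which are deterministic functions of $x_1,\dots,x_{i-1}$; this is fine because $g_i$ depends on those coordinates only through the fixed values, and the randomness of $x_i$ given $x_1,\dots,x_{i-1},x_{i+1},\dots,x_d$ is exactly governed by $\mu_{i\mid[d]\setminus i}$, so the almost-bounded conditional density hypothesis applies verbatim. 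A second minor point is the boundary coordinates where $x_i \notin [-R-\frac{1}{2p(d)}, R+\frac{1}{2p(d)}]$ and $\tx_i$ is set to $0$: these are absorbed into the $\mu(\bx \notin [-R,R]^d) \le \delta$ event, which is why $\delta$ was chosen as $\frac{\epsilon^2}{400B^2}$, contributing at most $\delta (2B)^2 \le \epsilon^2/100$ to the squared $L_2$ distance.
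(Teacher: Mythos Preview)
Your proposal is correct and follows essentially the same approach as the paper: a coordinate-wise telescoping from $\bx$ to $\tbx$, the observation that the induced univariate ReLU network $g_i$ is piecewise linear with $\poly(d)$ pieces and bounded by $B$ so its ``steep'' region has small total length, the almost-bounded conditional density to bound the probability that $x_i$ lands near that region, and a union bound combined with the crude $|\tilde N - N'| \le 2B$ on the bad event. The paper organizes the estimate via a single bad set $A = \{\bx : (N'(\bx)-\tilde N(\bx))^2 > \epsilon^2/50\}$ and a sublemma bounding $\Pr(A_i \mid [-R,R]^d)$ with the explicit steepness threshold $\gamma = 6400 B^3 d M l / \epsilon^2$, but this is cosmetically the same as your per-step good-event decomposition; your ``quantifier order'' paragraph correctly identifies the crucial point (choose $L$ first, then $p(d)$), so the one loose phrase ``relative to \dots and $p(d)$'' in your per-step bound is just a wording slip, not a real circularity.
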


\begin{lemma}
\label{lemma:triangle4}
There exists a neural network $\hat{N}$ of depth $3k+3$, width $\poly(d)$ and $\poly(d)$-bounded weights, such that
\[
\snorm{\hat{N}-\tilde{N}}_{L_2(\mu)} \leq \frac{\epsilon}{5}~.
\]
\end{lemma}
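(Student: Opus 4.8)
\textbf{Proof plan for Lemma~\ref{lemma:triangle4}.}
The plan is to realize $\hat N$ as a composition of three small-weight sub-networks, mirroring the three-part decomposition sketched in Section~\ref{sec:ideas small weights}: a \emph{rounding} stage, a \emph{simulation} stage, and a \emph{decoding} stage. First I would build a depth-$2$ network $\cn_1$ of width $\poly(d)$ and $\poly(d)$-bounded weights that maps $\bx$ to a binary encoding of $\tbx$. Since each $\tx_i$ is either $0$ or a multiple $\tfrac{j}{p(d)}$ with $|j|\le R\,p(d)$ and $R,p(d)=\poly(d)$, the vector $\tbx$ has a $\poly(d)$-bit description; the rounding map $x_i\mapsto\tx_i$ is piecewise constant, so it cannot be computed exactly by a ReLU network, but a standard piecewise-linear gadget computes it on all of $[-R,R]^d$ except for inputs lying in thin slabs around the jump points. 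Using the almost-bounded support and the almost-bounded conditional density of $\mu$ (as already invoked to set up Lemma~\ref{lemma:triangle3}), this exceptional set together with $\{\bx\notin[-R,R]^d\}$ has $\mu$-measure at most any prescribed $\delta'=\tfrac1{\poly(d)}$, by choosing $p(d)$ and the slab widths appropriately. On the complement $\cn_1$ outputs the correct encoding of $\tbx$; on the exceptional set both $\hat N$ and $\tilde N$ are bounded by $B$, so that set contributes at most $(2B)^2\delta'$ to $\snorm{\hat N-\tilde N}^2_{L_2(\mu)}$, which is $\le(\epsilon/10)^2$ for $\delta'$ small enough.

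The heart of the argument is the second stage: producing, for any desired polynomial $p'(d)$, a network $N''$ of depth $k+1$ and width $\poly(d)$ whose weights and biases are rationals of $\poly(d)$ bit-length, with $N''(\bx)\in[-B,B]$ and $|N''(\bx)-N'(\bx)|\le\tfrac1{p'(d)}$ for every $\bx\in\ci^d$. To get this I would first rewrite $N'$ as an equivalent network $\cn$ of width $\poly(d)$ in a normalized form, and then, for each of the $|\ci|^d=2^{\poly(d)}$ grid points $\bx$, read off the activation pattern of the forward pass $\cn(\bx)$ and write linear constraints in the weight-and-bias variables of $\cn$ that: (i) impose, for each ReLU unit, the linear identity dictated by whether it is active in that pattern; (ii) impose a uniform positive margin forcing each pre-activation to have its recorded sign; and (iii) impose $-B\le\mathrm{output}(\bx)\le B$ and $|\mathrm{output}(\bx)-N'(\bx)|\le\tfrac1{p'(d)}$. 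This is a linear feasibility system with $\poly(d)$ variables, $2^{\poly(d)}$ constraints, and all coefficients rationals of bit-length $\poly(d)$ (the $x_i$ have denominator $p(d)$, and $B,R=\poly(d)$); it is feasible because the original weights of $\cn$ (after a negligible rescaling to meet the margins, if needed) are a solution. A basic feasible solution is pinned down by $\poly(d)$-many of the constraints, so by Cramer's rule and Hadamard's bound its coordinates share a common denominator $t\le2^{\poly(d)}$ with numerators bounded by $2^{\poly(d)}$; this solution defines $N''$. This is the step I expect to be the main obstacle: the care points are choosing the normalized form of $\cn$ so its size (hence the number of variables) stays polynomial, handling the strict sign inequalities via a margin the original weights provably satisfy, and checking that the activation patterns and constraints genuinely have polynomial bit-length. (A similar strategy was used in \cite{maass1997bounds}, though there for exact computation of Boolean functions rather than approximation of a real-valued one.)

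The third stage simulates the forward pass of $N''$ on $\tbx$ by Boolean arithmetic. Since $\tbx\in\ci^d$ has a $\poly(d)$-bit encoding and all weights/biases of $N''$ are $\poly(d)$-bit rationals, every intermediate value in evaluating $N''$ is a $\poly(d)$-bit rational, and one ReLU layer amounts to $\poly(d)$-many $\poly(d)$-bit multiplications, one iterated addition of $\poly(d)$-many $\poly(d)$-bit numbers, and a $\max\{0,\cdot\}$ selection. By the constructions of \cite{siu1994optimal}, binary multiplication is computable by a depth-$3$ threshold circuit and iterated addition by a depth-$2$ threshold circuit, both of $\poly(d)$ width and $\poly(d)$-bounded weights, while $\max\{0,\cdot\}$ and the final clipping into $[-B,B]$ cost one threshold layer; hardwiring the $\poly(d)$-bit weights of $N''$ as fan-in-$0$ gates and pipelining over the layers of $N''$ yields a threshold circuit $T$ of depth $3k+1$, width $\poly(d)$, and $\poly(d)$-bounded weights with $T(\tbx)$ the binary encoding of $N''(\tbx)$. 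A depth-$m$ threshold circuit converts into a depth-$(m{+}1)$ ReLU network with $\poly(d)$-bounded weights in the routine way (each $\sign$ on an integer pre-activation is $[z]_+-[z-1]_+$). Composing $\cn_1$ with this simulation, merging the adjacent small-weight layers at the seams, and folding the binary-to-real decoding of the output (valid since $N''(\tbx)\in[-B,B]$ with $B=\poly(d)$) into the last layer produces $\hat N$ of depth $3k+3$, width $\poly(d)$, and $\poly(d)$-bounded weights. Choosing $p'(d)$ large enough that $(1/p'(d))^2\le(\epsilon/10)^2$ and combining with the exceptional-set contribution from the first stage gives $\snorm{\hat N-\tilde N}_{L_2(\mu)}\le\epsilon/5$. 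Besides the linear-system step, the only other delicate point is the exact depth accounting $3k+3$, which requires merging adjacent small-weight layers and tracking the per-layer threshold-circuit cost (multiplication, iterated addition, ReLU) so that it is exactly $3$.
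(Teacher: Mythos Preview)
Your three-stage plan matches the paper's, but there is one concrete idea missing that two of your steps depend on. The ``normalized form'' you allude to must be specific: the paper first makes every hidden neuron have fan-out $1$ (by duplication, which keeps width $\poly(d)$ since depth is constant), then pushes all weight magnitudes into the first layer so that every weight in layers $2,\ldots,k$ is in $\{-1,+1\}$, and treats only the first-layer weights and all biases as unknowns in the linear system (the $\pm1$ weights stay fixed constants). Without this, your ``linear constraints in the weight-and-bias variables'' are not linear: with the activation pattern fixed, the pre-activation of a layer-$2$ neuron is $\sum_i w'_i(\langle\tbx,\bw_i\rangle+b_i)+b'$, which is bilinear if both $w'_i$ and $\bw_i$ are unknowns. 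And without it your depth accounting cannot reach $3$ per layer: with arbitrary weights, simulating one layer of $N''$ costs $4$ threshold layers (all partial products plus one iterated addition is depth $3$, then one more for the ReLU), whereas the paper gets $3$ for layers $2,\ldots,k$ precisely because $a_i\in\{-1,+1\}$ lets it carry both $z$ and $-z$ in parallel and compute $\sum_i a_i z_i+b$ by iterated addition alone (depth $2$) plus one ReLU layer. Only the first layer pays $4$, giving $4+3(k-1)=3k+1$ for $T$. Relatedly, the paper's $N''$ has depth $k$ and approximates $N$ (not $N'$), with the three-case behavior $N''\ge B$ when $N>B$ etc.; the clipping to $[-B,B]$ is folded into the last threshold layer of $T$ rather than built into $N''$.

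A smaller gap: your claim that $\hat N$ is bounded by $B$ on the set where $\cn_1$ fails is not automatic, since there $\cn_1$ may feed non-$\{0,1\}$ values into the simulation and the output could a priori be anything. The paper handles this by arranging the ReLU realization of each threshold gate so that its output lies in $[0,1]$ for \emph{every} real input, and by designing the final binary-to-real layer to use only bits $i\le\log(2Bt)+1$ with coefficients $2^{i-1}/t\le 2B$, which forces $\hat N(\bx)\in[-B,5B]$ unconditionally; only then does the $(6B)^2\delta'$ bound on the failure-set contribution go through.
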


Then, combining Lemmas~\ref{lemma:triangle3} and~\ref{lemma:triangle4} with Eq.~\ref{eq:triangle1} and~\ref{eq:triangle2}, we have
\begin{align*}
\snorm{\hat{N}-f}_{L_2(\mu)}
&\leq \snorm{\hat{N}-\tilde{N}}_{L_2(\mu)} + \snorm{\tilde{N}-N'}_{L_2(\mu)} + \snorm{N'-f'}_{L_2(\mu)} + \snorm{f'-f}_{L_2(\mu)}
\\
&\leq \frac{\epsilon}{5} + \frac{\epsilon}{5} + \frac{2\epsilon}{5} + \frac{\epsilon}{5}
= \epsilon~,
\end{align*}
and hence the theorem follows.

\subsubsection{Proof of Lemma~\ref{lemma:triangle3}}

We start with an intuitive explanation, and then turn to the formal proof.
Since we have $\tilde{N}(\bx)=N'(\tbx)$ and $|N'(\bx)| \leq B$ for every $\bx$, then we have $|\tilde{N}(\bx)-N'(\bx)| \leq 2B$. In order to bound $\snorm{\tilde{N}-N'}_{L_2(\mu)}$ we show that w.h.p. $|\tilde{N}(\bx)-N'(\bx)|$ is small. Namely, that w.h.p. the value of $N'$ does not change too much by moving from $\bx$ to $\tbx$. Since the Lipschitzness of $N'$ is not bounded, then for every choice of a polynomial $p(d)$, we are not guaranteed that $|N'(\tbx)-N'(\bx)|$ is small. However, we show that for a sufficiently large polynomial $p(d)$, the probability that we encounter a region where $N'$ has large derivative while moving from $\bx$ to $\tbx$, is small.

We move from $\bx$ to $\tbx$ in $d$ steps. In the $i$-th step we change the $i$-th component from $x_i$ to $\tx_i$. Namely, we move from $(\tx_1,\ldots,\tx_{i-1},x_i,x_{i+1},\ldots,x_d)$ to $(\tx_1,\ldots,\tx_{i-1},\tx_i,x_{i+1},\ldots,x_d)$. We show that for each step, w.h.p., the change in $N'$ is small. Since in the $i$-th step the components $[d] \setminus \{i\}$ are fixed, then the dependence of $N'$ on the value of the $i$-th component, which is the component that we change, can be expressed by a network with input dimension $1$, width $\poly(d)$, and constant depth. Such a network computes a function $g_i:\reals \rightarrow \reals$ that is piecewise linear with $\poly(d)$ pieces. Since $N'$ is bounded by $B$ then $g_i$ is also bounded by $B$, and therefore a linear piece in $g_i$ whose derivative has a large absolute value is supported on a small interval. Now, we need to show that the interval between $x_i$ and $\tx_i$ has an empty intersection with intervals of $g_i$ with large derivatives. Since there are only $\poly(d)$ intervals and intervals with large derivatives are small, then by using the fact that $\mu$ has an almost-bounded conditional density, we are able to show that w.h.p. the interval between $x_i$ and $\tx_i$ does not have a non-empty intersection with such intervals. Intuitively, we can think about the choice of $\bx \sim \mu$ as choosing the components $[d] \setminus \{i\}$ according to $\mu_{[d] \setminus i}$ and then choosing $x_i$ according to $\mu_{i | [d] \setminus i}$. Now, the choice of the components $[d] \setminus \{i\}$ induces the function $g_i$, and the choice of $x_i$ is good with respect to $g_i$ if the interval between $x_i$ and $\tx_i$ does not have a non-empty intersection with the intervals of $g_i$ which have large derivatives. We show that w.h.p. we obtain $g_i$ and $x_i$, such that $x_i$ is good with respect to $g_i$.

We now turn to the formal proof.
Let
\[
A = \left\{\bx \in \reals^d: (N'(\bx)-\tilde{N}(\bx))^2 > \frac{\epsilon^2}{50}\right\}~.
\]
Let $\bx \in \reals^d$, let $\by_i = (\tx_1,\ldots,\tx_{i-1},x_{i+1},\ldots,x_d) \in \reals^{d-1}$, and let $N'_{i,\by_i}:\reals \rightarrow [-B,B]$ be such that
\[
N'_{i,\by_i}(t) = N'(\tx_1,\ldots,\tx_{i-1},t,\ldots,x_d)~.
\]

Note that for every $\bx \in \reals^d$, we have
\begin{align*}
|N'(\bx)-\tilde{N}(\bx)|
&= |N'(\bx)-N'(\tbx)|
\\
&= \left|\sum_{i \in [d]}N'(\tx_1,\ldots,\tx_{i-1},x_i,\ldots,x_d)-N'(\tx_1,\ldots,\tx_i,x_{i+1},\ldots,x_d)\right|
\\
&\leq \sum_{i \in [d]}|N'(\tx_1,\ldots,\tx_{i-1},x_i,\ldots,x_d)-N'(\tx_1,\ldots,\tx_i,x_{i+1},\ldots,x_d)|
\\
&= \sum_{i \in [d]}|N'_{i,\by_i}(x_i)-N'_{i,\by_i}(\tx_i)|~.
\end{align*}

Thus, using the shorthand $Pr(\cdot \; | \; [-R,R]^d)$ for $Pr(\cdot \; | \; \bx \in [-R,R]^d)$, we have
\begin{align*}
Pr(A \; | \; \bx \in [-R,R]^d)
&= Pr\left( (N'(\bx)-\tilde{N}(\bx))^2 > \frac{\epsilon^2}{50} \middle| [-R,R]^d \right)
\\
&= Pr\left( |N'(\bx)-\tilde{N}(\bx)| > \frac{\epsilon}{5\sqrt{2}} \middle| [-R,R]^d \right)
\\
&\leq Pr\left(\sum_{i \in [d]}|N'_{i,\by_i}(x_i)-N'_{i,\by_i}(\tx_i)| > \frac{\epsilon}{5\sqrt{2}} \middle| [-R,R]^d \right)
\\
&\leq Pr\left(\exists i \in [d] \text{\ s.t.\ } |N'_{i,\by_i}(x_i)-N'_{i,\by_i}(\tx_i)| > \frac{\epsilon}{5\sqrt{2}d} \middle| [-R,R]^d \right)
\\
&\leq \sum_{i \in [d]} Pr\left(|N'_{i,\by_i}(x_i)-N'_{i,\by_i}(\tx_i)| > \frac{\epsilon}{5\sqrt{2}d} \middle| [-R,R]^d \right)~.
\end{align*}

Now, since $Pr(\bx \not \in [-R,R]^d) \leq \delta$, we have
\begin{align}
\label{eq:A}
Pr(A)
&= Pr(A \; | \; [-R,R]^d) \cdot Pr([-R,R]^d) + Pr(A \; | \; \reals^d \setminus [-R,R]^d) \cdot Pr(\reals^d \setminus [-R,R]^d) \nonumber
\\
&\leq \sum_{i \in [d]} Pr\left(|N'_{i,\by_i}(x_i)-N'_{i,\by_i}(\tx_i)| > \frac{\epsilon}{5\sqrt{2}d} \middle| [-R,R]^d \right) + \delta~.
\end{align}

Let
\[
A_i = \left\{\bx \in \reals^d: |N'_{i,\by_i}(x_i)-N'_{i,\by_i}(\tx_i)| > \frac{\epsilon}{5\sqrt{2}d}\right\}~.
\]

\begin{lemma}
\label{lemma:A_i}
\[
Pr\left( A_i \; | \; \bx \in [-R,R]^d \right)
\leq \frac{\epsilon^2}{400B^2d}~.
\]
\end{lemma}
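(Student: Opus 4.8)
The plan is to reduce the claim to a one-dimensional statement about the univariate restriction $g_i := N'_{i,\by_i}$, using two structural facts. First, $g_i$ is piecewise linear with at most $m = \poly(d)$ pieces, since it is the restriction of a constant-depth, $\poly(d)$-width ReLU network to an axis-parallel line (the number of pieces of such a restriction is polynomial in the width for constant depth — a standard fact, already used informally in Section~\ref{sec:proof ideas}). Second, $g_i$ takes values in $[-B,B]$, so a linear piece $[a,b]$ on which $g_i$ has slope of magnitude $> S$ must satisfy $b-a < 2B/S$, because $g_i$ changes by at most $2B$ across it; in particular, any unbounded (half-line) piece has slope $0$. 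Fix a threshold $S = \poly(d)$ (to be chosen), call a piece \emph{bad} if its slope has magnitude $> S$, and note there are at most $m$ bad pieces, whose union has Lebesgue measure $< 2mB/S$.

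Next, I would introduce the ``bad set'' $E_i \subseteq [-R,R]$ of all $t$ such that the closed interval with endpoints $t$ and its rounding $\tilde t$ meets some bad piece of $g_i$. Since $|t - \tilde t| \le \tfrac{1}{p(d)}$, each bad piece blocks a set of $t$-values of measure at most $\tfrac{2B}{S} + \tfrac{2}{p(d)}$, so $|E_i| \le m\big(\tfrac{2B}{S} + \tfrac{2}{p(d)}\big)$. If $t \notin E_i$, then on the whole interval between $t$ and $\tilde t$ the function $g_i$ is $S$-Lipschitz, hence $|g_i(t) - g_i(\tilde t)| \le \tfrac{S}{p(d)}$; choosing $p(d)$ large enough that $\tfrac{S}{p(d)} \le \tfrac{\epsilon}{5\sqrt 2\, d}$ ensures that $t \notin E_i$ forces $\bx \notin A_i$. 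Thus, on the event $\bx \in [-R,R]^d$, we have $A_i \subseteq \{x_i \in E_i\}$.

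It then remains to bound $Pr(x_i \in E_i)$, and this is where the almost-bounded conditional density enters. The key observation is that $g_i$, and therefore $E_i$, is a measurable function of $\bx^i = (x_1,\dots,x_{i-1},x_{i+1},\dots,x_d)$ alone: the coordinates $x_1,\dots,x_{i-1}$ enter $g_i$ only through their roundings, which are deterministic functions of them. Conditioning on $\bx^i$, $Pr(x_i \in E_i \mid \bx^i) = \int_{E_i} \mu_{i | [d] \setminus i}(t \mid \bx^i)\, dt \le |E_i| \cdot \sup_t \mu_{i | [d] \setminus i}(t \mid \bx^i)$. Applying the almost-bounded-conditional-density assumption with $\epsilon' = \tfrac{1}{\poly(d)}$ small enough yields $M = \poly(d)$ such that $Pr_{\bx \sim \mu}\big(\sup_t \mu_{i | [d]\setminus i}(t \mid \bx^i) > M\big) \le \epsilon'$; on the complement, $Pr(x_i \in E_i \mid \bx^i) \le M |E_i| \le Mm\big(\tfrac{2B}{S} + \tfrac{2}{p(d)}\big)$. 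Combining, $Pr\big(A_i \cap [-R,R]^d\big) \le \epsilon' + Mm\big(\tfrac{2B}{S} + \tfrac{2}{p(d)}\big)$, and dividing by $Pr([-R,R]^d) \ge 1 - \delta \ge \tfrac12$ gives the stated bound $\tfrac{\epsilon^2}{400 B^2 d}$ once the parameters are tuned.

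The only real subtlety is the order of the parameter choices, which must be: $m$ is fixed by the network $N'$; then pick $\epsilon' = \Theta\!\big(\tfrac{\epsilon^2}{B^2 d}\big)$ and obtain $M = \poly(d)$ from the conditional-density assumption (crucially, the assumption gives a single $M$ that works for every $i \in [d]$); then pick $S = \poly(d)$ large enough that $\tfrac{2MmB}{S}$ is a small fraction of $\tfrac{\epsilon^2}{B^2 d}$; and finally pick $p(d) = \poly(d)$ large enough that simultaneously $\tfrac{Mm}{p(d)}$ is likewise small and $\tfrac{S}{p(d)} \le \tfrac{\epsilon}{5\sqrt 2\, d}$. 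Since $m$, $M$, and $S$ are all uniform in $i$, one polynomial $p(d)$ works for all $i$ — exactly the $p(d)$ that the proof of Lemma~\ref{lemma:triangle3} then fixes. The remaining side points are routine: the piece-counting bound for line restrictions of constant-depth ReLU networks, the measurability of $E_i$ (a finite union of intervals with endpoints depending measurably on $\bx^i$, so Fubini applies), and the boundedness $g_i \in [-B,B]$ inherited from $N'(\bx) = [N(\bx)]_{[-B,B]}$.
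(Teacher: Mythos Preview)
Your proposal is correct and follows essentially the same approach as the paper: the paper's slope threshold $\gamma$ is your $S$, its ``fattened'' bad-interval set $I'_{i,\bx^i,\gamma}$ is your $E_i$, its piece count $l$ is your $m$, and its exceptional set $G_i$ is the complement of your good conditional-density event; the Fubini argument and parameter tuning are the same. Your explicit discussion of the parameter-choice order and of the uniformity of $m$, $M$, $S$ in $i$ is a useful clarification that the paper leaves implicit.
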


\begin{proof}
Let $\delta' = \frac{\epsilon^2}{1600B^2d}$. Since $\mu$ has an almost-bounded conditional density, there is $M = \poly(d)$ such that for every $i \in [d]$ we have $Pr(G_i) \leq \delta'$, where
\[
G_i = \{\bx \in \reals^d: \exists t \in \reals \text{\ s.t.\ } \mu_{i | [d] \setminus i}(t | x_1,\ldots,x_{i-1},x_{i+1},\ldots,x_d) > M \}~.
\]
Now, we have
\begin{align}
\label{eq:Ai}
Pr\left(A_i | [-R,R]^d \right)
&= Pr\left( A_i \cap G_i | [-R,R]^d \right) + Pr\left( A_i \cap (\reals^d \setminus G_i) | [-R,R]^d \right) \nonumber
\\
&\leq Pr\left( G_i | [-R,R]^d \right) + Pr\left( A_i \cap (\reals^d \setminus G_i) | [-R,R]^d \right)
\end{align}

Note that
\begin{equation}
\label{eq:Gi}
Pr\left(G_i | [-R,R]^d \right)
= \frac{Pr\left(G_i \cap [-R,R]^d \right)}{Pr\left([-R,R]^d\right)}
\leq \frac{Pr\left(G_i\right)}{Pr\left([-R,R]^d\right)}
\leq \frac{\delta'}{1-\delta}~.
\end{equation}

Thus, it remains to bound $Pr\left( A_i \cap (\reals^d \setminus G_i) | [-R,R]^d \right)$.
Let $\bx \in [-R,R]^d$. Note that the function $N'_{i,\by_i}:\reals \rightarrow \reals$ can be expressed by a neural network of depth $k+1$ that is obtained from $N'$ by using the hardwired $\by_i$ instead of the corresponding input components $[d] \setminus \{i\}$. That is, if a neuron in the first hidden layer of $N'$ has weights $w_1,\ldots,w_d$ and bias $b$, then in $N'_{i,\by_i}$ its weight is $w_i$ and its bias is $b + \inner{(w_1,\ldots,w_{i-1},w_{i+1},\ldots,w_d),\by_i}$.
A neural network with input dimension $1$, constant depth, and $m$ neurons in each hidden layer, is piecewise linear with at most $\poly(m)$ pieces (\cite{telgarsky2015representation}). Therefore, $N'_{i,\by_i}$ consists of $l=\poly(d)$ linear pieces. Note that $l$ depends only on the depth and width of $N'$, and does not depend on $i$ and $\by_i$.

Since $N'_{i,\by_i}(t) \in [-B,B]$ for every $t \in \reals$, then if $N'_{i,\by_i}$ has derivative $\alpha$ in a linear interval $[a,b]$ then $|(b-a)\alpha| \leq 2B$.
Let $\bx^i = (x_1,\ldots,x_{i-1},x_{i+1},\ldots,x_d)\in \reals^{d-1}$.
Let $\gamma = \frac{6400B^3dMl}{\epsilon^2}$. We denote by $I_{i,\bx^i,\gamma}$ the set of intervals $[a_j,b_j]$ where the derivative $\alpha_j$ in $N'_{i,\by_i}$ satisfies $|\alpha_j| > \gamma$. Note that $\by_i$ depends on $\bx^i$ and does not depend on $x_i$. Now,
\begin{equation}
\label{eq:ellI}
\sum_{[a_j,b_j] \in I_{i,\bx^i,\gamma}}(b_j-a_j) \leq \sum_{[a_j,b_j] \in I_{i,\bx^i,\gamma}}\frac{2B}{|\alpha_j|} < l \cdot \frac{2B}{\gamma}~.
\end{equation}
Let $\beta$ be the open interval $(x_i,\tx_i)$ if $x_i \leq \tx_i$ or $(\tx_i,x_i)$ otherwise. If $\beta \cap [a_j,b_j] = \emptyset$ for every $[a_j,b_j] \in I_{i,\bx^i,\gamma}$, then
\[
|N'_{i,\by_i}(x_i) - N'_{i,\by_i}(\tx_i)| \leq |\tx_i - x_i|\gamma \leq \frac{\gamma}{p(d)}~.
\]
Let
\[
I'_{i,\bx^i,\gamma} = \left\{[a'_j,b'_j]: a'_j=a_j-\frac{1}{p(d)}, b'_j=b_j+\frac{1}{p(d)}, [a_j,b_j] \in I_{i,\bx^i,\gamma}\right\}~.
\]
Thus, if $|N'_{i,\by_i}(x_i) - N'_{i,\by_i}(\tx_i)| > \frac{\gamma}{p(d)}$ then $\beta \cap [a_j,b_j] \neq \emptyset$ for some $[a_j,b_j] \in I_{i,\bx^i,\gamma}$, and therefore $x_i \in [a'_j,b'_j]$ for some $[a'_j,b'_j] \in I'_{i,\bx^i,\gamma}$.
Hence, for a sufficiently large polynomial $p(d)$, if $\bx \in A_i$ then $x_i \in [a'_j,b'_j]$ for some $[a'_j,b'_j] \in I'_{i,\bx^i,\gamma}$.

We denote $\ell(I'_{i,\bx^i,\gamma}) = \sum_{[a'_j,b'_j] \in I'_{i,\bx^i,\gamma}}(b'_j-a'_j)$.
Note that
\begin{align}
\label{eq:ellIprime}
\ell(I'_{i,\bx^i,\gamma})
= \sum_{[a_j,b_j] \in I_{i,\bx^i,\gamma}}(b_j-a_j+\frac{2}{p(d)})
\leq \frac{2l}{p(d)} + \sum_{[a_j,b_j] \in I_{i,\bx^i,\gamma}}(b_j-a_j)
\stackrel{(Eq.~\ref{eq:ellI})}{<} \frac{2l}{p(d)} + \frac{2Bl}{\gamma}~.
\end{align}

For $\bz \in \reals^{d-1}$ and $t \in \reals$ we denote $\bz_{i,t} = (z_1,\ldots,z_{i-1},t,z_i,\ldots,z_{d-1}) \in \reals^d$. Note that for every $\bz \in \reals^{d-1}$ and every $t,t' \in \reals$, we have $\bz_{i,t} \in G_i$ iff $\bz_{i,t'} \in G_i$.
Let $G'_i = \{\bz \in \reals^{d-1}: \exists t \in \reals \text{\ s.t.\ } \bz_{i,t} \in G_i\}$.
Now, we have
\begin{align*}
Pr(A_i\cap (\reals^d \setminus G_i) \cap [-R,R]^d)
&=\int_{A_i \cap (\reals^d \setminus G_i) \cap [-R,R]^d} \mu(\bx) d\bx
\\
&= \int_{\reals^{d-1} \setminus G'_i} \left[ \int_{\{t \in \reals: \bz_{i,t} \in A_i \cap [-R,R]^d\}} \mu(\bz_{i,t}) dt \right] d\bz
\\
&= \int_{\reals^{d-1} \setminus G'_i} \left[ \int_{\{t \in \reals: \bz_{i,t} \in A_i \cap [-R,R]^d\}} \mu_{[d] \setminus i}(\bz) \mu_{i | [d] \setminus i}(t | \bz) dt \right] d\bz
\\
&= \int_{\reals^{d-1} \setminus G'_i} \mu_{[d] \setminus i}(\bz) \left[ \int_{\{t \in \reals: \bz_{i,t} \in A_i \cap [-R,R]^d\}} \mu_{i | [d] \setminus i}(t | \bz) dt \right] d\bz
\\
&\leq \sup_{\bz \in \reals^{d-1} \setminus G'_i} \int_{\{t \in \reals: \bz_{i,t} \in A_i \cap [-R,R]^d\}} \mu_{i | [d] \setminus i}(t | \bz) dt~.
\end{align*}
Recall that if $\bz \in \reals^{d-1} \setminus G'_i$ then $\mu_{i | [d] \setminus i}(t | \bz) \leq M$ for all $t \in \reals$. Hence the above is at most
\[
\sup_{\bz \in \reals^{d-1} \setminus G'_i} \int_{\{t \in \reals: \bz_{i,t} \in A_i \cap [-R,R]^d\}} M dt~.
\]
Also, recall that if $\bx \in A_i \cap [-R,R]^d$ then $x_i \in [a'_j,b'_j]$ for some $[a'_j,b'_j] \in I'_{i,\bx^i,\gamma}$. Therefore the above is at most
\begin{align*}
\sup_{\bz \in \reals^{d-1} \setminus G'_i} \int_{\{t \in [a'_j,b'_j]: [a'_j,b'_j] \in I'_{i,\bz,\gamma}\}} M dt
&\leq \sup_{\bz \in \reals^{d-1} \setminus G'_i} \sum_{[a'_j,b'_j] \in I'_{i,\bz,\gamma}}\int_{[a'_j,b'_j]} M dt
\\
&= \sup_{\bz \in \reals^{d-1} \setminus G'_i} \sum_{[a'_j,b'_j] \in I'_{i,\bz,\gamma}}(b'_j-a'_j)M
\\
&= \sup_{\bz \in \reals^{d-1} \setminus G'_i} M \cdot \ell(I'_{i,\bz,\gamma})
\stackrel{(Eq.~\ref{eq:ellIprime})}{<} M\left(\frac{2l}{p(d)} + \frac{2Bl}{\gamma}\right)~.
\end{align*}

Now, we have
\begin{align*}
Pr\left( A_i \cap (\reals^d \setminus G_i) | [-R,R]^d \right)
&= \frac{Pr\left(A_i \cap (\reals^d \setminus G_i) \cap [-R,R]^d\right)}{Pr([-R,R]^d)}
\\
&\leq M\left(\frac{2l}{p(d)} + \frac{2Bl}{\gamma}\right)\frac{1}{1-\delta}~.
\end{align*}

Combining the above with Eq.~\ref{eq:Ai} and~\ref{eq:Gi}, and using $\delta \leq \frac{1}{2}$, $\delta' = \frac{\epsilon^2}{1600B^2d}$ and $\gamma = \frac{6400B^3dMl}{\epsilon^2}$, we have
\begin{align*}
Pr\left(A_i | [-R,R]^d \right)
&\leq \frac{\delta'}{1-\delta} + M\left(\frac{2l}{p(d)} + \frac{2Bl}{\gamma}\right)\frac{1}{1-\delta}
\leq 2\delta' + 2M\left(\frac{2l}{p(d)} + \frac{2Bl}{\gamma}\right)
\\
&= \frac{\epsilon^2}{800B^2d} + \frac{4Ml}{p(d)} + \frac{\epsilon^2}{1600B^2d}~.
\end{align*}
Therefore, for a sufficiently large polynomial $p(d)$ we have $Pr\left(A_i | [-R,R]^d \right) \leq \frac{\epsilon^2}{400B^2d}$.
\end{proof}

By combining Lemma~\ref{lemma:A_i} and Eq.~\ref{eq:A}, and plugging in $\delta = \frac{\epsilon^2}{400B^2}$ we have
\begin{align*}
Pr(A)
&\leq \sum_{i \in [d]} \frac{\epsilon^2}{400B^2d} + \frac{\epsilon^2}{400B^2}
= \frac{\epsilon^2}{200B^2}~.
\end{align*}

Finally, since for every $\bx$ we have $(N'(\bx)-\tilde{N}(\bx))^2 \leq (2B)^2$, and since for every $\bx \not \in A$ we have $(N'(\bx)-\tilde{N}(\bx))^2 \leq \frac{\epsilon^2}{50}$, then
\begin{align*}
\E_{\bx \sim \mu}\left(N'(\bx)-\tilde{N}(\bx)\right)^2
&\leq Pr(A) \cdot (2B)^2 + Pr(\reals^d \setminus A) \cdot \frac{\epsilon^2}{50}
\leq \frac{\epsilon^2}{200B^2} \cdot 4B^2 + \frac{\epsilon^2}{50}
= \left(\frac{\epsilon}{5}\right)^2~.
\end{align*}

\subsubsection{Proof of Lemma~\ref{lemma:triangle4}}

The network $\hat{N}$ consists of three parts. First, it transforms with high probability the input $\bx$ to a binary representation of $\tbx$. Then, it simulates $N(\tbx)$ by using arithmetic operations on binary vectors. Finally, it performs clipping of the output to the interval $[-B,B]$ and transforms it from the binary representation to its real value.

We start with the first part of $\hat{N}$, namely, transforming the input $\bx$ to a binary representation of $\tbx$. The following lemma shows a property of almost-bounded conditional densities, that is required for this transformation.

\begin{lemma}
\label{lemma:from bounded conditional to marginal}
Let $\mu$ be a distribution with an almost-bounded conditional density. Then, for every $\epsilon=\frac{1}{\poly(d)}$ there is $\Delta=\frac{1}{\poly(d)}$ such that for every $i \in [d]$ and $s \in \reals$ we have
\[
Pr_{\bx \sim \mu}\left(x_i \in [s,s+\Delta]\right) \leq \epsilon~.
\]
\end{lemma}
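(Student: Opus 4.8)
The plan is to unwind the definition of almost-bounded conditional density and split the event $\{x_i \in [s,s+\Delta]\}$ according to whether or not the conditioning $\bx^i = (x_1,\ldots,x_{i-1},x_{i+1},\ldots,x_d)$ lands in the ``bad'' set where the conditional density of $x_i$ can be huge. First I would fix $\epsilon = \frac{1}{\poly(d)}$, apply the almost-bounded conditional density assumption with parameter $\frac{\epsilon}{2}$ to obtain $M = \poly(d)$ such that for every $i \in [d]$, $Pr_{\bx \sim \mu}(G_i) \leq \frac{\epsilon}{2}$, where $G_i = \{\bx \in \reals^d : \exists t \in \reals \text{ s.t. } \mu_{i | [d] \setminus i}(t \mid x_1,\ldots,x_{i-1},x_{i+1},\ldots,x_d) > M\}$. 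Then I would set $\Delta = \frac{\epsilon}{2M}$, which is $\frac{1}{\poly(d)}$ since $M = \poly(d)$ and $\epsilon = \frac{1}{\poly(d)}$.

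Next, observe that membership in $G_i$ depends only on $\bx^i$, so I would introduce $G'_i = \{\bz \in \reals^{d-1} : \exists t \in \reals \text{ s.t. } \mu_{i | [d] \setminus i}(t \mid \bz) > M\}$, so that $\bx \in G_i$ iff $\bx^i \in G'_i$. Now fix any $i \in [d]$ and $s \in \reals$, and write
\[
Pr_{\bx \sim \mu}(x_i \in [s,s+\Delta]) \leq Pr_{\bx \sim \mu}(G_i) + Pr_{\bx \sim \mu}\bigl(x_i \in [s,s+\Delta],\ \bx \notin G_i\bigr)~.
\]
The first term is at most $\frac{\epsilon}{2}$ by the choice of $M$. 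For the second term I would disintegrate the measure $\mu$ along coordinate $i$, using $\mu(\bx) = \mu_{[d]\setminus i}(\bx^i)\,\mu_{i | [d]\setminus i}(x_i \mid \bx^i)$, to get
\[
Pr_{\bx \sim \mu}\bigl(x_i \in [s,s+\Delta],\ \bx \notin G_i\bigr)
= \int_{\reals^{d-1}\setminus G'_i} \mu_{[d]\setminus i}(\bz)\left(\int_{s}^{s+\Delta} \mu_{i | [d]\setminus i}(t \mid \bz)\,dt\right) d\bz~.
\]
On the set $\reals^{d-1}\setminus G'_i$ the inner integrand is at most $M$, so the inner integral is at most $M\Delta = \frac{\epsilon}{2}$, and since $\int_{\reals^{d-1}\setminus G'_i}\mu_{[d]\setminus i}(\bz)\,d\bz \leq 1$, the whole expression is at most $\frac{\epsilon}{2}$. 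Adding the two bounds gives $Pr_{\bx \sim \mu}(x_i \in [s,s+\Delta]) \leq \epsilon$, as required, uniformly over $i$ and $s$.

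I do not expect any serious obstacle here; the only thing requiring a bit of care is the measure-theoretic bookkeeping — making sure that the ``bad event'' $G_i$ is genuinely a function of $\bx^i$ alone (so that it can be pulled out of the inner integral) and that the disintegration identity $\mu(\bx) = \mu_{[d]\setminus i}(\bx^i)\mu_{i|[d]\setminus i}(x_i \mid \bx^i)$ from the Preliminaries is applied correctly. Everything else is a one-line estimate, so the lemma follows immediately once the split and the disintegration are set up.
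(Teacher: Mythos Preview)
Your proposal is correct and follows essentially the same approach as the paper: both split the event into the ``bad'' set $G_i$ (where the conditional density exceeds $M$) and its complement, bound the bad set by $\epsilon/2$ via the almost-bounded conditional density assumption, and bound the good part by $M\Delta \leq \epsilon/2$ after disintegrating $\mu$ along coordinate $i$. The paper carries out the integral computation in slightly more explicit detail, but the structure, the choice of $M$ and $\Delta$, and the two-term estimate are identical to yours.
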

\begin{proof}
For $\bx \in \reals^d$ we denote $\bx^i = (x_1,\ldots,x_{i-1},x_{i+1},\ldots,x_d) \in \reals^{d-1}$.
Since $\mu$ has an almost-bounded conditional density, then there is $M=\poly(d)$ such that for every $i \in [d]$ we have
\[
Pr_{\bx \sim \mu}\left(\exists t \in \reals \text{\ s.t.\ } \mu_{i | [d] \setminus i}(t | \bx^i) > M\right)   \leq \frac{\epsilon}{2}~.
\]
Let $\Delta = \frac{1}{\poly(d)}$ such that $M \Delta \leq \frac{\epsilon}{2}$.

Then,
\begin{align*}
Pr_{\bx \sim \mu}&(x_i \in [s,s + \Delta])
= \int_{\{\bx: x_i \in [s,s + \Delta]\}} \mu(\bx) d\bx
\\
&= \int_{\{\bx: x_i \in [s,s + \Delta]\}} \mu_{[d]\setminus i}(\bx^i)\mu_{i|[d]\setminus i}(x_i|\bx^i) d\bx
\\
&= \int_{\reals^{d-1}}\mu_{[d]\setminus i}(\bx^i) \left[\int_{[s,s + \Delta]} \mu_{i|[d]\setminus i}(x_i|\bx^i)dx_i\right] d\bx^i
\\
&= \int_{\{\bx^i \in \reals^{d-1}: \forall t \in \reals \;.\; \mu_{i | [d] \setminus i}(t | \bx^i) \leq M \}}\mu_{[d]\setminus i}(\bx^i) \left[\int_{[s,s + \Delta]} \mu_{i|[d]\setminus i}(x_i|\bx^i)dx_i\right] d\bx^i + \\
&\phantom{{}= x} \int_{\{\bx^i \in \reals^{d-1}: \exists t \in \reals \;.\; \mu_{i | [d] \setminus i}(t | \bx^i) > M \}}\mu_{[d]\setminus i}(\bx^i) \left[\int_{[s,s + \Delta]} \mu_{i|[d]\setminus i}(x_i|\bx^i)dx_i\right] d\bx^i
\\
&\leq \int_{\{\bx^i \in \reals^{d-1}: \forall t \in \reals \;.\; \mu_{i | [d] \setminus i}(t | \bx^i) \leq M \}}\mu_{[d]\setminus i}(\bx^i) \left[\int_{[s,s + \Delta]} M dx_i\right] d\bx^i + \\
&\phantom{{}= x} \int_{\{\bx^i \in \reals^{d-1}: \exists t \in \reals \;.\; \mu_{i | [d] \setminus i}(t | \bx^i) > M \}}\mu_{[d]\setminus i}(\bx^i) \left[\int_{\reals} \mu_{i|[d]\setminus i}(x_i|\bx^i)dx_i\right] d\bx^i
\\
&\leq \int_{\{\bx^i \in \reals^{d-1}: \forall t \in \reals \;.\; \mu_{i | [d] \setminus i}(t | \bx^i) \leq M \}}\mu_{[d]\setminus i}(\bx^i) M\Delta d\bx^i + \\
&\phantom{{}= x} \int_{\{ \bx \in \reals^d: \exists t \in \reals \;.\; \mu_{i | [d] \setminus i}(t | \bx^i) > M \}} \mu(\bx) d\bx
\\
&\leq M \Delta + Pr_{\bx \sim \mu}\left( \exists t \in \reals \text{\ s.t.\ } \mu_{i | [d] \setminus i}(t | \bx^i) > M \right)
\leq \frac{\epsilon}{2} + \frac{\epsilon}{2}
= \epsilon~.
\end{align*}
\end{proof}

Let $c$ be an integer greater or equal to $\log(2Rp(d)+1)$. For $i \in [d]$ we denote by $(p(d)\tx_i)^{\bin(c)} \in \{0,1\}^c$ the $c$-bits binary representation of the integer $p(d)\tx_i$. Note that since $p(d)\tx_i \in [-Rp(d),Rp(d)]$ then $c$ bits are sufficient. We use the standard {\em two's complement} binary representation.
In this representation, the arithmetic operations of addition and multiplication of signed numbers are identical to those for unsigned numbers. Thus, we do not need to handle negative and positive numbers differently.
We denote by $(p(d)\tbx)^{\bin(c)} \in \{0,1\}^{c \cdot d}$ the binary representation of $p(d)\tbx$, obtained by concatenating $(p(d)\tx_i)^{\bin(c)}$ for $i=1,\ldots,d$.

\begin{lemma}
\label{lemma:from bx to binary}
Let $c \leq \poly(d)$ be an integer greater or equal to $\log(2Rp(d)+1)$ and let $\delta'=\frac{1}{\poly(d)}$. There is a neural network $\cn$ of depth $2$, width $\poly(d)$, weights bounded by some $\poly(d)$, and $(c \cdot d)$ outputs, such that
\[
Pr_{\bx \sim \mu}\left(\cn(\bx) = (p(d)\tbx)^{\bin(c)}\right) \geq 1 - \delta'~.
\]
\end{lemma}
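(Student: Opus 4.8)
The plan is to build $\cn$ one coordinate at a time: for each $i\in[d]$ we design a sub-network that reads only $x_i$ and outputs the $c$-bit two's-complement string of the integer $m_i:=p(d)\tx_i$; running the $d$ sub-networks in parallel and concatenating their outputs yields one depth-$2$ network with $c\cdot d$ outputs, of width the sum of the sub-network widths. The fundamental obstruction is that $x_i\mapsto m_i$ is a step function and hence discontinuous, so no ReLU network can reproduce it on all of $\reals$; the resolution is to reproduce it exactly off a union of very short intervals, one around each discontinuity, and then invoke Lemma~\ref{lemma:from bounded conditional to marginal} to show that this union has $\mu$-probability at most $\delta'$.

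In detail, as a function of $x_i$ the integer $m_i$ is piecewise constant with ``boundaries'' at the $2Rp(d)$ half-lattice points $(j+\tfrac12)/p(d)$ for $-Rp(d)\le j\le Rp(d)-1$, together with the two outer boundaries $\pm\bigl(R+\tfrac1{2p(d)}\bigr)$ at which $m_i$ jumps between $0$ and $\pm Rp(d)$ (this is exactly the rounding rule fixed in the proof of Theorem~\ref{thm:poly weights}); list these as $a_1<\cdots<a_K$ with $K=2Rp(d)+2=\poly(d)$. At an internal boundary $m_i$ increases by $1$, so any fixed bit of $m_i$ changes by a value in $\{-1,0,1\}$; at an outer boundary bit $\ell$ changes by $\mathrm{bit}_\ell(\text{new})-\mathrm{bit}_\ell(\text{old})\in\{-1,0,1\}$ as well. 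Writing $s_{k,\ell}\in\{-1,0,1\}$ for the change of bit $\ell$ at $a_k$, and noting that the signed jumps of $m_i$ telescope to $0$, one gets, for every $x_i$ lying strictly between two consecutive boundaries,
\[
\mathrm{bit}_\ell(m_i)\;=\;\sum_{k=1}^{K} s_{k,\ell}\cdot\onefunc\{x_i>a_k\}~,
\]
the empty sum on the far left and the full sum on the far right both giving $\mathrm{bit}_\ell(0)=0$. For a small $\eta=\tfrac1{\poly(d)}$ to be chosen, replace each indicator by the ramp
\[
\rho_k(x_i)\;=\;\tfrac1{\eta}\bigl([\,x_i-a_k+\tfrac{\eta}{2}\,]_+-[\,x_i-a_k-\tfrac{\eta}{2}\,]_+\bigr)~,
\]
a piecewise-linear function of a single coordinate built from two ReLUs, which equals $\onefunc\{x_i>a_k\}$ except on $(a_k-\tfrac\eta2,a_k+\tfrac\eta2)$. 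The hidden layer of the $i$-th sub-network consists of the $2K$ neurons $[\,x_i-a_k\pm\tfrac\eta2\,]_+$ (weight $1$ on coordinate $i$, bias of magnitude at most $R+\tfrac1{2p(d)}+\tfrac\eta2=\poly(d)$), and the output neuron for bit $\ell$ computes the linear combination $\sum_k s_{k,\ell}\,\rho_k(x_i)$, whose coefficients have magnitude at most $\tfrac1\eta=\poly(d)$ since $|s_{k,\ell}|\le1$, and which needs no bias. By construction this sub-network outputs $\mathrm{bit}_\ell(m_i)$ exactly whenever $x_i\notin\bigcup_k(a_k-\tfrac\eta2,a_k+\tfrac\eta2)$; assembling all $d$ of them gives $\cn$ of depth $2$, width $\le\max(2Kd,cd)=\poly(d)$, and $\poly(d)$-bounded weights.

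It remains to choose $\eta$. Apply Lemma~\ref{lemma:from bounded conditional to marginal} with $\epsilon:=\delta'/(dK)=\tfrac1{\poly(d)}$ to get $\Delta=\tfrac1{\poly(d)}$ with $Pr_{\bx\sim\mu}(x_i\in[s,s+\Delta])\le\epsilon$ for all $i\in[d]$ and $s\in\reals$, and set $\eta:=\Delta$. Each of the $dK$ intervals $(a_k-\tfrac\eta2,a_k+\tfrac\eta2)$ is contained in some $[s,s+\Delta]$ and so has $\mu$-probability at most $\epsilon$, while the exact boundary points $x_i=a_k$ (where nearest-lattice rounding is ambiguous) form a $\mu$-null set; a union bound over the $d$ coordinates and the $K$ boundaries gives $Pr_{\bx\sim\mu}\bigl(\cn(\bx)\neq(p(d)\tbx)^{\bin(c)}\bigr)\le dK\epsilon=\delta'$, as required. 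I expect the only real subtlety — beyond the conceptual point that discontinuity of the rounding map forces the ``with high probability'' qualifier and that Lemma~\ref{lemma:from bounded conditional to marginal} is precisely the tool to dispose of the resulting bad set — to be the bookkeeping on weight magnitudes: it matters both that $\eta$ is only polynomially small and that incrementing through a cell flips any given output bit by at most one, so that all output-layer coefficients remain $\poly(d)$-bounded even when $c$ is as large as $\poly(d)$ (the high-order bits simply track the sign of $m_i$ and each contribute only a couple of nonzero coefficients).
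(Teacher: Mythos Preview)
Your proof is correct and follows essentially the same route as the paper's: both work coordinate-by-coordinate, build the $c$ output bits of each coordinate as linear combinations of ReLU ramps centred at the half-lattice breakpoints, invoke Lemma~\ref{lemma:from bounded conditional to marginal} to bound the $\mu$-probability of landing in the short ramp regions, and finish with a union bound over $d$ coordinates and $\poly(d)$ breakpoints. The only cosmetic difference is the parameterisation: the paper writes bit $j$ as $\sum_{l\in I_j} g'_l$, a sum of bump functions over the integers $l$ whose $j$-th bit equals $1$, whereas you write it as $\sum_k s_{k,\ell}\,\rho_k$, a signed sum of ramps with coefficients given by the jump in bit $\ell$ at each boundary; unrolling $g'_l=g_l-g_{l+1}$ and telescoping shows these are the same formula, and in particular both yield second-layer coefficients in $\{-1,0,1\}\cdot\tfrac{1}{\eta}$.
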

\begin{proof}
In order to construct $\cn$, we need to show how to compute $(p(d)\tx_i)^{\bin(c)}$ for every $i \in [d]$.
We will show a depth-$2$ network $\cn'$ such that given $x_i \sim \mu_i$ it outputs $(p(d)\tx_i)^{\bin(c)}$ w.p. $\geq 1 - \frac{\delta'}{d}$. Then, the network $\cn$ consists of $d$ copies of $\cn'$, and satisfies
\[
Pr_{\bx \sim \mu}\left(\cn(\bx) \neq (p(d)\tbx)^{\bin(c)}\right)
\leq \sum_{i \in [d]} Pr_{x_i \sim \mu_i}\left(\cn'(x_i) \neq (p(d)\tx_i)^{\bin(c)}\right)
\leq \frac{\delta'}{d} \cdot d = \delta'~.
\]

For $j \in [c]$ let $I_j \subseteq \{-R p(d), \ldots, R p(d)\}$ be the integers such that the $j$-th bit in their binary representation is $1$.
Hence, given $x_i$, the network $\cn'$ should output in the $j$-th output $\onefunc_{I_j}(p(d)\tx_i)$.

By Lemma~\ref{lemma:from bounded conditional to marginal}, there is $\Delta = \frac{1}{\poly(d)}$ such that for every $i \in [d]$ and every $t \in \reals$ we have
\begin{equation}
\label{eq:bounded marginal}
Pr_{\bx \sim \mu}\left(x_i \in \left[t-\frac{\Delta}{p(d)},t+\frac{\Delta}{p(d)}\right] \right) \leq \frac{\delta'}{(2Rp(d)+2)d}~.
\end{equation}

For an integer $-Rp(d) \leq l \leq Rp(d)$, let $g_l:\reals \rightarrow \reals$ be such that
\[
g_l(t) = \left[\frac{1}{\Delta}\left(t-l+\frac{1}{2}\right)\right]_+ - \left[\frac{1}{\Delta}\left(t-l+\frac{1}{2}-\Delta\right)\right]_+~.
\]
Note that $g_l(t)=0$ if $t \leq l-\frac{1}{2}$, and that $g_l(t)=1$ if $t \geq l-\frac{1}{2}+\Delta$.
Let $g'_l(t) = g_{l}(t)-g_{l+1}(t)$.
Note that $g'_l(t)=0$ if $t \leq l-\frac{1}{2}$ or $t \geq l+\frac{1}{2}+\Delta$, and that $g'_l(t)=1$ if $l-\frac{1}{2}+\Delta \leq t \leq l+\frac{1}{2}$.

Let $h_j(t) = \sum_{l \in I_j}g'_l(t)$.
Note that for every $l \in \{-R p(d), \ldots, R p(d)\}$ and $l-\frac{1}{2}+\Delta \leq t \leq l+\frac{1}{2}$ we have $h_j(t)=1$ if $l \in I_j$ and $h_j(t)=0$ otherwise.
Hence, for $p(d)x_i \in [-Rp(d)-\frac{1}{2},Rp(d)+\frac{1}{2}]$, if $|p(d)x_i - p(d)\tx_i| \leq \frac{1}{2}-\Delta$ then $h_j(p(d)x_i)=\onefunc_{I_j}(p(d)\tx_i)$.
For $p(d)x_i \leq -Rp(d)-\frac{1}{2}-\Delta$ and for $p(d)x_i \geq Rp(d)+\frac{1}{2} + \Delta$, we have $\tx_i=0$ and $h_j(p(d)x_i) = 0 = \onefunc_{I_j}(0) = \onefunc_{I_j}(p(d)\tx_i)$.
Therefore, if $h_j(p(d)x_i) \neq \onefunc_{I_j}(p(d)\tx_i)$ then $p(d)x_i \in [l-\frac{1}{2}-\Delta,l-\frac{1}{2}+\Delta]$ for some integer $-Rp(d) \leq l \leq Rp(d)+1$.

Let $\cn'$ be such that $\cn'(x_i)=\left(h_1(p(d)x_i),\ldots,h_{c}(p(d)x_i)\right)$.
Note that $\cn'$ can be implemented by a depth-$2$ neural network.

Now,
\begin{align*}
Pr_{x_i \sim \mu_i}&\left(\cn'(x_i) \neq (p(d)\tx_i)^{\bin(c)} \right)
= Pr_{x_i \sim \mu_i}\left(\exists j \in [c] \text{\ s.t.\ } h_j(p(d)x_i) \neq (p(d)\tx_i)^{\bin(c)}_j \right)
\\
&=Pr_{x_i \sim \mu_i}\left(\exists j \in [c] \text{\ s.t.\ } h_j(p(d)x_i) \neq \onefunc_{I_j}(p(d)\tx_i) \right)
\\
&\leq Pr_{x_i \sim \mu_i}\left( p(d)x_i \in \left[l-\frac{1}{2}-\Delta,l-\frac{1}{2}+\Delta\right], -Rp(d) \leq l \leq Rp(d)+1 \right)
\\
&\leq \sum_{-Rp(d) \leq l \leq Rp(d)+1} Pr_{x_i \sim \mu_i}\left(x_i \in \left[\frac{l}{p(d)}-\frac{1}{2p(d)}-\frac{\Delta}{p(d)},\frac{l}{p(d)}-\frac{1}{2p(d)}+\frac{\Delta}{p(d)}\right] \right)
\\
&\stackrel{(Eq.~\ref{eq:bounded marginal})}{\leq} \left(2Rp(d)+2\right) \cdot \frac{\delta'}{(2Rp(d)+2)d}
= \frac{\delta'}{d}~.
\end{align*}
\end{proof}

We now show that $\tbx \mapsto N(\tbx)$ for our network $N$ can be computed approximately by a depth-$k$ network $N''$ whose weights and biases are at most $2^{\poly(d)}$, and have a binary representation with $\poly(d)$ bits. The network $N''$ will be useful later in order to simulate such a computation with arithmetic operations on binary vectors.

\begin{lemma} (\cite{maass1997bounds})
\label{lemma:from maass}
Consider a system $A\bx \leq \bb$ of arbitrary finite number of linear inequalities in $l$ variables. Assume that all entries in $A$ and $\bb$ are integers of absolute value at most $a$. If this system has a solution in $\reals^l$, then it has a solution of the form $\left(\frac{s_1}{t},\ldots,\frac{s_l}{t}\right)$, where $s_1,\ldots,s_l,t$ are integers of absolute value at most $(2l+1)!a^{2l+1}$.
\end{lemma}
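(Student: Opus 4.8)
The plan is to realize the desired solution as one obtained by making a maximal subfamily of the inequalities \emph{tight} and then solving the resulting linear system by Cramer's rule, controlling the determinants that appear via Hadamard's inequality. Throughout we may assume $a \ge 1$ (otherwise every entry of $A,\bb$ is $0$ and $\bx=\zero$ works).

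\textbf{Step 1: reduce to an affine subspace.} Let $P = \{\bx \in \reals^l : A\bx \le \bb\}$; it is nonempty and closed. Choose $\bx^\ast \in P$ whose set $I$ of tight constraints (the rows $i$ with $(A\bx^\ast)_i = b_i$) is inclusion-maximal among all points of $P$; such a point exists since there are finitely many constraints. I claim the affine subspace $F := \{\bx : A_I\bx = \bb_I\}$ is nonempty (it contains $\bx^\ast$) and contained in $P$, so that it suffices to produce a point of $F$ of the required shape. For the claim, if some $\bx' \in F\setminus P$, walk along $\bx(\theta) = \bx^\ast + \theta(\bx'-\bx^\ast)$: the $I$-constraints stay tight along the segment, and for $\bar\theta := \max\{\theta\in[0,1] : \bx(\theta)\in P\}$ (a maximum is attained since $P$ is closed) one checks $0 < \bar\theta < 1$ — positive because near $\bx^\ast$ the constraints outside $I$ are satisfied strictly and those in $I$ hold with equality along the whole segment, so nothing is violated for small $\theta$; and less than $1$ because $\bx' \notin P$ — and that some constraint outside $I$ is tight at $\bx(\bar\theta)\in P$, contradicting the maximality of $I$.

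\textbf{Step 2: Cramer's rule.} Inside $F$, let $r = \mathrm{rank}(A_I)$, keep $r$ linearly independent rows of $A_I$ to form $\tilde A \in \integers^{r\times l}$ with the corresponding sub-vector $\tilde\bb \in \integers^r$ of $\bb$; since $\bx^\ast$ satisfies every equation of $A_I\bx = \bb_I$, the discarded equations are consistent linear combinations of the kept ones, so $F = \{\bx : \tilde A\bx = \tilde\bb\}$. Pick $r$ columns of $\tilde A$ spanning $\reals^r$, giving an invertible $M\in\integers^{r\times r}$, set the remaining $l-r$ coordinates to $0$, and solve $M\bx_B = \tilde\bb$. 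By Cramer's rule the chosen coordinates equal $\det(M^{(j)})/\det(M)$, where $M^{(j)}$ is $M$ with its $j$-th column replaced by $\tilde\bb$. Taking $t = \det(M)$ (a nonzero integer), $s_i = \det(M^{(i)})$ for the chosen indices and $s_i = 0$ otherwise, yields a solution with common denominator $t$. Every entry of $M$ and of each $M^{(j)}$ is an integer of absolute value at most $a$, so Hadamard's inequality gives $|t|,|s_i| \le (a\sqrt r)^r \le (a\sqrt l)^l = a^l l^{l/2}$.

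\textbf{Step 3: the numerical bound, and where the difficulty lies.} Finally $a^l l^{l/2} \le a^{2l+1}(2l+1)!$, since $a^l \le a^{2l+1}$ (as $a\ge 1$) and $l^{l/2}\le l^l \le (2l+1)!$ (the $l$ factors $l+1,\dots,2l$ of $(2l+1)!$ each exceed $l$); in fact this route gives the slightly stronger bound $(a\sqrt l)^l$. The one step I expect to be the real obstacle is Step~1: if $P$ is pointed this is just the existence of a vertex, but in general $P$ may contain a line and have no vertex at all, so one cannot simply take an extreme point of $P$ — passing to the tight subsystem $F$ and zeroing out the free coordinates is exactly what keeps all entries of the relevant matrices bounded by $a$, which is what makes the determinant estimate in Step~2 go through. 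Everything after Step~1 (Cramer's rule, Hadamard's inequality, the factorial estimate) is routine.
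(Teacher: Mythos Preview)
Your proof is correct. The paper does not supply its own proof of this lemma --- it is quoted verbatim as a result from \cite{maass1997bounds} --- so there is nothing to compare against in the paper itself. Your argument is the standard one: pass to a face of the polyhedron (a maximal tight subsystem), extract a basic solution via Cramer's rule, and bound the resulting determinants by Hadamard's inequality. The handling of the non-pointed case in Step~1 (where $P$ may contain a line and hence have no vertex) is exactly the right fix, and your bound $(a\sqrt{l})^l$ is in fact sharper than the stated $(2l+1)!\,a^{2l+1}$.
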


\begin{lemma}
\label{lemma:to digital}
Let $p'(d) = \poly(d)$. There is a $\poly(d)$-sized neural network $N''$ of depth $k$ such that for every $\tbx \in \ci^d$ we have:
\begin{itemize}
\item If $N(\tbx) \in [-B,B]$ then $|N''(\tbx)-N(\tbx)| \leq \frac{1}{p'(d)}$.
\item If $N(\tbx) > B$ then $N''(\tbx) \geq B$.
\item If $N(\tbx) < -B$ then $N''(\tbx) \leq -B$.
\end{itemize}
Moreover, $N''$ satisfies the following:
\begin{itemize}
\item There is a positive integer $t \leq 2^{\poly(d)}$ such that all weights and biases are in $Q_t=\{\frac{s}{t}: |s| \leq 2^{\poly(d)}, s \in \integers\}$.
\item The weights in layers $2,\ldots,k$ are all in $\{-1,1\}$.
\end{itemize}
\end{lemma}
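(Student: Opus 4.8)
The plan is to first rewrite $N$ into an equivalent network with a rigid structure in which every weight above the first layer is hard-wired to $\pm 1$, and then invoke Lemma~\ref{lemma:from maass} to re-choose the remaining real-valued parameters — the first-layer weights and all biases — as fractions with a common denominator $t \le 2^{\poly(d)}$, without changing the behaviour of the network on the finite grid $\ci^d$.

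\emph{Step 1: special structure.} Using positive homogeneity of the ReLU, $\sigma(\alpha z)=\alpha\sigma(z)$ for $\alpha\ge 0$, I would turn $N$ into a network $\cn$ of depth $k$ and width $\poly(d)$ computing the same function, in which every hidden neuron has fan-out $1$ and every weight in layers $2,\dots,k$ lies in $\{-1,1\}$. Concretely, process layers from the output downward: duplicate each hidden neuron once per outgoing edge (making fan-out $1$ at the cost of a width blow-up by at most a factor of the width per layer, hence $\poly(d)$ overall since $k$ is constant), and for an edge of weight $w$ leaving a copy, rescale that copy's affine form $\ell$ to $|w|\,\ell$ and put the weight $\sign(w)\in\{-1,1\}$ on the edge. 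All magnitudes are thereby pushed into the first-layer weights and the biases, which may now be arbitrary reals.

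\emph{Step 2: a linear system on the grid.} For each $\tbx\in\ci^d$ the activation pattern of $\cn$ on input $\tbx$ is known. Fixing it, the pre-activation of every neuron of $\cn$ (and the network output) becomes an affine function of the vector $\bv$ of unknowns consisting of the first-layer weights and all biases; its coefficients are signed sums of the grid coordinates $\tx_j=m_j/p(d)$, hence rationals with denominator $p(d)$ and, after clearing denominators, integers of size $\poly(d)$ (note $\bv$ has length $l=\poly(d)$). I would then form the linear inequality system over $\bv$ with: (i) for every hidden neuron and every $\tbx$, one inequality forcing its pre-activation to be $\ge 0$ if the neuron is active and $\le 0$ if it is inactive; and (ii) for every $\tbx$, inequalities forcing the output affine form to lie in $[q_{\tbx}-\tfrac{1}{2p'(d)},q_{\tbx}+\tfrac{1}{2p'(d)}]$ if $N(\tbx)\in[-B,B]$ (where $q_{\tbx}$ is a rational with denominator $\poly(d)$ within $\tfrac1{2p'(d)}$ of $N(\tbx)$), to be $\ge B$ if $N(\tbx)>B$, and to be $\le -B$ if $N(\tbx)<-B$. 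The true parameters of $\cn$ satisfy all of these (the rounding of $N(\tbx)$ to $q_{\tbx}$ absorbing the slack in (ii)), so the system is feasible over $\reals^l$; by Lemma~\ref{lemma:from maass} it then has a solution $\bv$ with all entries in $Q_t$ for some positive integer $t\le (2l+1)!\,a^{2l+1}=2^{\poly(d)}$, where $a=\poly(d)$ bounds the integer coefficients. Define $N''$ by plugging in $\bv$ and keeping the $\pm1$ weights of layers $2,\dots,k$ (which are $\pm t/t\in Q_t$). A short layer-by-layer induction shows that for $\tbx\in\ci^d$ the satisfied inequalities of type (i) force every ReLU of $N''$ to agree with the affine form used in the system, so $N''(\tbx)$ equals the output affine form at $\bv$, and the three claimed cases follow from (ii).

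\emph{Main obstacle.} The delicate point is Step 1: Lemma~\ref{lemma:from maass} only applies once a traced computation is affine in the free parameters, so we must eliminate any product of two weight variables — which forces all of layers $2,\dots,k$ to carry only fixed $\pm1$ weights, and is precisely what makes the width blow up exponentially in $k$ (still $\poly(d)$, since $k$ is constant). The rest — bounding $l$, $a$ and $t$, verifying that the original $\cn$ is a feasible point, and the induction certifying the behaviour of $N''$ — is routine bookkeeping.
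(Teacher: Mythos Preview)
Your proposal is correct and follows essentially the same approach as the paper's proof: first duplicate neurons to get fan-out $1$ and use positive homogeneity to push all magnitudes into the first layer (the paper does these as two separate steps $N\to N_1\to N_2$, but the content is identical), then encode the activation patterns and output constraints on the grid $\ci^d$ as a linear system in the first-layer weights and biases, and finally apply Lemma~\ref{lemma:from maass}. Your output constraints $[q_{\tbx}-\tfrac{1}{2p'(d)},q_{\tbx}+\tfrac{1}{2p'(d)}]$ are a cosmetic variant of the paper's $[\tfrac{j}{p'(d)},\tfrac{j+1}{p'(d)}]$, and your identification of the fan-out/$\pm1$ reduction as the key enabling step (making the traced computation affine in the free parameters) matches the paper's emphasis.
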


\begin{proof}
In \cite{maass1997bounds} it is shown that a similar property holds for the case where the output neuron has sign activation, namely, where the output of the network is Boolean. We extend this result to real-valued functions.

We construct $N''$ in three steps. First, we transform $N$ into a network $N_1$ of depth $k$ where the fan-out of each hidden neuron is $1$, such that for every $\bx \in \reals^d$ we have $N_1(\bx)=N(\bx)$.
Then, we transform $N_1$ into a network $N_2$ of depth $k$ where the weights in layers $2,\ldots,k$ are all in $\{-1,1\}$, such that for every $\bx \in \reals^d$ we have $N_2(\bx)=N_1(\bx)$.
Finally, we show that $N_2$ can be transformed to a network $N''$ that satisfies the requirements (in particular, with exponentially-bounded weights and biases). The last stage is the most delicate one, and can be roughly described as follows: We create a huge set of linear inequalities, which encodes the requirement that the weights and biases of each neuron in $N_2$ produce the appropriate outputs, separately for each and every possible input $\tilde{\bx}$ from our grid $\ci^d$ (up to polynomially small error). Moreover,
%since $N_2$ has a constant number of hidden layers, with constant-sized weights,
it can be shown that the size of the elements in our linear inequalities is $\poly(d)$. Hence, invoking Lemma \ref{lemma:from maass}, we get that there is a solution to the linear system (namely, a set of weights and biases) which approximate $N_2$, yet have only $2^{\poly(d)}$-sized entries.

%We construct $N''$ in three steps. First, we transform $N$ into a network $N_1$ of depth $k$ where the fan-out of each hidden neuron is $1$, such that for every $\bx \in \reals^d$ we have $N_1(\bx)=N(\bx)$.
%Then, we transform $N_1$ into a network $N_2$ of depth $k$ where the weights in layers $2,\ldots,k$ are all in $\{-1,1\}$, such that for every $\bx \in \reals^d$ we have $N_2(\bx)=N_1(\bx)$.
%Finally, we show that $N_2$ can be transformed to a network $N''$ that satisfies the requirements.

We now turn to the formal proof. First, the network $N_1$ is obtained by proceeding inductively from the output neuron towards the input neurons. Each hidden neuron with fan-out $c>1$ is duplicated $c$ times. Let $l_i,l'_i$ be the number of hidden neurons in the $i$-th layer of $N$ and $N_1$ respectively. Note that $l'_i \leq l_i \cdot l'_{i+1}$. Since $k$ is constant and $l_i = \poly(d)$ then the size of $N_1$ is also $\poly(d)$.

In order to construct $N_2$, we, again, proceed inductively from the output neuron $n_{\text{out}}$ of $N_1$ towards the input neurons. Let $w_1,\ldots,w_l$ be the weights of the output neuron and let $n_1,\ldots,n_l$ be the corresponding hidden neurons. That is, for each $i \in [l]$ there is an edge with weight $w_i \neq 0$ between $n_i$ and $n_{\text{out}}$. Now, for each $i \in [l]$, we replace the weight $w_i$ of the edge $(n_i,n_{\text{out}})$ by $\frac{w_i}{|w_i|}$, and multiply the weights and bias of $n_i$ by $|w_i|$. Note that now the multiplication by $|w_i|$ is done before $n_i$ instead of after it, but $n_{\text{out}}$ still receives the same input as in $N_1$. Since the fan-out of every hidden neuron in $N_1$ is $1$, we can repeat the same operation also in the predecessors of $n_1,\ldots,n_l$, and continue until the first hidden layer. Hence, we obtain a network $N_2$ where the weights in layers $2,\ldots,k$ are all in $\{-1,1\}$.

We now show that $N_2$ can be transformed to a network $N''$ that satisfies the requirements.
Let $l_1$ be the number of neurons in the first hidden layer of $N_2$, let $m_w=d \cdot l_1$ be the number of weights in the first layer (including $0$ weights), and let $m_b$ the number of hidden neurons in $N_2$, that is, the number of biases in $N_2$. Let $m=m_w + m_b$. For each $i \in [l_1]$ we denote by $\bw_i \in \reals^d$ the weights of the $i$-th neuron in the first hidden layer in $N_2$, and for each hidden neuron $n$ in $N_2$ we denote by $b_n$ the bias of $n$.
We define a linear system $A\bz \leq \bc$ where the variables $\bz \in \reals^m$ correspond to the weights of the first layer and the biases in $N_2$.
We denote by $\bz_i$ the $d$ variables in $\bz$ that correspond to $\bw_i$, and by $z_n$ the variable in $\bz$ that corresponds to $b_n$.
Note that each assignment to the variables $\bz$ induces a neural network $N_2^\bz$ where the weights in the first layer and the biases in $N_2$ are replaced by the corresponding variables.

For each $\tbx \in \ci^d$ we place in the system $A\bz \leq \bc$ an inequality for each hidden neuron in $N_2$, and either one or two inequalities for the output neuron. These inequalities are defined by induction on the depth of the neuron.
If $n_i$ is the $i$-th neuron in the first hidden layer and its input in the computation of $N_2(\tbx)$ satisfies $\inner{\tbx,\bw_i}+b_{n_i} \geq 0$, then we add the inequality $\inner{\tbx,\bz_i}+z_{n_i} \geq 0$ to the system. Otherwise, we add the inequality $\inner{\tbx,\bz_i}+z_{n_i} \leq 0$. Note that the variables in the inequality are $\bz_i,z_{n_i}$, and that $\tbx$ is a constant.
Let $S_1 \subseteq \{n_i: i \in [l_1]\}$ be the neurons in the first hidden layer where $\inner{\tbx,\bw_i}+b_{n_i} \geq 0$, that is, the neurons where the ReLU is active in the computation $N_2(\tbx)$.
Now, the input for each neuron $n'$ in the second hidden layer in the computation $N_2(\tbx)$, is of the form $I(n')=\sum_{n_i \in S_1}a_i (\inner{\tbx,\bw_i}+b_{n_i}) + b_{n'}$ where $a_i \in \{-1,0,1\}$ is the weight of the edge $(n_i,n')$ in $N_2$. Let $I'(n')=\sum_{n_i \in S_1}a_i (\inner{\tbx,\bz_i}+z_{n_i}) + z_{n'}$. If $I(n') \geq 0$ then we add the inequality $I'(n') \geq 0$, and otherwise we add $I'(n') \leq 0$. Note that the variables in the inequality are $\bz_i,z_{n_i},z_{n'}$ (for the appropriate indices $i$) and that $\tbx,a_i$ are constants. Thus, this inequality is linear.

We denote by $S_2$ the set of neurons in the second hidden layer where the ReLU is active in the computation $N_2(\tbx)$, and for each neuron $n''$ in the third hidden layer we define $I(n'')$ and $I'(n'')$ and add a linear inequality analogously.
We continue until we reach the output neuron $n_{\text{out}}$.
Let $I(n_{\text{out}})$ be the input to $n_{\text{out}}$ in the computation $N_2(\tbx)$, and let $I'(n_{\text{out}})$ be the corresponding linear expression, where the variables are $\bz$ and the constants are $\tbx$ and the weights in layers $2,\ldots,k$ (which are all in $\{-1,0,1\}$). Note that $I(n_{\text{out}})=N_2(\tbx)=N(\tbx)$.
If $N(\tbx) \in [-B,B]$, then let $-B p'(d) \leq j \leq B p'(d) - 1$ be an integer such that $\frac{j}{p'(d)} \leq I(n_{\text{out}}) \leq \frac{j+1}{p'(d)}$. Now, we add the two inequalities $\frac{j}{p'(d)} \leq I'(n_{\text{out}}) \leq \frac{j+1}{p'(d)}$, where $j,p'(d)$ are constants.
If $N(\tbx) > B$, then we add the inequality $I'(n_{\text{out}}) \geq B$, and if $N(\tbx) < -B$, then we add the inequality $I'(n_{\text{out}}) \leq -B$.

Note that if $\bz$ satisfies all the inequalities $A\bz \leq \bc$, then for each neuron $n$, the expression $I'(n)$ is consistent with the set of active ReLUs according to the inequalities of the previous layers. Therefore, the input to $n$ in the computation $N_2^\bz(\tbx)$ is $I'(n)$.
Hence, for such $\bz$ we have for every $\tbx \in \ci^d$ that if $N_2(\tbx) \in [-B,B]$ then $|N_2^\bz(\tbx)-N_2(\tbx)| \leq \frac{1}{p'(d)}$, if $N_2(\tbx) > B$ then $N_2^\bz(\tbx) \geq B$, and if $N_2(\tbx) < -B$ then $N_2^\bz(\tbx) \leq -B$.
Note that $A\bz \leq \bc$ has a solution in $\reals^m$, since the weights and biases in $N_2$ satisfy all the inequalities.
The entries in $A,\bc$ are either integers with absolute value at most $\poly(d)$, or of the form $q \cdot \tx_i = \frac{q'}{p(d)}$ or $\frac{q}{p'(d)}$ where $q,q'$ are integers with absolute values at most $\poly(d)$. Therefore, by Lemma~\ref{lemma:from maass}, there is an integer $a=\poly(d)$ such that the linear system $(p(d)p'(d)A)\bz \leq p(d)p'(d)\bc$ has a solution $\bz=\left(\frac{s_1}{t},\ldots,\frac{s_m}{t}\right)$, where $s_1,\ldots,s_m,t$ are integers of absolute value at most $(2m+1)!a^{2m+1} \leq 2^{\poly(d)}$. Hence, the network $N''=N_2^\bz$ satisfies the requirements.
\end{proof}

Let $N''$ be the network from Lemma~\ref{lemma:to digital} with $p'(d)=\frac{\sqrt{50}}{\epsilon}$.
The following lemma follows easily.
\begin{lemma}
\label{lemma:N'' approximates N'}
For every $\tbx \in \ci^d$ we have
\[
\left|[N''(\tbx)]_{[-B,B]} - N'(\tbx)\right| \leq \frac{1}{p'(d)}~.
\]
\end{lemma}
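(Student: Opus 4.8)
The plan is to prove this by a short case analysis on the value of $N(\tbx)$, feeding in the three properties of $N''$ guaranteed by Lemma~\ref{lemma:to digital}, and using the elementary fact that the clipping map $y \mapsto [y]_{[-B,B]} = \max(-B,\min(y,B))$ is nonexpansive (i.e.\ $1$-Lipschitz on $\reals$).

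First I would recall that by definition $N'(\tbx) = [N(\tbx)]_{[-B,B]}$. Suppose $N(\tbx) \in [-B,B]$. Then $N'(\tbx) = N(\tbx) = [N(\tbx)]_{[-B,B]}$, and Lemma~\ref{lemma:to digital} gives $|N''(\tbx) - N(\tbx)| \leq \frac{1}{p'(d)}$. Since clipping is nonexpansive, $\left|[N''(\tbx)]_{[-B,B]} - N'(\tbx)\right| = \left|[N''(\tbx)]_{[-B,B]} - [N(\tbx)]_{[-B,B]}\right| \leq |N''(\tbx) - N(\tbx)| \leq \frac{1}{p'(d)}$, as required.

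Next, suppose $N(\tbx) > B$. Then $N'(\tbx) = B$, and Lemma~\ref{lemma:to digital} gives $N''(\tbx) \geq B$, so $[N''(\tbx)]_{[-B,B]} = B = N'(\tbx)$ and the left-hand side is $0 \leq \frac{1}{p'(d)}$. The remaining case $N(\tbx) < -B$ is symmetric, with $-B$ in place of $B$: here $N'(\tbx) = -B$, Lemma~\ref{lemma:to digital} gives $N''(\tbx) \leq -B$, hence $[N''(\tbx)]_{[-B,B]} = -B = N'(\tbx)$. Combining the three cases gives the bound in all cases.

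There is essentially no obstacle here — the lemma is a direct corollary of Lemma~\ref{lemma:to digital}. The only point worth flagging is the use of the nonexpansiveness of the clipping operation in the first case, which is immediate from its definition as $\max(-B,\min(y,B))$; in the other two cases clipping is not even needed on the $N'$ side since $N'(\tbx)$ already lies at an endpoint of $[-B,B]$.
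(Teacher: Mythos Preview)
Your proof is correct and essentially identical to the paper's: both do the same three-case analysis on the value of $N(\tbx)$, invoking the three guarantees of Lemma~\ref{lemma:to digital} and the fact that clipping to $[-B,B]$ cannot increase the distance to a point already in $[-B,B]$. The only cosmetic difference is that the paper writes the first case as $\left|[N''(\tbx)]_{[-B,B]} - N'(\tbx)\right| \leq \left|N''(\tbx) - N'(\tbx)\right|$ directly, whereas you phrase it via the $1$-Lipschitzness of the clipping map; these are the same observation.
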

\begin{proof}
\begin{itemize}
\item If $N(\tbx) \in [-B,B]$ then $|N''(\tbx)-N(\tbx)| \leq \frac{1}{p'(d)}$ and we have
\[
\left|[N''(\tbx)]_{[-B,B]} - N'(\tbx)\right|
\leq \left|N''(\tbx) - N'(\tbx)\right|
= \left|N''(\tbx) - N(\tbx)\right|
\leq \frac{1}{p'(d)}~.
\]
\item If $N(\tbx) > B$ then $N''(\tbx) \geq B$, and therefore
\[
\left|[N''(\tbx)]_{[-B,B]} - N'(\tbx)\right|
= |B - B|
= 0~.
\]
\item If $N(\tbx) < -B$ then $N''(\tbx) \leq -B$, and therefore
\[
\left|[N''(\tbx)]_{[-B,B]} - N'(\tbx)\right|
= |-B - (-B)|
= 0~.
\]
\end{itemize}
\end{proof}

The weights and biases in $N''$ might be exponential, but they have a binary representation with $\poly(d)$ bits. This property enables us to simulate $[N''(\tbx)]_{[-B,B]}$ using arithmetic operations on binary vectors.

We now show how to simulate $[N''(\tbx)]_{[-B,B]}$ using binary operations.
Recall that the input $\tbx$ to $N''$ is such that every component $\tx_i$ is of the form $\frac{q_i}{p(d)}$ for some integer $q_i$ with absolute value at most $\poly(d)$. We will represent each component in the input by the binary representation of the integer $p(d)\tx_i$. It implies that while simulating $N''$, we should replace each weight $w$ in the first layer of $N''$ with $w' = \frac{w}{p(d)}$. Then, $w \cdot \tx_i = w' \cdot (p(d)\tx_i)$.
Recall that the network $N''$ is such that all weights in layers $2,\ldots,k$ in $N''$ are in $\{-1,1\}$ and all weights in the first layer and biases are of the form $\frac{s_i}{t}$ for some positive integer $t \leq 2^{\poly(d)}$, and integers $s_i$ with $|s_i| \leq 2^{\poly(d)}$.
We represent each number of the form $\frac{v}{t}$ by the binary representation of $v$.
Since for all weights and biases $\frac{s_i}{t}$ in $N''$ we can multiply both $t$ and $s_i$ by $p(d)$, we can assume w.l.o.g. that $p(d) \divides s_i$ and $p(d) \divides t$. Then, for each weight $w=\frac{s_i}{t}$ in the first layer of $N''$, we represent $w' = \frac{w}{p(d)} = \frac{s_i}{t \cdot p(d)}$ by the binary representation of the integer $\frac{s_i}{p(d)}$.

Since the input to a neuron in the first hidden layer of $N''$ is a sum of the form $I = \sum_{i \in [d]}w_i \tx_i + b = \sum_{i \in [d]}w'_i (p(d)\tx_i) + b$, then in order to simulate it we need to compute multiplications and additions of binary vectors. Note that $p(d)\tx_i$ are integers, $w'_i$ are represented by the binary representation of the integers $q_i$ such that $w'_i = \frac{q_i}{t}$, and $b$ is represented by the binary representation of the integer $q$ such that $b = \frac{q}{t}$. Then, $I$ is also of the form $\frac{v}{t}$ for an integer $v$ with $|v| \leq 2^{\poly(d)}$, and therefore it can be represented by the binary representation of $v$.
Since the biases in $N''$ are of the form $\frac{s_i}{t}$ for integers $s_i$, and the weights in layers $2,\ldots,k$ are in $\{-1,1\}$, then in the computation $N''(\tbx)$ all values, namely, inputs to neurons in all layers, are of the form $\frac{v}{t}$ where $v$ is an integer with $|v| \leq 2^{\poly(d)}$. That is, a binary representation of $v$ requires $\poly(d)$ bits.
Thus, since all values have $t$ in the denominator, then we ignore it and work only with the numerator.

Let $C' = \poly(d)$ be such that for all $\tbx \in \ci^d$, all inputs to neurons in the computation $N''(\tbx)$ are of the form $\frac{v}{t}$ where $v$ is an integer with absolute value at most $2^{C'}$. Namely, all values in the computation can be represented by $C'$ bits. Let $C = \poly(d)$ be such that every integer $v$ of absolute value at most $2^{C'} + Bt$ has a binary representation with $C$ bits. Also, assume that $C \geq \log(2Rp(d) +1)$.
Such $C$ will be sufficiently large in order to represent all inputs $p(d)\tx_i$ and all values in our simulation of $[N''(\tbx)]_{[-B,B]}$.

We now show how to simulate $p(d)\tbx \mapsto [N''(\tbx)]_{[-B,B]} + B$ with a threshold circuit.

\begin{lemma}
\label{lemma:simulation with TC}
There is a threshold circuit $T$ of depth $3k+1$, width $\poly(d)$, and $\poly(d)$-bounded weights, whose inputs are the $C$-bits binary representations of:
\begin{itemize}
\item $p(d)\tx_i$ for every $i \in [d]$.
\item $\frac{s_i}{p(d)}$ and $\frac{-s_i}{p(d)}$ for every weight $\frac{s_i}{t}$ in the first layer of $N''$.
\end{itemize}
And its outputs are:
\begin{itemize}
\item The $C$-bits binary representation of $v$ such that:
    \begin{itemize}
    \item If $N''(\tbx) \in [-B,B]$ then $\frac{v}{t} = N''(\tbx) + B$.
    \item Otherwise $v=0$.
    \end{itemize}
\item A bit $c$ such that $c=1$ iff $N''(\tbx) > B$.
\end{itemize}
\end{lemma}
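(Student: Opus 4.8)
The idea is to have $T$ simulate the computation of $N''$ on $\tbx$ layer by layer, working throughout with binary (two's-complement) representations of integers, and to fold the clipping to $[-B,B]$ and the shift by $B$ into the final, linear layer of $N''$ so that the whole simulation fits in exactly $3k+1$ threshold layers. By Lemma~\ref{lemma:to digital} and the choice of $C'$, every value that arises in the computation $N''(\tbx)$ has the form $v/t$ for an integer $v$ with $|v|\le 2^{C'}$; since $t$ is the common denominator of all weights and biases, it never has to be manipulated, and the simulation reduces to integer arithmetic on $C$-bit vectors. I would use the standard $\mathrm{TC}^0$ primitives, all realizable with $\poly(d)$ width and $\poly(d)$-bounded weights (\cite{siu1994optimal}): the bitwise (partial-)product of two $\poly(d)$-bit integers in depth $1$ (each partial-product bit is an AND of two input bits); iterated addition of $\poly(d)$-many $\poly(d)$-bit integers in depth $2$; and comparison of a $\poly(d)$-bit integer (or of an unreduced sum of $\poly(d)$-many such integers) with a fixed constant in depth $\le 2$ with $\poly(d)$ weights, together with bit-selection (outputting a given bit ANDed with a constant number of such comparison bits) in depth $1$. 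Two observations make the budget tight but sufficient. First, a ReLU $z\mapsto[z]_+$ applied to a two's-complement integer $z$ costs depth $1$: bit $l$ of $[z]_+$ equals $(\text{bit }l\text{ of }z)\wedge\neg(\text{sign bit of }z)$. Second, an affine map $\sum_i w_i b_i+(\text{const})$ with $\poly(d)$-many terms costs only depth $3$, even for arbitrary $w_i$: the two's-complement product $w_i b_i$ expands into $\poly(d)$-many shifted copies of $w_i$ or $-w_i$, each masked by a single bit of $b_i$ — which is exactly why the auxiliary inputs $\pm s_i/p(d)$ are supplied — so the whole map is a sum of $\poly(d)$-many (implicit) $\poly(d)$-bit integers plus a hardwired constant, computable by one AND-layer followed by one iterated-addition block.

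Here is how the layers would be allocated. The first hidden layer of $N''$ computes $I^{(1)}_j=\sum_i w'_i\,(p(d)\tx_i)+b_j$, an affine map of the above type in the input integers $p(d)\tx_i$ with hardwired bias; by the second observation this uses TC layers $1$--$3$, so $I^{(1)}$ is available after layer $3$, and one ReLU layer (layer $4$) produces the activations $O^{(1)}$. For each hidden layer $i=2,\dots,k-1$, the pre-activation $I^{(i)}_j=\sum_{j'}\pm O^{(i-1)}_{j'}+b_j$ involves no products (the weights of layers $2,\dots,k$ of $N''$ are in $\{-1,1\}$), so it is one iterated-addition block (depth $2$) followed by a ReLU layer (depth $1$); inductively $O^{(i)}$ is available after layer $3i+1$, hence $O^{(k-1)}$ after layer $3(k-1)+1=3k-2$. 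The output layer of $N''$ is linear and bias-free, $N''(\tbx)=\sum_j\pm O^{(k-1)}_j$, and here I fold in the clipping: from the bits of $O^{(k-1)}$ a single depth-$2$ block computes in parallel (a) the $C$-bit representation of $v_1:=t\cdot N''(\tbx)+Bt=\sum_j\pm(\text{value of }O^{(k-1)}_j)+Bt$ (iterated addition with the hardwired constant $Bt$), (b) the bit $c:=\onefunc[\,\sum_j\pm(\text{value of }O^{(k-1)}_j)>Bt\,]=\onefunc[N''(\tbx)>B]$, and (c) the bit $b_{\mathrm{neg}}:=\onefunc[N''(\tbx)<-B]=\onefunc[v_1<0]$ — items (b) and (c) being comparisons of an unreduced sum of binary integers, built from the $O^{(k-1)}$-bits, against a constant, hence depth $\le 2$ with $\poly(d)$ weights. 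All of these are available after layer $3k$. One last layer (layer $3k+1$) outputs, for each position $l$, the bit $v_l:=v_{1,l}\wedge\neg c\wedge\neg b_{\mathrm{neg}}$ (an AND of three bits, i.e.\ a single threshold gate with weights $1,-1,-1$) and passes $c$ through. When $N''(\tbx)\in[-B,B]$, i.e.\ $0\le v_1\le 2Bt$, this yields $v=v_1$, so $v/t=N''(\tbx)+B$; otherwise $v=0$; and $c=1$ iff $N''(\tbx)>B$, as required. The total depth is $3k-2+2+1=3k+1$, and width and all weights are $\poly(d)$ since every subcircuit is.

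The substantive difficulty is the tight depth bookkeeping rather than any single gadget. The $3k+1$ bound survives only if: the ReLU of hidden layer $i$ is charged to the same block of three TC layers as the iterated addition producing $I^{(i)}$ (not to the next layer); the first affine map, which genuinely contains multiplications, is nonetheless realized in depth $3$ via the ``partial products, then one iterated addition'' view rather than ``multiply, then add''; and — the most delicate point — the clipping at the end is computed \emph{in parallel with} the output iterated addition, all three of $v_1,c,b_{\mathrm{neg}}$ being functions of the already-available $O^{(k-1)}$-bits, so that only the single bit-selection layer is charged on top of the depth-$2$ output block. One also has to keep the weights polynomial: $c$ and $b_{\mathrm{neg}}$ are threshold functions of the $O^{(k-1)}$-bits only with exponentially large weights, so they must be realized instead as the $\poly(d)$-weight depth-$2$ subcircuits for comparing binary-represented sums with a constant; and one must check, using the bounds $C',C$ fixed in Lemma~\ref{lemma:to digital} and the surrounding discussion, that no intermediate quantity overflows its $C$-bit budget anywhere in the simulation.
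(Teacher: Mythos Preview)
Your proposal is correct and follows essentially the same layer-by-layer simulation as the paper, with the same depth accounting: depth $4$ for the first (multiplicative) hidden layer via ``partial products then one iterated addition'' plus a ReLU, depth $3$ per subsequent hidden layer, and depth $3$ for the output layer by folding the $+Bt$ shift into the last iterated addition and using the sign bits of $v+Bt$ and $-v+Bt$ to implement the clipping. The one point on which the paper is more explicit is how the $\pm$ signs in layers $2,\dots,k$ are handled: you assert that $\sum_{j'} \pm O^{(i-1)}_{j'}+b_j$ is ``one iterated-addition block,'' but you never actually have $-O^{(i-1)}_{j'}$ in hand; the paper sidesteps this by maintaining \emph{both} $z$ and $-z$ through every layer---computing $\pm z$ already in the first block from the supplied inputs $\pm s_i/p(d)$, and zeroing both copies at each ReLU by subtracting the MSB of $z$ from all bits of $-z$ as well---so that every subsequent affine map is a literal iterated addition of integers that are already available as bit-vectors.
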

\begin{proof}
In order to simulate the first layer of $N''$, we first need to compute a sum of the form $\sum_{i \in [d]}w_i \cdot z_i$ where $w_i = \frac{s_i}{p(d)}$ and $z_i=p(d)\tx_i$ are the inputs and are given in a binary representation. Hence, we are required to perform binary multiplications and then binary iterated addition, namely, addition of multiple numbers that are given by binary vectors.
Binary iterated addition can be done by a depth-$2$ threshold circuit with polynomially-bounded weights and polynomial width, and binary multiplication can be done by a depth-$3$ threshold circuit with polynomially-bounded weights and polynomial width (\cite{siu1994optimal}). The depth-$3$ circuit for multiplication shown in \cite{siu1994optimal} first computes the partial products and then uses the depth-$2$ threshold circuit for iterated addition in order to compute their sum. They show it for a multiplication of two $n$-bit numbers that results in a $2n$-bit number. The same method can be used also in our case for a multiplication of two $C$-bit numbers that results in a $C$-bit number, since $C$ was chosen such that we are guaranteed that there is no overflow. Also, in two's complement representation, multiplication and addition of signed numbers can be done similarly to the unsigned case.
In our case, we need to compute multiplication and then iterated addition. Hence, instead of using a depth-$5$ threshold circuit that computes multiplications and then computes the iterated addition, we can use a depth-$3$ threshold circuit that first computes all partial products for all multiplications, and then computes a single iterated addition.

Since the hidden neurons in $N''$ have biases, we need to simulate sums of the form $b+\sum_{i \in [d]} w_i \cdot z_i$. Hence, the binary iterated addition should also include $b$. Therefore, the bias $b$ is hardwired into the circuit $T$. That is, for every bias $b=\frac{v}{t}$, we add $C$ gates to the first hidden layer with fan-in $0$ and with biases in $\{0,1\}$ that correspond to the binary representation of $v$.

Simulating the ReLUs of the first hidden layer in $N''$ can be done as follows.
Let $v$ be an integer and let $v^{\bin(C)} \in \{0,1\}^C$ be its binary representation.
Recall that in the two's complement representation the most significant bit (MSB) is $1$ iff the number is negative.
Now, we reduce the value of the MSB, namely $v^{\bin(C)}_C$, from all other $C-1$ bits $v^{\bin(C)}_1,\ldots,v^{\bin(C)}_{C-1}$. Thus, we transform $v^{\bin(C)}$ to $\left(\sign(v^{\bin(C)}_1-v^{\bin(C)}_C),\ldots,\sign(v^{\bin(C)}_{C-1}-v^{\bin(C)}_C),0\right)$. Now, if $v<0$, that is $v^{\bin(C)}_C=1$, then we obtain a binary vector whose bits are all $0$. If $v \geq 0$ then $v^{\bin(C)}_C=0$ and therefore $v^{\bin(C)}$ is not changed. Thus, simulating a ReLU of $N''$ requires one additional layer in the threshold circuit.
Overall, the output of the first hidden layer of $N''$ can be computed by a depth-$4$ threshold circuit.

Now, the weights in layers $2,\ldots,k$ in $N''$ are in $\{-1,1\}$. Note that simulating multiplication by a threshold circuit, as discussed above, requires $3$ layers. However, we need to compute values of the form $b + \sum_i a_i \cdot z_i$ where $a_i \in \{-1,1\}$, and $z_i,b$ are given by binary vectors. In order to avoid multiplication, we keep both the values of the computation $N''(\tbx)$ in each layer, and their negations. That is, the circuit $T$ keeps both the binary representation of $z_i$ and the binary representation of $-z_i$, and then simulating each layer can be done by iterated addition, without binary multiplication. Keeping both $z_i$ and $-z_i$ in each layer is done as follows. When $T$ simulates the first layer of $N''$, it computes values of the form $z=b+\sum_{i \in [d]} w_i \cdot (p(d)\tx_i)$, and in parallel it should also compute $-z=-b+\sum_{i \in [d]} (-w_i) \cdot (p(d)\tx_i)$. Note that both $w_i$ and $-w_i$ are given as inputs to $T$, and that the binary representation of $v$ such that $-b=\frac{v}{t}$ can be hardwired into $T$, similarly to the case of $b$.
Then, when simulating a ReLU of $N''$, it reduces the MSB of $z$ also from all bits of the binary representation of $-z$. Thus, if $z < 0$ then both $z$ and $-z$ become $0$. Now, computing $z'= b' + \sum_i a_i \cdot z_i$ where $a_i \in \{-1,1\}$ and $z_i,b'$ are binary numbers, can be done by iterated addition, and also computing $-z'= -b' + \sum_i -a_i \cdot z_i$ can be done by iterated addition.
Note that the binary representations of $\pm v$ such that $b'=\frac{v}{t}$ are also hardwired into $T$.
Since iterated addition can be implemented by a threshold circuit of depth $2$, the sum $z'= b' + \sum_i a_i \cdot z_i$ can be implemented by $2$ layers in $T$, and then implementing $[z']_+$, requires one more layer as discussed above.
Thus, each of the layers $2,\ldots,k-1$ in $N''$ requires $3$ layers in $T$.

Let $N''(\tbx)=\frac{v}{t}$. When simulating the final layer of $N''$, we also add (as a part of the iterated addition) the hardwired binary representation of $Bt$. That is, instead of computing the binary representation of $v$, we compute the binary representation of $v' = v+Bt$. We also compute the binary representation of $v'' = -v + Bt$. Note that $\frac{v'}{t} = N''(\tbx) + B$ and $\frac{v''}{t} = -N''(\tbx) + B$.
Now, the bit $c$ that $T$ should output is the MSB of $v''$, since $v''$ is negative iff $N''(\tbx) > B$.
The $C$-bits binary vector that $T$ outputs is obtained from $v',v''$ by adding one final layer as follows.
Let $\msb(v')$ and $\msb(v'')$ be the MSBs of $v',v''$. In the final layer we reduce $\msb(v')+\msb(v'')$ from all bits of $v'$. That is, if either $v'$ or $v''$ are negative, then we output $0$, and otherwise we output $v'$.
Now, if $N''(\tbx) \in [-B,B]$ then $v',v'' \in [0,2Bt]$, and we output $v'$, which corresponds to $N''(\tbx) + B$.
If $N''(\tbx) < -B$ then $\frac{v'}{t} = N''(\tbx) + B < 0$, and therefore $\msb(v')=1$, and we output $0$.
If $N''(\tbx) > B$ then $\frac{v''}{t} = -N''(\tbx) + B <0$, and therefore $\msb(v'')=1$, and we output $0$.
Thus, simulating the final layer of $N''$ requires $3$ layers in $T$: $2$ layers for the iterated addition, and one layer for transforming $v',v''$ to the required output.

Finally, the depth of $T$ is $3k+1$ since simulating the first layer of $N''$ requires $4$ layers in $T$, and each additional layer in $N''$ required $3$ layers in $T$.
\end{proof}

The following simple lemma shows that threshold circuits can be transformed to neural networks.

\begin{lemma}
\label{lemma:from TC to NN}
Let $T$ be a threshold circuit with $d$ inputs, $q$ outputs, depth $m$ and width $\poly(d)$. There is a neural network $\cn$ with $q$ outputs, depth $m+1$ and width $\poly(d)$, such that for every $\bx \in \{0,1\}^d$ we have $\cn(\bx) = T(\bx)$. If $T$ has $\poly(d)$-bounded weights then $\cn$ also has $\poly(d)$-bounded weights.
Moreover, for every input $\bx \in \reals^d$ the outputs of $\cn$ are in $[0,1]$.
\end{lemma}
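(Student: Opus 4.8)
The plan is to simulate each threshold gate by a pair of ReLU neurons computing a ``clipped ramp'', exploiting the fact that on Boolean inputs, and with integer weights, the pre-activation of every threshold gate is an integer. The key elementary identity is that for every \emph{integer} $z$ one has $\sign(z) = [z]_+ - [z-1]_+$ (check the cases $z\le 0$, $z=1$, $z\ge 2$), whereas for an \emph{arbitrary real} $z$ the quantity $[z]_+ - [z-1]_+$ equals $\max(0,\min(z,1))\in[0,1]$. These two facts will give exact computation on $\{0,1\}^d$ and boundedness of the outputs on all of $\reals^d$, respectively.

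First I would give the layer-by-layer construction. For each threshold gate $g$ in layer $i$ of $T$, computing $\by\mapsto\sign(\bw^\top\by+b)$, the network $\cn$ contains in its $i$-th ReLU layer two neurons $u_{g,1},u_{g,2}$, where $u_{g,1}$ applies $[\,\cdot\,]_+$ to the incoming linear form and $u_{g,2}$ applies $[\,\cdot-1\,]_+$ to it; the ``value'' of $g$ is then represented implicitly as $u_{g,1}-u_{g,2}$. When a downstream gate $g'$ in layer $i+1$ of $T$ reads the outputs $y_g\in\{0,1\}$ of layer-$i$ gates through weights $w'_g$ and bias $b'$, we substitute $y_g = u_{g,1}-u_{g,2}$, so that $u_{g',1}$ computes $\big[\sum_g w'_g(u_{g,1}-u_{g,2})+b'\big]_+$ and $u_{g',2}$ the same expression with $b'$ replaced by $b'-1$. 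This is a valid ReLU layer sitting directly on the previous one, so no extra depth is spent; the first ReLU layer simply reads $\bx$ directly, exactly as the first threshold layer of $T$ does. Finally, letting $g_1,\dots,g_q$ be the output gates of $T$, the $m$-th ReLU layer produces the pairs $u_{g_j,1},u_{g_j,2}$, and we append one purely linear layer (layer $m+1$) whose $j$-th output is $u_{g_j,1}-u_{g_j,2}$. Hence $\cn$ has depth $m+1$ and width at most $2\cdot\poly(d)=\poly(d)$, and all its weights are weights of $T$ or their negations, with biases of the form $b$ or $b-1$; so $\poly(d)$-bounded weights in $T$ yield $\poly(d)$-bounded weights in $\cn$.

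Next I would prove correctness on $\bx\in\{0,1\}^d$ by induction on the layer index $i$: assuming every layer-$(i-1)$ gate value is reproduced exactly as $u_{g,1}-u_{g,2}\in\{0,1\}$ (base case $i=1$, using $\bx\in\{0,1\}^d$ and integer weights), the pre-activation of each layer-$i$ gate in $\cn$ equals the same integer $\bw^\top\by+b$ as in $T$, and the identity $\sign(z)=[z]_+-[z-1]_+$ for integer $z$ shows $u_{g,1}-u_{g,2}$ equals the correct Boolean output. Applying this at $i=m$ and reading off the linear output layer gives $\cn(\bx)=T(\bx)$. The bound $\cn(\bx)\in[0,1]^q$ for arbitrary $\bx\in\reals^d$ is then immediate: each output coordinate has the form $[z]_+-[z-1]_+$ for some real $z$ (a linear function of the previous layer's outputs), and this always lies in $[0,1]$.

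The construction is essentially routine, so there is no serious obstacle; the only point requiring a little care is the bookkeeping showing that the linear combination defining a downstream gate can be re-expressed over the ReLU neurons $u_{g,1},u_{g,2}$ of the previous layer, i.e.\ that the subtraction in $u_{g,1}-u_{g,2}$ is absorbed into the next layer's weight matrix rather than into a separate layer. This is exactly what keeps the depth at $m+1$ rather than blowing up to $2m$.
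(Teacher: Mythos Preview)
Your proposal is correct and follows essentially the same approach as the paper: replace each threshold gate by the pair of ReLU neurons computing $[z]_+$ and $[z-1]_+$, use the integrality of pre-activations on Boolean inputs to get exact simulation, and observe that $[z]_+-[z-1]_+\in[0,1]$ for all real $z$ to get the output bound. You are simply more explicit than the paper about the bookkeeping that absorbs the subtraction $u_{g,1}-u_{g,2}$ into the next layer's weights (which is what keeps the depth at $m+1$), but the construction and argument are the same.
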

\begin{proof}
Let $g$ be a gate in $T$, and let $\bw \in \integers^l$ and $b \in \integers$ be its weights and bias. Let $n_1$ be a neuron with weights $\bw$ and bias $b$, and let $n_2$ be a neuron with weights $\bw$ and bias $b-1$. Let $\by \in \{0,1\}^l$. Since $(\inner{\bw,\by}+b) \in \integers$, we have $[\inner{\bw,\by}+b]_+ - [\inner{\bw,\by}+b-1]_+ = \sign(\inner{\bw,\by} + b)$. Hence, the gate $g$ can be replaced by the neurons $n_1,n_2$. We replace all gates in $T$ by neurons and obtain a network $\cn$. Since each output gate of $T$ is also replaced by two neurons, $\cn$ has $m+1$ layers.
Since for every $\bx \in \reals^d$, weight vector $\bw$ and bias $b$ we have $[\inner{\bw,\bx}+b]_+ - [\inner{\bw,\bx}+b-1]_+ \in [0,1]$ then for every input $\bx \in \reals^d$ the outputs of $\cn(\bx)$ are in $[0,1]$.
\end{proof}

We are now ready to construct the network $\hat{N}$.
Let $\delta'=\frac{\epsilon^2}{50 \cdot 36B^2}$.
The network $\hat{N}$ is such that w.p. at least $1-\delta'$ we have $\hat{N}(\bx) = [N''(\tbx)]_{[-B,B]}$. It consists of three parts.

First, $\hat{N}$ transforms w.p. $\geq 1-\delta'$ the input $\bx$ to the $(C \cdot d)$-bits binary representation of $p(d)\tbx$. By Lemma~\ref{lemma:from bx to binary}, it can be done with a $2$-layers neural network $\cn_1$.

Second, let $T$ be the threshold circuit from Lemma~\ref{lemma:simulation with TC}. By Lemma~\ref{lemma:from TC to NN}, $T$ can be implemented by a neural network $\cn_2$ of depth $3k+2$. Note that the input to $\cn_2$ has two parts:
\begin{enumerate}
\item The $(C \cdot d)$-bits binary representation of $p(d)\tbx$. This is the output of $\cn_1$.
\item The binary representations of $\frac{\pm s_i}{p(d)}$ for every weight $\frac{s_i}{t}$ in the first layer of $N''$. This is hardwired into $\hat{N}$ by hidden neurons with fan-in $0$ and appropriate biases in $\{0,1\}$.
\end{enumerate}
Thus, using $\cn_2$ the network $\hat{N}$ transforms the binary representation of $p(d)\tbx$ to the output $(v^{\bin(C)},c)$ of $T$.

Third, $\hat{N}$ transforms $(v^{\bin(C)},c)$ to $[N''(\tbx)]_{[-B,B]}$ as follows.
Let $v$ be the integer that corresponds to the binary vector $v^{\bin(C)}$.
The properties of $v$ and $c$ from Lemma~\ref{lemma:simulation with TC}, imply that $[N''(\tbx)]_{[-B,B]} = \frac{v}{t} + c \cdot 2B - B$, since we have:
\begin{itemize}
\item If $N''(\tbx) \in [-B,B]$ then $\frac{v}{t} = N''(\tbx) + B$ and $c=0$.
\item If $N''(\tbx) < -B$ then $v=0$ and $c = 0$.
\item If $N''(\tbx) > B$ then $v=0$ and $c = 1$.
\end{itemize}
Hence, we need to transform $(v^{\bin(C)},c)$ to the real number $\frac{v}{t} + c \cdot 2B - B$.
Note that $v \geq 0$, and therefore we have
\[
\frac{v}{t} = \sum_{i \in [C-1]}v^{\bin(C)}_i \cdot \frac{2^{i-1}}{t}~.
\]
Also, note that $\frac{v}{t} \in [0,2B]$, and therefore for every $i \in [C-1]$ we have $v^{\bin(C)}_i \cdot \frac{2^{i-1}}{t} \leq 2B$. Hence, we can ignore every $i > \log(2Bt)+1$.
Thus,
\begin{equation}
\label{eq:sum for vt}
\frac{v}{t} = \sum_{i \in [\log(2Bt)+1]}v^{\bin(C)}_i \cdot \frac{2^{i-1}}{t}~.
\end{equation}
Since for $i \in [\log(2Bt)+1]$ we have $\frac{2^{i-1}}{t} \leq 2B$, then in the above computation of $\frac{v}{t}$ the weights are positive numbers smaller or equal to $2B$.
Thus, we can transform $(v^{\bin(C)},c)$ to $\frac{v}{t} + c \cdot 2B - B$ in one layer with $\poly(d)$-bounded weights. In order to avoid bias in the output neuron, the additive term $-B$ is hardwired into $\hat{N}$ by adding a hidden neuron with fan-in $0$ and bias $1$ that is connected to the output neuron with weight $-B$.

Since the final layers of $\cn_1$ and $\cn_2$ do not have activations and can be combined with the next layers, and since the third part of $\hat{N}$ is a sum, then the depth of $\hat{N}$ is $3k+3$.

Thus, we have w.p. at least $1-\delta'$ that $\hat{N}(\bx) = [N''(\tbx)]_{[-B,B]}$.
By Lemma~\ref{lemma:N'' approximates N'}, it implies that w.p. at least $1-\delta'$ we have
\begin{equation}
\label{eq:dist hatN tildeN}
\left|\hat{N}(\bx) - \tilde{N}(\bx)\right| \leq \frac{1}{p'(d)}~.
\end{equation}
However, it is possible (w.p. at most $\delta'$) that $\cn_1$ fails to transform $\bx$ to the binary representation of $p(d)\tbx$, and therefore the above inequality does not hold. Still, even in this case we can bound the output of $\hat{N}$ as follows.
If $\cn_1$ fails to transform $\bx$ to $p(d)\tbx$, then the input to $\cn_2$ may contains values other than $\{0,1\}$. However, by Lemma~\ref{lemma:from TC to NN}, the network $\cn_2$ outputs $c$ and $v^{\bin(C)}$ such that each component is in $[0,1]$. Now, when transforming $(v^{\bin(C)},c)$ to $\frac{v}{t} + c \cdot 2B - B$ in the final layer of $\hat{N}$, we compute $\frac{v}{t}$ by the sum in Eq.~\ref{eq:sum for vt}.
Since $v^{\bin(C)}_i \in [0,1]$ for every $i$, this sum is at least $0$ and at most
\[
\frac{1}{t} \cdot 2^{\log(2Bt)+1}
= 4B~.
\]
Therefore, the output of $\hat{N}$ is at most $4B + 1 \cdot 2B - B = 5B$, and at least $0 + 0 \cdot 2B - B = -B$. Thus, for every $\bx$ we have $\hat{N}(\bx) \in [-B,5B]$. Since for every $\bx$ we have $\tilde{N}(\bx) \in [-B,B]$, then we have
\[
\left|\hat{N}(\bx) - \tilde{N}(\bx)\right| \leq 6B~.
\]

Combining the above with Eq.~\ref{eq:dist hatN tildeN} and plugging in $\delta'=\frac{\epsilon^2}{50 \cdot 36B^2}$ and $p'(d)=\frac{\sqrt{50}}{\epsilon}$, we have
\[
\E_{\bx \sim \mu}(\hat{N}(\bx)-\tilde{N}(\bx))^2
\leq (1-\delta')\left(\frac{1}{p'(d)}\right)^2 + \delta' \cdot (6B)^2
= (1-\delta')\frac{\epsilon^2}{50} + \frac{\epsilon^2}{50 \cdot 36B^2} \cdot 36B^2
\leq \left(\frac{\epsilon}{5}\right)^2~.
\]
Therefore $\snorm{\hat{N}-\tilde{N}}_{L_2(\mu)} \leq \frac{\epsilon}{5}$ as required.

\subsection{Proof of Theorem~\ref{thm:poly weights discrete}}

The proof follows the same ideas as the proof of Theorem~\ref{thm:poly weights}, but is simpler.
Consider the functions $f'$ and $N'$ that are defined in the proof of Theorem~\ref{thm:poly weights}. For every $\bx \in \ci^d$ we denote $\tilde{\bx}=\bx$, and $\tilde{N}(\bx) = N'(\tbx) = N'(\bx)$.
Now, from the same arguments as in the proof of Theorem~\ref{thm:poly weights}, it follows that we can bound $\snorm{f'-f}_{L_2(\cd)}$ and $\snorm{N'-f'}_{L_2(\cd)}$. Since $\snorm{\tilde{N}-N'}_{L_2(\cd)} = 0$, it remains to show that Lemma~\ref{lemma:triangle4} holds also in this case.

The network $\hat{N}$ will have a similar structure to the one in the proof of Lemma~\ref{lemma:triangle4}.

First, it transforms the input $\bx$ to the binary representation of $p(d)\tilde{\bx} = p(d)\bx$. This transformation is similar to the one from the proof of Lemma~\ref{lemma:from bx to binary}. However, since $\cd$ is such that for every $i \in [d]$ the component $x_i$ is of the form $\frac{j}{p(d)}$ for some integer $j$, then for an appropriate $\Delta$, we have for every integer $l$ that
\[
Pr_{\bx \sim \cd}\left(x_i \in \left[\frac{l}{p(d)}-\frac{1}{2p(d)}-\frac{\Delta}{p(d)}, \frac{l}{p(d)}-\frac{1}{2p(d)}+\frac{\Delta}{p(d)} \right] \right) = 0~.
\]
Hence, there is a depth-$2$ network with $\poly(d)$ width and $\poly(d)$-bounded weights, that transforms $\bx$ to the binary representation of $p(d)\tbx$ and succeeds w.p.~$1$.

Recall that in the proof of Lemma~\ref{lemma:triangle4}, the next parts of $\hat{N}$ transform for every $\tbx$ the binary representation of $p(d)\tbx$ to $[N''(\tbx)]_{[-B,B]}$. Since this transformation is already discrete and does not depend on the input distribution, we can also use it here.
Then, by lemma~\ref{lemma:N'' approximates N'} we have for every $\tbx$ that
\[
\left|[N''(\tbx)]_{[-B,B]} - N'(\tbx)\right| \leq \frac{1}{p'(d)}~.
\]

Thus, for $p'(d) = \frac{5}{\epsilon}$, we obtain a network $\hat{N}$ such that w.p.~$1$ we have $\left|\hat{N}(\bx) - \tilde{N}(\bx)\right| \leq \frac{\epsilon}{5}$,
and therefore $\snorm{\hat{N}-\tilde{N}}_{L_2(\cd)} \leq \frac{\epsilon}{5}$.

\subsection{Proof of Theorem~\ref{thm:separation}}

Let $\epsilon = \frac{1}{\poly(d)}$, and let $N$ be a neural network of depth $k'$ such that $\snorm{N-f}_{L_2(\mu)} \leq \frac{\epsilon}{5}$. In the proof of Theorem~\ref{thm:poly weights} we constructed a network $\hat{N}$ of depth $3k'+3$ such that $\snorm{\hat{N}-f}_{L_2(\mu)} \leq \epsilon$.
The network $\hat{N}$ is such that in the first two layers the input $\bx$ is transformed w.h.p. to the binary representation of $p(d)\tbx$. This transformation requires two layers, denoted by $\cn_1$. Since the second layer in $\cn_1$ does not have activation, it is combined with the next layer in $\hat{N}$.
The next layers in $\hat{N}$, denoted by $\cn_2$, implement a threshold circuit $T$ of depth $3k'+1$ and width $\poly(d)$. The depth of $\cn_2$ is $3k'+2$. Since the final layer of $\cn_2$ does not have activation, it is combined with the next layer in $\hat{N}$.
Finally, the output of $\hat{N}$ is obtained by computing a linear function over the outputs of $\cn_2$.

Let $g:\{0,1\}^{d'} \rightarrow \{0,1\}^{C+1}$ be the function that $T$ computes. Note that $d'=\poly(d)$ and $C = \poly(d)$.
Assume that $g$ can be computed by a threshold circuit $T'$ of depth $k-2$ and width $\poly(d')$.
By Lemma~\ref{lemma:from TC to NN}, the threshold circuit $T'$ can be implemented by a neural network $\cn'_2$ of depth $k-1$ and width $\poly(d')$.
Consider the neural network $\hat{\cn}$ obtained from $\hat{N}$ by replacing $\cn_2$ with $\cn'_2$. The depth of $\hat{\cn}$ is $k$.
The same arguments from the proof of Theorem~\ref{thm:poly weights} for showing that $\snorm{\hat{N}-f}_{L_2(\mu)} \leq \epsilon$ now apply on $\hat{\cn}$, and hence $\snorm{\hat{\cn}-f}_{L_2(\mu)} \leq \epsilon$.
Therefore, $f$ can be approximated by a network of depth $k$, in contradiction to the assumption. Hence the function $g$ cannot be computed by a $\poly(d')$-sized threshold circuit of depth $k-2$.

\subsection{Proof of Theorem~\ref{thm:separation discrete}}

Let $\epsilon = \frac{1}{\poly(d)}$, and let $N$ be a neural network of depth $k'$ such that $\snorm{N-f}_{L_2(\cd)} \leq \frac{\epsilon}{5}$. In the proof of Theorem~\ref{thm:poly weights discrete} we constructed a network $\hat{N}$ of depth $3k'+3$ such that $\snorm{\hat{N}-f}_{L_2(\cd)} \leq \epsilon$.
The structure of the network $\hat{N}$ is similar to the corresponding network from the proof of Theorem~\ref{thm:poly weights}.
Now, the proof follows the same lines as the proof of Theorem~\ref{thm:separation}.

\subsection{Proof of Theorem~\ref{thm:radial}}

Let $f(\bx)=g(\snorm{\bx})$ where $g:\reals \rightarrow \reals$.
Let $\epsilon  = \frac{1}{\poly(d)}$.
By Theorem~\ref{thm:poly weights}, there is a neural network $N$ of a constant depth $k$, width $\poly(d)$, and $\poly(d)$-bounded weights, such that $\E_{\bx \sim \mu}(N(\bx)-f(\bx))^2 \leq \left(\frac{\epsilon}{3}\right)^2$.
Since $N$ has a constant depth, $\poly(d)$ width and $\poly(d)$-bounded weights, then it is $\poly(d)$-Lipschitz. Also, as we show in the proof of Theorem~\ref{thm:poly weights}, the network $N$ is bounded by some $B=\poly(d)$, namely, for every $\bx \in \reals^d$ we have $|N(\bx)| \leq B$.

Let $r=\snorm{\bx}$ and let $\mu_r$ be the distribution of $r$ where $\bx \sim \mu$. Let $U(\bbs^{d-1})$ be the uniform distribution on the unit sphere in $\reals^d$.
Since $\mu$ is radial, we have
\[
\left(\frac{\epsilon}{3}\right)^2 \geq
\E_{\bx \sim \mu}(N(\bx)-f(\bx))^2
= \E_{\bz \sim U(\bbs^{d-1})} \E_{r \sim \mu_r} (N(r\bz)-f(r\bz))^2
= \E_{\bz \sim U(\bbs^{d-1})} \E_{r \sim \mu_r} (N(r\bz)-g(r))^2~.
\]
Therefore, there is some $\bu \in \bbs^{d-1}$ such that $\E_{r \sim \mu_r} (N(r\bu)-g(r))^2 \leq \left(\frac{\epsilon}{3}\right)^2$.
Let $N_\bu:\reals \rightarrow \reals$ be such that $N_\bu(t)=N(t\bu)$. It can be implemented by a network of depth $k$ that is obtained by preceding $N$ with a layer that computes $t \mapsto t\bu$ (and does not have activation). Thus, $\E_{r \sim \mu_r} (N_\bu(r)-g(r))^2 \leq \left(\frac{\epsilon}{3}\right)^2$.
Let $h:\reals^d \rightarrow \reals$ be such that $h(\bx) = N_\bu(\snorm{\bx})$. Note that
\begin{equation}
\label{eq:radial triangle1}
\E_{\bx \sim \mu}(h(\bx)-f(\bx))^2
= \E_{\bx \sim \mu}(N_\bu(\snorm{\bx})-g(\snorm{\bx}))^2
= \E_{r \sim \mu_r}(N_\bu(r)-g(r))^2
\leq \left(\frac{\epsilon}{3}\right)^2~.
\end{equation}

Since $\mu$ has an almost-bounded conditional density, then by Lemma~\ref{lemma:from bounded conditional to marginal}, there is $R_1 = \frac{1}{\poly(d)}$ such that for every $i \in [d]$ we have
\[
Pr_{\bx \sim \mu}\left(x_i \in [-R_1,R_1]\right) \leq \frac{\epsilon^2}{72B^2}~.
\]
Hence,
\[
Pr_{\bx \sim \mu}\left(\snorm{\bx} \leq R_1\right) \leq \frac{\epsilon^2}{72B^2}~.
\]
Also, since $\mu$ has an almost-bounded support, there exists $R_1 < R_2 = \poly(d)$ such that
\[
Pr_{\bx \sim \mu}\left(\snorm{\bx} \geq R_2\right) \leq \frac{\epsilon^2}{72B^2}~.
\]
Thus,
\begin{equation}
\label{eq:bounding r}
Pr_{r \sim \mu_r}(R_1 \leq r \leq R_2) \geq 1 - \frac{\epsilon^2}{36B^2}~.
\end{equation}

Since the network $N$ is bounded by $B$ then $N_\bu$ is also bounded by $B$, namely, for every $t \in \reals$ we have $|N_\bu(t)| \leq B$.
Moreover, since $N$ is $\poly(d)$-Lipschitz, then $N_\bu$ is also $\poly(d)$-Lipschitz.
Let $N'_\bu: \reals \rightarrow \reals$ be such that
\[
	N'_\bu(t) = \begin{cases}
                    0& t \leq \frac{R_1}{2}\\
                    \frac{2 N_\bu(R_1)}{R_1} \cdot t - N_\bu(R_1) &\frac{R_1}{2} < t \leq R_1\\
                    N_\bu(t)& R_1 < t \leq R_2\\
                    -\frac{N_\bu(R_2)}{R_2} \cdot t + 2N_\bu(R_2)&R_2 < t \leq 2R_2\\
                    0& t > 2R_2
                \end{cases}~.
\]
Note that $N'_\bu$ agrees with $N_\bu$ on $[R_1,R_2]$, supported on $\left[\frac{R_1}{2},2R_2\right]$, bounded by $B$, and $\poly(d)$-Lipschitz.
Let $h':\reals^d \rightarrow \reals$ be such that $h'(\bx) = N'_\bu(\snorm{\bx})$.
We have
\[
\E_{\bx \sim \mu}(h'(\bx)-h(\bx))^2
= \E_{\bx \sim \mu}(N'_\bu(\snorm{\bx})-N_\bu(\snorm{\bx}))^2
= \E_{r \sim \mu_r}(N'_\bu(r)-N_\bu(r))^2~.
\]
By Eq.~\ref{eq:bounding r} the functions $N_\bu$ and $N'_\bu$ agree w.p. at least $1-\frac{\epsilon^2}{36B^2}$. Also, since both $N_\bu$ and $N'_\bu$ are bounded by $B$, we have $|N_\bu(r)-N'_\bu(r)| \leq 2B$ for every $r$. Hence, the above is at most
\begin{equation}
\label{eq:radial triangle2}
\frac{\epsilon^2}{36B^2} \cdot (2B)^2 + 0
= \left(\frac{\epsilon}{3}\right)^2~.
\end{equation}

Now, we need the following Lemma.
\begin{lemma} \cite{eldan2016power}
\label{lemma:radial from eldan-shamir}
Let $f:\reals \rightarrow \reals$ be a $\poly(d)$-Lipschitz function supported on [r,R], where $r=\frac{1}{\poly(d)}$ and $R=\poly(d)$. Then, for every $\delta =\frac{1}{\poly(d)}$, there exists a neural network $\cn$ of depth $3$, width $\poly(d)$, and $\poly(d)$-bounded weights, such that
\[
\sup_{\bx \in \reals^d}|\cn(\bx)-f(\snorm{\bx})| \leq \delta~.
\]
\end{lemma}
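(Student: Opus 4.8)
The plan is to reproduce (a version of) the construction of \cite{eldan2016power}. Write the target as $\bx \mapsto \phi(\snorm{\bx}^2)$ where $\phi(t):=f(\sqrt{\max\{t,0\}})$, and build a depth-$3$ ReLU network that first computes, in its first hidden layer followed by a linear combination, a piecewise-linear approximation $S$ of $\snorm{\bx}^2$, and then applies, in its second hidden layer followed by the (bias-free) linear output layer, a piecewise-linear approximation $\tilde\phi$ of $\phi$; the output is $\cn(\bx)=\tilde\phi(S)\approx\phi(\snorm{\bx}^2)$. The preliminary observation I would establish is that $\phi$ is $L$-Lipschitz with $L=\poly(d)$ and bounded by $\poly(d)$: on $[r^2,R^2]$ this is the chain rule, since $\big|\tfrac{d}{dt}f(\sqrt t)\big| \le (\text{Lipschitz constant of }f)\cdot\tfrac{1}{2\sqrt t}\le(\text{Lipschitz constant of }f)\cdot\tfrac{1}{2r}=\poly(d)$ --- and here the hypothesis $r=\tfrac1{\poly(d)}$ is exactly what is used --- while off $[r^2,R^2]$ the function $\phi$ vanishes; boundedness follows since $f$ vanishes outside $[r,R]$, so $\sup|f|\le(\text{Lipschitz constant of }f)\cdot(R-r)=\poly(d)$.

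For the first hidden layer I would fix a parameter $\epsilon_1=\tfrac1{\poly(d)}$ (chosen at the end) and take $q:\reals\to\reals$ to be the even, convex, piecewise-linear interpolant of $x\mapsto x^2$ at a uniform grid of spacing $h=2\sqrt{\epsilon_1}$ covering $[-R,R]$, extended linearly outside. Since $(x^2)''=2$ this gives $|q(x)-x^2|\le\epsilon_1$ for $|x|\le R$, $q\ge 0$, $q(0)=0$, $q(x)\ge R^2$ for $|x|\ge R$, and $\poly(d)$ linear pieces; writing $q(x)=\sum_j c_j\big(\sigma(x-a_j)+\sigma(-x-a_j)\big)$ with the $a_j\ge 0$ on the grid and $|c_j|\le 2h$, the units $\sigma(\pm x_i-a_j)$ ($i\in[d]$) form the first hidden layer (width $\poly(d)$, incoming weights $\pm1$, biases in $[-R,0]$), and $S:=\sum_{i\in[d]}q(x_i)$ is a fixed linear combination of its outputs with coefficients $c_j$. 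I would then record two estimates: $|S-\snorm{\bx}^2|\le d\epsilon_1$ whenever $\snorm{\bx}\le R$; and, using $q(x_i)\ge\min\{x_i^2,R^2\}-\epsilon_1$ together with $\sum_i\min\{x_i^2,R^2\}\ge\min\{\snorm{\bx}^2,R^2\}=R^2$, $S\ge R^2-d\epsilon_1$ whenever $\snorm{\bx}>R$.

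For the second hidden layer and the output I would fix $\epsilon_2=\tfrac1{\poly(d)}$ and take $\tilde\phi$ to be the piecewise-linear interpolant of $\phi$ at a uniform grid $0=t_0<\cdots<t_M=R^2$ of spacing at most $\epsilon_2/(2L)$, set equal to $0$ for $t\ge R^2$. Then $\tilde\phi$ is $L$-Lipschitz, has $M=\poly(d)$ pieces, satisfies $|\tilde\phi(t)-\phi(t)|\le\epsilon_2$ for all $t\ge 0$ (for $t>R^2$ both vanish, as $\phi(R^2)=0$), and vanishes on $[0,t_1]$ (since $\phi\equiv0$ on $[0,r^2]\supseteq[0,t_1]$ once $\epsilon_2$ is small); hence $\tilde\phi(t)=\sum_\ell\gamma_\ell\,\sigma(t-t_\ell)$ with slope-changes $|\gamma_\ell|\le 2L=\poly(d)$ and breakpoints $t_\ell\in[0,R^2]$. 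The second hidden layer consists of the units $\sigma(S-t_\ell)$, each with the same incoming weight pattern $(c_j)$ as $S$ and bias $-t_\ell$ (all $\poly(d)$-bounded), and the output is $\cn(\bx)=\sum_\ell\gamma_\ell\,\sigma(S-t_\ell)=\tilde\phi(S)$, giving a width-$\poly(d)$, $\poly(d)$-weight, depth-$3$ network. The final error estimate splits on $\snorm{\bx}$: if $\snorm{\bx}\le R$ then $|\cn(\bx)-f(\snorm{\bx})|\le|\tilde\phi(S)-\phi(S)|+|\phi(S)-\phi(\snorm{\bx}^2)|\le\epsilon_2+Ld\epsilon_1$; and if $\snorm{\bx}>R$ then $f(\snorm{\bx})=0$ and $|\cn(\bx)|=|\tilde\phi(S)|\le|\phi(S)|+\epsilon_2\le Ld\epsilon_1+\epsilon_2$, using $S\ge R^2-d\epsilon_1$ and $\phi(R^2)=0$. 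Choosing $\epsilon_2=\delta/2$ and $\epsilon_1=\delta/(2Ld)$ (both $\tfrac1{\poly(d)}$) yields $\sup_{\bx\in\reals^d}|\cn(\bx)-f(\snorm{\bx})|\le\delta$.

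The routine parts are the interpolation-error bounds and the bookkeeping of widths and weight magnitudes. The step that needs the most care --- and the reason the conclusion is a uniform $\sup$ over all of $\reals^d$ rather than over a ball --- is the regime $\snorm{\bx}$ large: one must ensure the first layer never lets $S$ drift back into the range where $\tilde\phi$ is nonzero, which is why $q$ is extended so that it stays $\ge R^2$ outside $[-R,R]$ and why the inequality $\sum_i\min\{x_i^2,R^2\}\ge\min\{\snorm{\bx}^2,R^2\}$ enters, combined with the Lipschitz continuity of $\phi$ at $t=R^2$. Everything else is a standard two-stage piecewise-linear approximation.
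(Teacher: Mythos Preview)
Your proposal is correct. The paper does not provide its own proof of this lemma; it simply quotes the result from \cite{eldan2016power}, and what you have written is precisely (a clean rendition of) that construction: approximate $\snorm{\bx}^2$ by a sum of one-dimensional piecewise-linear squarers in the first hidden layer, then apply a piecewise-linear approximation of $t\mapsto f(\sqrt t)$ in the second hidden layer, with the Lipschitz bound on $\phi$ coming from $r\ge 1/\poly(d)$. Your handling of the regime $\snorm{\bx}>R$ (ensuring $S$ cannot fall back into the support of $\tilde\phi$) is the right point to flag, and your argument via $\sum_i \min\{x_i^2,R^2\}\ge\min\{\snorm{\bx}^2,R^2\}$ is valid; in fact, since the piecewise-linear interpolant of $x\mapsto x^2$ lies above $x^2$ on the interpolation range, one even gets $S\ge R^2$ exactly there, so $\cn(\bx)=0$ without the $Ld\epsilon_1$ slack --- but your weaker estimate already suffices.
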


Since $N'_\bu$ is $\poly(d)$-Lipschitz and supported on $\left[\frac{R_1}{2},2R_2\right]$, then by Lemma~\ref{lemma:radial from eldan-shamir} there exists a network $\cn$ of depth $3$, width $\poly(d)$, and $\poly(d)$-bounded weights , such that
\[
\sup_{\bx \in \reals^d}|\cn(\bx)-N'_\bu(\snorm{\bx})| \leq \frac{\epsilon}{3}~.
\]
Therefore, we have
\[
\snorm{\cn - h'}_{L_2(\mu)}
\leq \snorm{\cn - h'}_{\infty}
= \sup_{\bx \in \reals^d}|\cn(\bx)-N'_\bu(\snorm{\bx})|
\leq \frac{\epsilon}{3}~.
\]

Combining the above with Eq.~\ref{eq:radial triangle1} and~\ref{eq:radial triangle2}, we have
\[
\snorm{\cn-f}_{L_2(\mu)}
\leq \snorm{\cn-h'}_{L_2(\mu)} + \snorm{h'-h}_{L_2(\mu)} + \snorm{h-f}_{L_2(\mu)}
\leq \frac{\epsilon}{3} + \frac{\epsilon}{3} + \frac{\epsilon}{3}
= \epsilon~.
\]

\subsection{Proof of Theorem~\ref{thm:1d function}}

\begin{lemma}
\label{lemma:dimension 1 networks}
Let $f:\reals \rightarrow \reals$ be a function that can be implemented by a neural network of width $n$ and constant depth. Then, $f$ can be implemented by a network of width $\poly(n)$ and depth $2$.
\end{lemma}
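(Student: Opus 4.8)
The plan is to reduce to the one-dimensional structure theory of ReLU networks: a ReLU network of constant depth and width $n$ on a scalar input computes a continuous piecewise-linear function with only polynomially many pieces, and every such function is exactly representable by a depth-$2$ network of comparable width.

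First I would recall, exactly as invoked in the proof of Lemma~\ref{lemma:A_i}, that by \cite{telgarsky2015representation} a ReLU network with one-dimensional input, constant depth, and width $n$ computes a function that is continuous (ReLU is continuous, and the network is a composition of continuous maps) and piecewise linear with at most $M = \poly(n)$ pieces. Let $t_1 < t_2 < \cdots < t_{M}$ be an enumeration containing all breakpoints of $f$ (padding with irrelevant points if there are fewer), let $s x + c$ be the affine function agreeing with $f$ on the leftmost piece $(-\infty, t_1]$, and let $\delta_i \in \reals$ be the change in the slope of $f$ at $t_i$ (zero if $t_i$ is not an actual breakpoint). Then one has the identity
\[
f(x) = c \cdot 1 + s \cdot x + \sum_{i=1}^{M} \delta_i \, [x - t_i]_+ \qquad \text{for all } x \in \reals ,
\]
which follows because $[x-t_i]_+$ contributes $0$ to the left of $t_i$ and has slope $1$ to its right, so the right-hand side has exactly the slope of $f$ on every piece and agrees with $f$ on $(-\infty,t_1]$; by continuity the two sides coincide everywhere.

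It then remains to realize the right-hand side as a width-$\poly(n)$, depth-$2$ network. In the single hidden layer I would place: one neuron computing $[x - t_i]_+$ for each $i \in [M]$; two neurons computing $[x]_+$ and $[-x]_+$ (so that the linear combination $[x]_+ - [-x]_+$ yields the term $s x$); and one neuron with fan-in $0$ and bias $1$, which outputs the constant $1$ and hence yields the term $c$ --- this is the same gadget used elsewhere in the paper to absorb additive constants when the output layer is required to be bias-free. The output layer is the purely linear map $c \cdot 1 + s([x]_+ - [-x]_+) + \sum_{i \in [M]} \delta_i [x - t_i]_+$, which has no bias, as required by the convention in Section~\ref{sec:preliminaries}. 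This network has width $M + 3 = \poly(n)$ and depth $2$ and computes $f$ exactly, which proves the lemma. The only input requiring care is the polynomial piece-count bound for constant depth, which is precisely the fact already used in Section~\ref{sec:proofs}; the rest of the argument consists only of explicit gadgets of constant additional width, so I do not expect any real obstacle here.
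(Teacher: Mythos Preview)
Your proposal is correct and follows essentially the same approach as the paper: invoke \cite{telgarsky2015representation} for the $\poly(n)$ piece-count bound, then express the resulting piecewise-linear function explicitly as a linear combination of ReLUs plus a constant handled by a fan-in-$0$ neuron. The only cosmetic difference is the formula used---you write $f$ via the leftmost affine part plus slope-jump terms $\delta_i[x-t_i]_+$, while the paper uses per-piece pairs $\alpha_i([t-a_{i-1}]_+-[t-a_i]_+)$ together with a reflected ReLU $-\alpha_1[a_1-t]_+$ for the leftmost slope---but both are standard and yield width $\poly(n)$.
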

\begin{proof}
A neural network with input dimension $1$, constant depth, and width $n$, is piecewise linear with at most $\poly(n)$ pieces (\cite{telgarsky2015representation}). Therefore, $f$ consists of $m=\poly(n)$ linear pieces.

Let $-\infty = a_0 < a_1 < \ldots < a_{m-1} < a_m = \infty$ be such that $f$ is linear in every interval $(a_i,a_{i+1})$. For every $i \in [m]$ Let $\alpha_i$ be the derivative of $f$ in the linear interval $(a_{i-1},a_i)$.
Now, we have
\[
f(t) = f(a_1) - \alpha_1[a_1-t]_+ +  \sum_{2 \leq i \leq m-1} \left( \alpha_i[t-a_{i-1}]_+ - \alpha_i[t-a_i]_+ \right) + \alpha_m[t-a_{m-1}]_+~.
\]

Note that $f$ can be implemented by a network of depth $2$ and width $\poly(n)$.
In order to avoid bias in the output neuron, we implement the additive constant term $f(a_1)$ by adding a hidden neuron with fan-in $0$ and bias $1$, and connecting it to the output neuron with weight $f(a_1)$.
\end{proof}

Let $\epsilon = \frac{1}{\poly(d)}$.
Let $N:\reals^d \rightarrow \reals$ be a neural network of a constant depth and $\poly(d)$ width, such that $\E_{\bx \sim \mu}(N(\bx)-f(\bx))^2 \leq \left(\frac{\epsilon}{d}\right)^2$.
For $\bz \in \reals^{d-1}$ and $t \in \reals$ we denote $\bz_{i,t}=(z_1,\ldots,z_{i-1},t,z_i,\ldots,z_{d-1}) \in \reals^d$.
Since $\mu$ is such that the components are drawn independently, then for every $i \in [d]$ we have
\[
\E_{\bx \sim \mu}(N(\bx)-f(\bx))^2
= \E_{\bz \sim \mu_{[d]\setminus i}} \E_{t \sim \mu_i}(N(\bz_{i,t})-f(\bz_{i,t}))^2
\leq \left(\frac{\epsilon}{d}\right)^2~,
\]
and therefore for every $i$ there exists $\by \in \reals^{d-1}$ such that
\[
\E_{t \sim \mu_i}(N(\by_{i,t})-f(\by_{i,t}))^2 \leq \left(\frac{\epsilon}{d}\right)^2~.
\]
Let $f'_i:\reals \rightarrow \reals$ such that
\[
f'_i(t) = N(\by_{i,t}) - \sum_{j \in [d] \setminus \{i\}} f_j((\by_{i,t})_j)~.
\]
Note that
\begin{align}
\label{eq:f'i and fi}
\E_{t \sim \mu_i} \left(f'_i(t)-f_i(t)\right)^2
&= \E_{t \sim \mu_i} \left(N(\by_{i,t}) - \sum_{j \in [d] \setminus \{i\}} f_j((\by_{i,t})_j) - f_i(t)\right)^2 \nonumber
\\
&= \E_{t \sim \mu_i} \left(N(\by_{i,t}) - f(\by_{i,t})\right)^2
\leq \left(\frac{\epsilon}{d}\right)^2~.
\end{align}

Now, the function $f'_i$ can be implemented by a neural network of depth $2$ and width $\poly(d)$ as follows.
First, note the by Lemma~\ref{lemma:dimension 1 networks} it is sufficient to show that $f'_i$ can be implemented by a network $N'_i$ of a constant depth and $\poly(d)$ width.
Since $N$ is a network of constant depth, $\by$ is a constant, and $f_j((\by_{i,t})_j)$ for $j \in [d] \setminus \{i\}$ are also constants, implementing such $N'_i$ is straightforward.

Let $N'$ be the depth-$2$, width-$\poly(d)$ network such that $N'(\bx)=\sum_{i \in [d]}f'_i(x_i)$. This network is obtained from the networks for $f'_i$.
For every $i \in [d]$ let $g_i:\reals^d \rightarrow \reals$ be such that $g_i(\bx) = f_i(x_i)$. Also, let $g'_i:\reals^d \rightarrow \reals$ be such that $g'_i(\bx) = f'_i(x_i)$.
Note that $f(\bx) = \sum_{i \in [d]}g_i(\bx)$ and $N'(\bx) = \sum_{i \in [d]}g'_i(\bx)$.
Now, by Eq.~\ref{eq:f'i and fi}, for every $i \in [d]$ we have
\[
\E_{\bx \sim \mu}\left(g'_i(\bx)-g_i(\bx)\right)^2
= \E_{t \sim \mu_i}\left(f'_i(t) - f_i(t)\right)^2
\leq \left(\frac{\epsilon}{d}\right)^2~.
\]
Therefore, $\snorm{g'_i-g_i}_{L_2(\mu)} \leq \frac{\epsilon}{d}$.

Hence, we have
\[
\snorm{N'-f}_{L_2(\mu)}
= \norm{\sum_{i \in [d]}g'_i - \sum_{i \in [d]}g_i}_{L_2(\mu)}
\leq \sum_{i \in [d]} \snorm{g'_i - g_i}_{L_2(\mu)}
\leq d \cdot \frac{\epsilon}{d}
= \epsilon~.
\]

\subsection*{Acknowledgements}
This research is supported in part by European Research Council (ERC) grant 754705.

%\setcitestyle{numbers}
\bibliographystyle{abbrvnat}
\bibliography{bib}

\begin{thebibliography}{32}
\providecommand{\natexlab}[1]{#1}
\providecommand{\url}[1]{\texttt{#1}}
\expandafter\ifx\csname urlstyle\endcsname\relax
  \providecommand{\doi}[1]{doi: #1}\else
  \providecommand{\doi}{doi: \begingroup \urlstyle{rm}\Url}\fi

\bibitem[Arora and Barak(2009)]{arora2009computational}
S.~Arora and B.~Barak.
\newblock \emph{Computational complexity: a modern approach}.
\newblock Cambridge University Press, 2009.

\bibitem[Barron(1994)]{barron1994approximation}
A.~R. Barron.
\newblock Approximation and estimation bounds for artificial neural networks.
\newblock \emph{Machine learning}, 14\penalty0 (1):\penalty0 115--133, 1994.

\bibitem[Bartlett et~al.(2017)Bartlett, Foster, and
  Telgarsky]{bartlett2017spectrally}
P.~L. Bartlett, D.~J. Foster, and M.~J. Telgarsky.
\newblock Spectrally-normalized margin bounds for neural networks.
\newblock In \emph{Advances in Neural Information Processing Systems}, pages
  6240--6249, 2017.

\bibitem[Bishop(2006)]{bishop2006pattern}
C.~M. Bishop.
\newblock \emph{Pattern recognition and machine learning}.
\newblock springer, 2006.

\bibitem[Cybenko(1989)]{cybenko1989approximation}
G.~Cybenko.
\newblock Approximation by superpositions of a sigmoidal function.
\newblock \emph{Mathematics of control, signals and systems}, 2\penalty0
  (4):\penalty0 303--314, 1989.

\bibitem[Daniely(2017)]{daniely2017depth}
A.~Daniely.
\newblock Depth separation for neural networks.
\newblock \emph{arXiv preprint arXiv:1702.08489}, 2017.

\bibitem[Eldan and Shamir(2016)]{eldan2016power}
R.~Eldan and O.~Shamir.
\newblock The power of depth for feedforward neural networks.
\newblock In \emph{Conference on Learning Theory}, pages 907--940, 2016.

\bibitem[Fang(2018)]{fang2018symmetric}
K.~W. Fang.
\newblock \emph{Symmetric multivariate and related distributions}.
\newblock Chapman and Hall/CRC, 2018.

\bibitem[Funahashi(1989)]{funahashi1989approximate}
K.-I. Funahashi.
\newblock On the approximate realization of continuous mappings by neural
  networks.
\newblock \emph{Neural networks}, 2\penalty0 (3):\penalty0 183--192, 1989.

\bibitem[Goldmann and Karpinski(1998)]{goldmann1998simulating}
M.~Goldmann and M.~Karpinski.
\newblock Simulating threshold circuits by majority circuits.
\newblock \emph{SIAM Journal on Computing}, 27\penalty0 (1):\penalty0 230--246,
  1998.

\bibitem[Goldmann et~al.(1992)Goldmann, H{\aa}stad, and
  Razborov]{goldmann1992majority}
M.~Goldmann, J.~H{\aa}stad, and A.~Razborov.
\newblock Majority gates vs. general weighted threshold gates.
\newblock \emph{Computational Complexity}, 2\penalty0 (4):\penalty0 277--300,
  1992.

\bibitem[Golowich et~al.(2017)Golowich, Rakhlin, and Shamir]{golowich2017size}
N.~Golowich, A.~Rakhlin, and O.~Shamir.
\newblock Size-independent sample complexity of neural networks.
\newblock \emph{arXiv preprint arXiv:1712.06541}, 2017.

\bibitem[Goodfellow et~al.(2016)Goodfellow, Bengio, and
  Courville]{goodfellow2016deep}
I.~Goodfellow, Y.~Bengio, and A.~Courville.
\newblock \emph{Deep learning}.
\newblock MIT press, 2016.

\bibitem[Hajnal et~al.(1987)Hajnal, Maass, Pudlak, Szegedy, and
  Turan]{hajnal1987threshold}
A.~Hajnal, W.~Maass, P.~Pudlak, M.~Szegedy, and G.~Turan.
\newblock Threshold circuits of bounded depth.
\newblock In \emph{Proceedings of the 28th Annual Symposium on Foundations of
  Computer Science}, pages 99--110, 1987.

\bibitem[Hornik(1991)]{hornik1991approximation}
K.~Hornik.
\newblock Approximation capabilities of multilayer feedforward networks.
\newblock \emph{Neural networks}, 4\penalty0 (2):\penalty0 251--257, 1991.

\bibitem[Krause and Lucks(2001)]{krause2001pseudorandom}
M.~Krause and S.~Lucks.
\newblock Pseudorandom functions in tc\({}^{\mbox{0}}\) and cryptographic
  limitations to proving lower bounds.
\newblock \emph{computational complexity}, 10\penalty0 (4):\penalty0 297--313,
  2001.

\bibitem[Liang and Srikant(2016)]{liang2016deep}
S.~Liang and R.~Srikant.
\newblock Why deep neural networks for function approximation?
\newblock \emph{arXiv preprint arXiv:1610.04161}, 2016.

\bibitem[Maass(1997)]{maass1997bounds}
W.~Maass.
\newblock Bounds for the computational power and learning complexity of analog
  neural nets.
\newblock \emph{SIAM Journal on Computing}, 26\penalty0 (3):\penalty0 708--732,
  1997.

\bibitem[Martens et~al.(2013)Martens, Chattopadhya, Pitassi, and
  Zemel]{martens2013representational}
J.~Martens, A.~Chattopadhya, T.~Pitassi, and R.~Zemel.
\newblock On the representational efficiency of restricted boltzmann machines.
\newblock In \emph{Advances in Neural Information Processing Systems}, pages
  2877--2885, 2013.

\bibitem[Naor and Reingold(2004)]{naor2004number}
M.~Naor and O.~Reingold.
\newblock Number-theoretic constructions of efficient pseudo-random functions.
\newblock \emph{Journal of the ACM (JACM)}, 51\penalty0 (2):\penalty0 231--262,
  2004.

\bibitem[Neyshabur et~al.(2017)Neyshabur, Bhojanapalli, McAllester, and
  Srebro]{neyshabur2017exploring}
B.~Neyshabur, S.~Bhojanapalli, D.~McAllester, and N.~Srebro.
\newblock Exploring generalization in deep learning.
\newblock In \emph{Advances in Neural Information Processing Systems}, pages
  5947--5956, 2017.

\bibitem[Razborov(1992)]{razborov1992small}
A.~A. Razborov.
\newblock On small depth threshold circuits.
\newblock In \emph{Scandinavian Workshop on Algorithm Theory}, pages 42--52.
  Springer, 1992.

\bibitem[Razborov and Rudich(1997)]{razborov1997natural}
A.~A. Razborov and S.~Rudich.
\newblock Natural proofs.
\newblock \emph{Journal of Computer and System Sciences}, 55\penalty0
  (1):\penalty0 24--35, 1997.

\bibitem[Safran and Shamir(2017)]{safran2017depth}
I.~Safran and O.~Shamir.
\newblock Depth-width tradeoffs in approximating natural functions with neural
  networks.
\newblock In \emph{Proceedings of the 34th International Conference on Machine
  Learning-Volume 70}, pages 2979--2987. JMLR. org, 2017.

\bibitem[Safran et~al.(2019)Safran, Eldan, and Shamir]{safran2019depth}
I.~Safran, R.~Eldan, and O.~Shamir.
\newblock Depth separations in neural networks: What is actually being
  separated?
\newblock \emph{arXiv preprint arXiv:1904.06984}, 2019.

\bibitem[Salimans and Kingma(2016)]{salimans2016weight}
T.~Salimans and D.~P. Kingma.
\newblock Weight normalization: A simple reparameterization to accelerate
  training of deep neural networks.
\newblock In \emph{Advances in neural information processing systems}, pages
  901--909, 2016.

\bibitem[Siu and Bruck(1992)]{siu1992neural}
K.-Y. Siu and J.~Bruck.
\newblock Neural computing with small weights.
\newblock In \emph{Advances in Neural Information Processing Systems}, pages
  944--949, 1992.

\bibitem[Siu and Roychowdhury(1994)]{siu1994optimal}
K.-Y. Siu and V.~P. Roychowdhury.
\newblock On optimal depth threshold circuits for multiplication and related
  problems.
\newblock \emph{SIAM Journal on discrete Mathematics}, 7\penalty0 (2):\penalty0
  284--292, 1994.

\bibitem[Telgarsky(2015)]{telgarsky2015representation}
M.~Telgarsky.
\newblock Representation benefits of deep feedforward networks.
\newblock \emph{arXiv preprint arXiv:1509.08101}, 2015.

\bibitem[Telgarsky(2016)]{telgarsky2016benefits}
M.~Telgarsky.
\newblock Benefits of depth in neural networks.
\newblock \emph{arXiv preprint arXiv:1602.04485}, 2016.

\bibitem[Xu and Wang(2018)]{xu2018understanding}
Y.~Xu and X.~Wang.
\newblock Understanding weight normalized deep neural networks with rectified
  linear units.
\newblock In \emph{Advances in Neural Information Processing Systems}, pages
  130--139, 2018.

\bibitem[Yarotsky(2017)]{yarotsky2017error}
D.~Yarotsky.
\newblock Error bounds for approximations with deep relu networks.
\newblock \emph{Neural Networks}, 94:\penalty0 103--114, 2017.

\end{thebibliography}

\appendix
%\addcontentsline{toc}{section}{Appendices}
%\section*{Appendices}

\section{Almost-bounded conditional density}
\label{app:almost-bounded conditional density}

In this section we show for some common distributions that they indeed have almost-bounded conditional densities.

\subsection{Gaussians, mixtures of Gaussians and Gaussian smoothing}

We use the following property of conditional normal distributions.

\begin{lemma} (e.g., \cite{bishop2006pattern})
\label{lemma:conditional normal}
Let $\cn(\mu,\Sigma)$ be a multivariate normal distribution on $\reals^d$.
For $\bx \in \reals^d$ we partition $\bx$ such that $\bx = (x_a,x_b)$, where $x_a \in \reals^q$ and $x_b \in \reals^{d-q}$.
Accordingly, we also partition $\mu = (\mu_a,\mu_b)$ and
\[
\Sigma = \begin{bmatrix}
        \Sigma_{aa} & \Sigma_{ab} \\
        \Sigma_{ba} & \Sigma_{bb}
        \end{bmatrix}~,
\]
where the dimensions of the mean vectors and the covariance matrix sub-blocks are chosen to match the sizes of $x_a,x_b$.
Let $\Lambda = \Sigma^{-1}$. We denote its partition that correspond to the partition of $\bx$ by
\[
\Lambda = \begin{bmatrix}
        \Lambda_{aa} & \Lambda_{ab} \\
        \Lambda_{ba} & \Lambda_{bb}
        \end{bmatrix}~.
\]
Then, the distribution of $x_a$ conditional on $x_b=\bc$ is the normal distribution $\cn(\bar{\mu},\bar{\Sigma})$, where
\[
\bar{\mu}
= \mu_a - \Lambda_{aa}^{-1}\Lambda_{ab}(\bc - \mu_b)
= \mu_a + \Sigma_{ab}\Sigma_{bb}^{-1}(\bc-\mu_b)~,
\]
and
\[
\bar{\Sigma}
= \Lambda_{aa}^{-1}
= \Sigma_{aa}-\Sigma_{ab}\Sigma_{bb}^{-1}\Sigma_{ba}~.
\]
\end{lemma}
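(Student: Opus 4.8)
The plan is to prove the conditional formula by directly manipulating the joint Gaussian density and completing the square in $x_a$, and then to reconcile the two stated forms of $\bar\mu$ and $\bar\Sigma$ via the standard block-matrix (Schur complement) inversion identities. Since this is a textbook fact I would ultimately cite \cite{bishop2006pattern} for the full details and present only the computation sketch below.

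First I would write the joint density of $\bx=(x_a,x_b)$, up to normalization, as $\exp\!\left(-\tfrac12(\bx-\mu)^\top\Lambda(\bx-\mu)\right)$, and expand the quadratic form using the block structure of $\Lambda$ together with its symmetry ($\Lambda_{ba}=\Lambda_{ab}^\top$):
\[
(\bx-\mu)^\top\Lambda(\bx-\mu) = (x_a-\mu_a)^\top\Lambda_{aa}(x_a-\mu_a) + 2(x_a-\mu_a)^\top\Lambda_{ab}(x_b-\mu_b) + (x_b-\mu_b)^\top\Lambda_{bb}(x_b-\mu_b).
\]
Fixing $x_b=\bc$ and viewing the right-hand side as a function of $x_a$ alone, it is a quadratic whose second-order term is $x_a^\top\Lambda_{aa}x_a$. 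The conditional density of $x_a$ given $x_b=\bc$ is proportional to $\exp$ of minus one half this quadratic, so it is Gaussian with precision matrix $\Lambda_{aa}$, i.e.\ covariance $\bar\Sigma=\Lambda_{aa}^{-1}$. Completing the square in $x_a$ then pins down the center: the cross term forces $\bar\mu=\mu_a-\Lambda_{aa}^{-1}\Lambda_{ab}(\bc-\mu_b)$. This gives the first (precision-matrix) forms in the lemma.

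To obtain the $\Sigma$-block forms, I would invoke the block-inverse identities. With the Schur complement $\Sigma/\Sigma_{bb}=\Sigma_{aa}-\Sigma_{ab}\Sigma_{bb}^{-1}\Sigma_{ba}$, one has $\Lambda_{aa}=(\Sigma/\Sigma_{bb})^{-1}$ and $\Lambda_{ab}=-(\Sigma/\Sigma_{bb})^{-1}\Sigma_{ab}\Sigma_{bb}^{-1}$. Substituting these yields $\bar\Sigma=\Lambda_{aa}^{-1}=\Sigma_{aa}-\Sigma_{ab}\Sigma_{bb}^{-1}\Sigma_{ba}$ and $\Lambda_{aa}^{-1}\Lambda_{ab}=-\Sigma_{ab}\Sigma_{bb}^{-1}$, hence $\bar\mu=\mu_a+\Sigma_{ab}\Sigma_{bb}^{-1}(\bc-\mu_b)$, as claimed.

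An alternative route, which avoids the precision matrix entirely, is the affine-change-of-variables trick: set $\by=x_a-\Sigma_{ab}\Sigma_{bb}^{-1}x_b$, observe that $(\by,x_b)$ is jointly Gaussian with $\mathrm{Cov}(\by,x_b)=\Sigma_{ab}-\Sigma_{ab}\Sigma_{bb}^{-1}\Sigma_{bb}=0$, so $\by$ is independent of $x_b$; then conditioning on $x_b=\bc$ leaves the law of $\by$ unchanged, and $x_a=\by+\Sigma_{ab}\Sigma_{bb}^{-1}\bc$ gives the conditional mean $\E[\by]+\Sigma_{ab}\Sigma_{bb}^{-1}\bc=\mu_a+\Sigma_{ab}\Sigma_{bb}^{-1}(\bc-\mu_b)$ and conditional covariance $\mathrm{Var}(\by)=\Sigma_{aa}-\Sigma_{ab}\Sigma_{bb}^{-1}\Sigma_{ba}$. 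In either approach the only real obstacle is bookkeeping — carefully verifying the Schur-complement identities (or the covariance cancellation in the affine route) — while the rest is routine Gaussian algebra.
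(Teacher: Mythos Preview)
Your proposal is correct and in fact more detailed than the paper: the paper does not prove this lemma at all but simply states it as a known fact with a citation to \cite{bishop2006pattern}. Your sketch — completing the square in the precision-matrix quadratic form and then translating via the Schur-complement identities, with the affine decorrelation trick as an alternative — is exactly the standard textbook derivation one would find in that reference, so there is nothing to compare or correct.
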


\begin{proposition}
\label{prop:gaussian}
Let $\delta = \frac{1}{\poly(d)}$.
Let $\Sigma$ be a positive definite matrix of size $d \times d$ whose minimal eigenvalue is at least $\delta$.
Let $\mu \in \reals^d$.
Then, the multivariate normal distribution $\cn(\mu,\Sigma)$ has an almost-bounded conditional density.
\end{proposition}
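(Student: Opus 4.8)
The plan is to apply Lemma~\ref{lemma:conditional normal} to a single coordinate, and then use the elementary fact that the maximum value of a univariate Gaussian density depends only on its variance, not on its mean; this will make the supremum in the definition of almost-bounded conditional density a deterministic quantity, so the required probability bound will hold trivially.

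First I would fix $i \in [d]$ and apply Lemma~\ref{lemma:conditional normal} with $x_a = x_i$ (so that $q=1$) and $x_b = \bx^i := (x_1,\ldots,x_{i-1},x_{i+1},\ldots,x_d)$. Writing $\Lambda = \Sigma^{-1}$, the lemma gives that for every conditioning value $\bc \in \reals^{d-1}$, the conditional law of $x_i$ given $\bx^i = \bc$ is a univariate normal with variance $\bar\Sigma = \Lambda_{aa}^{-1} = \Lambda_{ii}^{-1}$ and some mean $\bar\mu(\bc)$. The point is that this variance is the same for every $\bc$, hence so is
\[
\sup_{t \in \reals} \mu_{i \mid [d]\setminus i}(t \mid \bx^i) = \frac{1}{\sqrt{2\pi\,\Lambda_{ii}^{-1}}} = \sqrt{\frac{\Lambda_{ii}}{2\pi}},
\]
a value that does not depend on $\bx^i$ at all.

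It then remains to bound $\Lambda_{ii}$ by a polynomial in $d$. Since $\Sigma$ is positive definite with smallest eigenvalue at least $\delta$, we have $\Sigma \succeq \delta I$, hence $\Lambda = \Sigma^{-1} \preceq \delta^{-1} I$, so in particular $\Lambda_{ii} \le \|\Lambda\|_{\mathrm{op}} \le \delta^{-1}$. Setting $M = \sqrt{\tfrac{1}{2\pi\delta}}$, which is $\poly(d)$ because $\delta = \tfrac{1}{\poly(d)}$, we obtain $\sup_t \mu_{i\mid [d]\setminus i}(t \mid \bx^i) \le M$ for every $i$ and every $\bx^i$. Consequently, for any $\epsilon = \tfrac{1}{\poly(d)}$ the event $\{\sup_t \mu_{i\mid[d]\setminus i}(t \mid x_1,\ldots,x_{i-1},x_{i+1},\ldots,x_d) > M\}$ is empty and so has probability $0 \le \epsilon$, which shows that $\cn(\mu,\Sigma)$ has an almost-bounded conditional density.

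I do not expect any serious obstacle here: the two points that need care are that $\bar\Sigma = \Lambda_{aa}^{-1}$ reduces to the scalar $\Lambda_{ii}^{-1}$ when $x_a$ is one-dimensional, and that the supremum of a Gaussian density is controlled by its variance alone (so the randomness of the remaining coordinates is irrelevant), together with the standard inequality $\Lambda_{ii} \le \|\Lambda\|_{\mathrm{op}} = (\lambda_{\min}(\Sigma))^{-1}$.
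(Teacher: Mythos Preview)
Your proposal is correct and follows essentially the same approach as the paper: apply Lemma~\ref{lemma:conditional normal} with $q=1$, observe that the conditional variance $\Lambda_{ii}^{-1}$ is independent of the conditioning value, and bound $\Lambda_{ii}$ in terms of $\delta$. The only (minor) difference is that the paper bounds $\Lambda_{ii}$ via the trace argument $\Lambda_{ii} \le \text{trace}(\Lambda) \le d/\delta$, whereas your bound $\Lambda_{ii} \le \|\Lambda\|_{\mathrm{op}} \le 1/\delta$ is slightly sharper and more direct.
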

\begin{proof}
Let $\Lambda = \Sigma^{-1}$.
Let $\lambda_1,\ldots,\lambda_d$ be the eigenvalues of $\Sigma$.
The eigenvalues of $\Lambda$ are $\lambda_1^{-1},\ldots,\lambda_d^{-1}$ and are at most $M=\frac{1}{\delta}$. Thus, $\text{trace}(\Lambda) = \sum_{i \in [d]}\lambda_i^{-1} \leq dM$.
Since $\Lambda$ is positive definite then all entries on its diagonal are positive, and since their sum is bounded by $dM$, then we have $0 < \Lambda_{ii} \leq dM$ for every $i \in [d]$.

Let $\bx \sim \cn(\mu,\Sigma)$, let $\bc \in \reals^{d-1}$, and let $i \in [d]$. We now consider the conditional distribution $x_i \; | \; x_1,\ldots,x_{i-1},x_{i+1},\ldots,x_d = \bc$.
This conditional distribution corresponds to Lemma~\ref{lemma:conditional normal} with $q=1$. Namely, this is a univariate normal distribution with variance $\Lambda_{aa}^{-1}$ where $\Lambda_{aa} \in \reals$. Since all entries on the diagonal of $\Lambda$ are bounded by $dM$, then the variance $\sigma^2$ of the conditional distribution satisfies $\sigma^2 \geq (dM)^{-1}$.
Since the density of a univariate normal distribution with variance $\sigma^2$ is bounded by $\frac{1}{\sqrt{2 \pi \sigma^2}}$, then the density of the conditional distribution is at most
$\frac{1}{\sqrt{2 \pi \sigma^2}} \leq \sqrt{\frac{dM}{2 \pi}} = \poly(d)$.
\end{proof}

We now consider Gaussian mixtures.

\begin{proposition}
Let $\Sigma_1,\ldots,\Sigma_k$ be positive definite matrices with eigenvalues at least $\delta = \frac{1}{\poly(d)}$.
Let $\mu_1,\ldots,\mu_k$ be vectors in $\reals^d$.
For $j \in [k]$ let $f^j$ be the density function of the normal distribution $\cn(\mu_j,\Sigma_j)$. Let $f$ be a density function such that $f(\bx) = \sum_{j \in [k]}w_j f^j(\bx)$ with $\sum_{j \in [k]}w_j = 1$.
Then, $f$ has an almost-bounded conditional density.
\end{proposition}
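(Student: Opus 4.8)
The plan is to reduce the claim to the single-Gaussian case (Proposition~\ref{prop:gaussian}) by using the elementary fact that conditioning a finite mixture yields a mixture of the component conditionals, with data-dependent posterior weights.

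First I would fix $i\in[d]$ and write $\bx=(\bx^i,x_i)$ with $\bx^i=(x_1,\ldots,x_{i-1},x_{i+1},\ldots,x_d)$. Since marginals and conditionals of a multivariate Gaussian are again Gaussian, for each $j\in[k]$ we may factor
\[
f^j(\bx)=f^j_{[d]\setminus i}(\bx^i)\cdot f^j_{i\mid[d]\setminus i}(x_i\mid\bx^i),
\]
where $f^j_{[d]\setminus i}$ is the density of a proper $(d-1)$-dimensional Gaussian (its covariance is a principal submatrix of $\Sigma_j$, hence still has minimal eigenvalue at least $\delta$), and $f^j_{i\mid[d]\setminus i}(\cdot\mid\bx^i)$ is a univariate Gaussian. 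Each $f^j_{[d]\setminus i}$ is strictly positive everywhere, so the mixture marginal $f_{[d]\setminus i}(\bx^i)=\sum_j w_j f^j_{[d]\setminus i}(\bx^i)$ is strictly positive everywhere (the $w_j$ are nonnegative and sum to $1$), and the conditional density of $x_i$ under $f$ is well defined and equals
\[
f_{i\mid[d]\setminus i}(t\mid\bx^i)
=\frac{\sum_{j\in[k]}w_j f^j_{[d]\setminus i}(\bx^i)\,f^j_{i\mid[d]\setminus i}(t\mid\bx^i)}{\sum_{j\in[k]}w_j f^j_{[d]\setminus i}(\bx^i)}
=\sum_{j\in[k]}\tilde w_j(\bx^i)\,f^j_{i\mid[d]\setminus i}(t\mid\bx^i),
\]
where $\tilde w_j(\bx^i)=w_j f^j_{[d]\setminus i}(\bx^i)\big/\sum_{j'}w_{j'}f^{j'}_{[d]\setminus i}(\bx^i)$ are nonnegative and sum to $1$. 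Consequently $\sup_{t\in\reals} f_{i\mid[d]\setminus i}(t\mid\bx^i)\le\max_{j\in[k]}\sup_{t\in\reals} f^j_{i\mid[d]\setminus i}(t\mid\bx^i)$.

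Next I would bound each component conditional uniformly, exactly as in the proof of Proposition~\ref{prop:gaussian}: writing $\Lambda^{(j)}=\Sigma_j^{-1}$, the eigenvalues of $\Lambda^{(j)}$ are at most $1/\delta$, so $\text{trace}(\Lambda^{(j)})\le d/\delta$, and since $\Lambda^{(j)}$ is positive definite we get $0<\Lambda^{(j)}_{ii}\le d/\delta$. By Lemma~\ref{lemma:conditional normal} with $q=1$, the conditional $f^j_{i\mid[d]\setminus i}(\cdot\mid\bx^i)$ is a univariate Gaussian with variance $(\Lambda^{(j)}_{ii})^{-1}\ge\delta/d$, hence its density is bounded by $\sqrt{d/(2\pi\delta)}=\poly(d)$. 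This bound depends neither on $j$ nor on $i$ nor on $\bx^i$, so taking $M=\sqrt{d/(2\pi\delta)}$ we obtain $\sup_{t}f_{i\mid[d]\setminus i}(t\mid\bx^i)\le M$ for every $i$ and every $\bx^i$; thus for every $\epsilon=1/\poly(d)$ the event in the definition of almost-bounded conditional density has probability $0$ with this $M$, which proves the claim.

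The argument is mostly bookkeeping once the mixture-of-conditionals identity is in place; the one point that needs genuine care is verifying that passing to the conditional does not shrink the variance below $1/\poly(d)$ --- equivalently, that the diagonal of the precision matrix stays $\poly(d)$-bounded --- which is precisely the computation already carried out for a single Gaussian, together with checking that the posterior weights $\tilde w_j$ form a genuine convex combination (positivity of the mixture marginal). Note that no assumption on $k$ is required, since the bound $M$ is independent of $k$.
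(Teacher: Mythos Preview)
Your proof is correct and essentially identical to the paper's: both reduce to the single-Gaussian bound from Proposition~\ref{prop:gaussian} and then observe that the mixture conditional is dominated by the same $M$. The only cosmetic difference is that you phrase the last step as ``the mixture conditional is a convex combination (with posterior weights $\tilde w_j$) of the component conditionals, hence bounded by their max,'' whereas the paper writes the equivalent one-line ratio inequality $\frac{\sum_j w_j f^j(\bc_{i,t})}{\sum_j w_j \int f^j(\bc_{i,t})\,dt}\le\frac{\sum_j w_j M\int f^j(\bc_{i,t})\,dt}{\sum_j w_j \int f^j(\bc_{i,t})\,dt}=M$; these are the same computation.
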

\begin{proof}
Let $i \in [d]$ and let $\bc \in \reals^{d-1}$. For $t \in \reals$ we denote $\bc_{i,t} = (c_1,\ldots,c_{i-1},t,c_i,\ldots,c_{d-1}) \in \reals^d$.
As we showed in the proof of Proposition~\ref{prop:gaussian}, there is $M = \poly(d)$ (that depends on $\delta$) such that for every $j \in [k]$ we have
\[
f^j_{i|[d] \setminus i}(t | \bc)
= \frac{f^j(\bc_{i,t})}{\int_{\reals}f^j(\bc_{i,t})dt}
\leq M~.
\]
Hence, we have
\[
f_{i|[d] \setminus i}(t | \bc)
= \frac{\sum_{j \in [k]} w_j f^j(\bc_{i,t})}{\int_{\reals}\sum_{j \in [k]} w_jf^j(\bc_{i,t})dt}
\leq \frac{\sum_{j \in [k]} w_j M \int_{\reals}f^j(\bc_{i,t})dt}{\sum_{j \in [k]} w_j\int_{\reals}f^j(\bc_{i,t})dt}
= M~.
\]
\end{proof}

Likewise, we show that the density obtained by applying Gaussian smoothing to a density function, has an almost-bounded conditional density.

\begin{proposition}
\label{prop:smoothing}
Let $\delta = \frac{1}{\poly(d)}$, and let $\Sigma$ be a positive definite matrix of size $d \times d$ whose minimal eigenvalue is at least $\delta$.
Let $g$ be the density function of the multivariate normal distribution $\cn(\zero,\Sigma)$.
Let $f$ be a density function and let $f' = f \star g$ be the convolution of $f$ and $g$. That is, $f'$ is the density function obtained from $f$ by Gaussian smoothing.
Then, $f'$ has an almost-bounded conditional density.
\end{proposition}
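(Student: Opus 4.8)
The plan is to prove the stronger statement that $f'$ has a conditional density bounded \emph{everywhere} by some $M' = \poly(d)$, which trivially implies the almost-bounded conditional density property (the probability appearing in the definition will be $0$). Write $f'(\bx) = \int_{\reals^d} f(\by)\,g(\bx-\by)\,d\by$, fix an index $i \in [d]$ and a point $\bc \in \reals^{d-1}$, and, using the notation $\bc_{i,t}$ introduced earlier (coordinate $i$ set to $t$, the remaining $d-1$ coordinates equal to $\bc$), set $F(t) := f'(\bc_{i,t})$. Then $f'_{i|[d]\setminus i}(t\mid \bc) = F(t)\big/\int_{\reals} F(s)\,ds$, so it suffices to bound $\sup_t F(t)$ by $M' \int_\reals F(s)\,ds$.

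Decompose the Gaussian density as $g(\bw) = g_{i|[d]\setminus i}(w_i \mid \bw_{[d]\setminus i})\cdot g_{[d]\setminus i}(\bw_{[d]\setminus i})$ and apply it with $\bw = \bc_{i,t} - \by$. The key point is that the coordinates of $\bc_{i,t}$ other than the $i$-th do not depend on $t$ (they equal $\bc$), so the factor $g_{[d]\setminus i}(\bw_{[d]\setminus i})$ depends only on $\bc$ and on $\by_{[d]\setminus i}$, and all of the $t$-dependence sits inside $g_{i|[d]\setminus i}(t - y_i \mid \bw_{[d]\setminus i})$. By the computation in the proof of Proposition~\ref{prop:gaussian} applied to $\cn(\zero,\Sigma)$ — with $M := 1/\delta = \poly(d)$ — for every conditioning value the density $g_{i|[d]\setminus i}(\cdot\mid\bw_{[d]\setminus i})$ is a univariate normal with variance at least $(dM)^{-1}$, hence is bounded everywhere by $\sqrt{dM/(2\pi)}$ (this bound is uniform in the conditioning value, since a Gaussian conditional variance does not depend on it). Pulling this uniform bound out of the integral (all integrands are nonnegative, so Tonelli applies),
\[
F(t) \leq \sqrt{\frac{dM}{2\pi}}\int_{\reals^d} f(\by)\,g_{[d]\setminus i}(\bc - \by_{[d]\setminus i})\,d\by = \sqrt{\frac{dM}{2\pi}}\,\big(f_{[d]\setminus i}\star g_{[d]\setminus i}\big)(\bc),
\]
where in the last step we integrated out $y_i$, turning $f$ into its marginal $f_{[d]\setminus i}$.

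Finally, the marginal of a convolution is the convolution of the marginals, so $f_{[d]\setminus i}\star g_{[d]\setminus i} = f'_{[d]\setminus i}$, and $f'_{[d]\setminus i}(\bc) = \int_\reals f'(\bc_{i,t})\,dt = \int_\reals F(t)\,dt$. Hence $\sup_t F(t) \leq \sqrt{dM/(2\pi)}\int_\reals F(t)\,dt$, i.e. $\sup_t f'_{i|[d]\setminus i}(t\mid\bc) \leq \sqrt{dM/(2\pi)} = \poly(d)$ for every $i$ and every $\bc$ where the conditional density is defined. Taking $M' = \sqrt{dM/(2\pi)}$, the event in the definition of almost-bounded conditional density has probability $0 \leq \epsilon$ for every $\epsilon = 1/\poly(d)$. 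The main (minor) obstacle is the bookkeeping of which coordinates of $\bc_{i,t}-\by$ carry the $t$-dependence and the standard convolution-of-marginals identity; the measure-theoretic interchanges are routine since every integrand is nonnegative. Note this is essentially the same template as the Gaussian-mixture proposition, with the uniform bound on $g_{i|[d]\setminus i}$ playing the role there played by the per-component bound.
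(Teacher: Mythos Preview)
Your proof is correct and follows essentially the same approach as the paper: both bound $g(\bc_{i,t}-\by)$ by a $\poly(d)$ constant times $\int_{\reals} g(\bc_{i,s}-\by)\,ds = g_{[d]\setminus i}(\bc-\by_{[d]\setminus i})$ (which is exactly your factorization $g = g_{i\mid[d]\setminus i}\cdot g_{[d]\setminus i}$ together with the Gaussian conditional bound from Proposition~\ref{prop:gaussian}), and then observe that the resulting upper bound on the numerator equals that constant times the denominator $\int_{\reals} F(s)\,ds$. Your use of the marginal-of-convolution identity is just a repackaging of the paper's Fubini step; the arguments are the same.
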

\begin{proof}
Let $i \in [d]$ and let $\bc \in \reals^{d-1}$. For $t \in \reals$ we denote $\bc_{i,t} = (c_1,\ldots,c_{i-1},t,c_i,\ldots,c_{d-1}) \in \reals^d$.
For $\by \in \reals^d$, let $g^\by:\reals^d \rightarrow \reals$ be such that $g^\by(\bx) = g(\bx - \by)$. Note that $g^\by$ is the density of the normal distribution $\cn(\by,\Sigma)$.
By the proof of Proposition~\ref{prop:gaussian}, there is $M = \poly(d)$ (that depends on $\delta$) such that for every $\by$, and every $\bc,t$ and $i$, we have
\begin{equation}
\label{eq:convMbound}
\frac{g(\bc_{i,t}-\by)}{\int_{\reals}g(\bc_{i,t}-\by)dt}
= \frac{g^\by(\bc_{i,t})}{\int_{\reals}g^\by(\bc_{i,t})dt}
= g^\by_{i|[d] \setminus i}(t | \bc)
\leq M~.
\end{equation}

Recall that
\[
f'(\bc_{i,t})
= (f \star g) (\bc_{i,t})
= \int_{\reals^d} f(\by)g(\bc_{i,t} - \by) d\by~.
\]
Now, we have
\begin{align*}
f'_{i|[d] \setminus i}(t | \bc)
&= \frac{f'(\bc_{i,t})}{\int_{\reals}f'(\bc_{i,t})dt}
= \frac{\int_{\reals^d} f(\by)g(\bc_{i,t} - \by) d\by}{\int_{\reals}\left[\int_{\reals^d} f(\by)g(\bc_{i,t} - \by) d\by\right]dt}
\\
&\stackrel{(Eq.~\ref{eq:convMbound})}{\leq} \frac{\int_{\reals^d} f(\by)M \left[\int_{\reals}g(\bc_{i,t}-\by)dt\right] d\by}{\int_{\reals}\left[\int_{\reals^d} f(\by)g(\bc_{i,t} - \by) d\by\right]dt}
\\
&= \frac{M \int_{\reals^d} \int_{\reals}f(\by) g(\bc_{i,t}-\by)dt d\by}{\int_{\reals^d} \int_{\reals} f(\by)g(\bc_{i,t} - \by) dt d\by}
= M~.
\end{align*}
\end{proof}

\subsection{Uniform distribution on the ball}

In the cases of Gaussians, Gaussian mixtures, and Gaussian smoothing, we showed that the conditional density of $x_i | x_1,\ldots,x_{i-1},x_{i+1},\ldots,x_d = \bc$ is bounded for every $\bc \in \reals^{d-1}$.
Note that the definition of almost-bounded conditional density allows the conditional density to be greater than $M$ for some set of $\bc \in \reals^{d-1}$ with a small marginal probability.
In the case of the uniform distribution over a ball in $\reals^d$, we show that we cannot bound the conditional density for all $\bc \in \reals^{d-1}$, but we can bound it for a set in $\reals^{d-1}$ with large marginal probability, which is sufficient by the definition of almost-bounded conditional density.

Let $\mu$ be the uniform distribution over the ball of a constant radius $R$ in $\reals^d$.
Let $\bc \in \reals^{d-1}$ be such that $\sum_{j \in [d-1]}c_j^2 = R^2 - \frac{1}{2^d}$.
Let $i \in [d]$. For $t \in \reals$, let $\bc_{i,t} = (c_1,\ldots,c_{i-1},t,c_i,\ldots,c_{d-1}) \in \reals^d$.
Note that $\mu_{i|[d] \setminus i}(t|\bc)=0$ for every $t$ such that $\sum_{j \in [d-1]}c_j^2 + t^2 > R^2$, namely, for every
\[
|t| > \sqrt{ R^2 - \sum_{j \in [d-1]}c_j^2 }
= \sqrt{ \frac{1}{2^d} }
= \frac{1}{2^{d/2}}~.
\]
Hence, the conditional density $\mu_{i|[d] \setminus i}(t|\bc) = \frac{\mu(\bc_{i,t})}{\mu_{[d] \setminus i}(\bc)}$ is uniform on the interval $\left[-\frac{1}{2^{d/2}},\frac{1}{2^{d/2}}\right]$.
Therefore, we have
\[
\mu_{i|[d] \setminus i}(t|\bc)
= \frac{1}{2 \cdot \frac{1}{2^{d/2}}}
= 2^{\frac{d}{2} - 1}~.
\]
Thus, for such $\bc$ we cannot bound $\mu_{i|[d] \setminus i}(t|\bc)$ with a polynomial.
However, as we show in the following proposition, the marginal probability to obtain such $\bc$ is small, and $\mu$ has an almost-bounded conditional density.

\begin{proposition}
Let $\mu$ be the uniform distribution over the ball of radius $R \geq \frac{1}{\poly(d)}$ in $\reals^d$. Then, $\mu$ has an almost-bounded conditional density.
\end{proposition}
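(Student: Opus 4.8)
The plan is to exploit the slack built into the definition of almost-bounded conditional density: it permits the conditional density to exceed $M$ on an exceptional set of small marginal probability, which is essential here since (as the discussion preceding the statement shows) no uniform-in-$\bc$ bound is possible. Fix $i \in [d]$ and $\bc = (c_1,\ldots,c_{i-1},c_{i+1},\ldots,c_d) \in \reals^{d-1}$ with $\rho := \snorm{\bc} < R$. Conditioned on $x_1,\ldots,x_{i-1},x_{i+1},\ldots,x_d = \bc$, the coordinate $x_i$ is uniform on $\bigl[-\sqrt{R^2-\rho^2},\,\sqrt{R^2-\rho^2}\bigr]$, so $\mu_{i\mid[d]\setminus i}(t\mid\bc) = \tfrac{1}{2\sqrt{R^2-\rho^2}}$ on that interval and $0$ elsewhere (and $\{\rho = R\}$ has $\mu_{[d]\setminus i}$-measure $0$). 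Hence $\sup_{t}\mu_{i\mid[d]\setminus i}(t\mid\bc) > M$ exactly when $\rho^2 > R^2 - \tfrac{1}{4M^2}$.

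First I would translate this into a statement about $x_i$ alone. Writing $\bx^i=(x_1,\ldots,x_{i-1},x_{i+1},\ldots,x_d)$, the event $E_i := \{\sup_t \mu_{i\mid[d]\setminus i}(t\mid\bx^i) > M\}$ coincides, up to a null set, with $\{\snorm{\bx^i}^2 > R^2 - \tfrac1{4M^2}\}$. Since $\snorm{\bx}^2 = \snorm{\bx^i}^2 + x_i^2 \le R^2$ for $\mu$-almost every $\bx$, on $E_i$ we get $x_i^2 \le R^2 - \snorm{\bx^i}^2 < \tfrac1{4M^2}$, so $E_i \subseteq \{|x_i| < \tfrac1{2M}\}$. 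It therefore suffices to pick $M = \poly(d)$ with $Pr_{\bx\sim\mu}(|x_i| < \tfrac1{2M}) \le \epsilon$.

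Next I would bound this slab probability via the marginal density of $x_i$. By symmetry of the ball under coordinate permutations this density is the same for all $i$, and it is proportional to $(R^2 - t^2)^{(d-1)/2}$ on $[-R,R]$, hence maximized at $t=0$. An elementary lower bound on the normalizing integral — restrict it to $|s|\le R/\sqrt d$, where $(R^2-s^2)^{(d-1)/2}\ge R^{d-1}\bigl(1-\tfrac1d\bigr)^{(d-1)/2}$, giving $\int_{-R}^{R}(R^2-s^2)^{(d-1)/2}\,ds \ge \tfrac{cR^d}{\sqrt d}$ for an absolute constant $c>0$ — yields $\mu_i(t) \le \mu_i(0) \le \tfrac{\sqrt d}{cR}$ for all $t$. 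Consequently $Pr_{\bx\sim\mu}(|x_i| < a) \le 2a\cdot\tfrac{\sqrt d}{cR}$ for every $a>0$.

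Finally, given $\epsilon = \tfrac1{\poly(d)}$, I would take $a=\tfrac1{2M}$ and set $M := \bigl\lceil \tfrac{\sqrt d}{c\,\epsilon\,R}\bigr\rceil$, so that $Pr_{\bx\sim\mu}(|x_i| < \tfrac1{2M}) \le \tfrac{\sqrt d}{cMR} \le \epsilon$; together with $E_i \subseteq \{|x_i| < \tfrac1{2M}\}$ this gives $Pr(E_i) \le \epsilon$ for every $i \in [d]$. Since $R \ge \tfrac1{\poly(d)}$ and $\tfrac1\epsilon = \poly(d)$, this $M$ is $\poly(d)$, which is precisely the definition of almost-bounded conditional density. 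The only real obstacle is the first observation — that the blow-up of the conditional density near the sphere forces use of the exceptional-set slack — together with the mild dimension-dependent estimate showing the offending region sits in a thin coordinate slab of polynomially small probability; the rest is routine.
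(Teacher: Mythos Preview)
Your proof is correct, and it follows a genuinely different route from the paper's. Both arguments begin by identifying the bad event $E_i$ with $\{\snorm{\bx^i}^2 > R^2 - \tfrac{1}{4M^2}\}$, but they then bound its probability via different supersets. The paper uses $E_i \subseteq \{\snorm{\bx}^2 > R^2 - \tfrac{1}{4M^2}\}$ (a thin spherical shell near the boundary), computes the shell probability as a volume ratio $1-(1-\tfrac{1}{4M^2R^2})^{d/2}$, and finishes with Bernoulli's inequality, arriving at $M = \tfrac{\sqrt{d}}{2R\sqrt{2\epsilon}}$. You instead use $E_i \subseteq \{|x_i| < \tfrac{1}{2M}\}$ (a thin coordinate slab), then bound the slab probability via the one-dimensional marginal density, obtaining $M \sim \tfrac{\sqrt{d}}{\epsilon R}$. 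Your $M$ scales worse in $\epsilon$ ($1/\epsilon$ versus $1/\sqrt{\epsilon}$), but this is irrelevant for the $\poly(d)$ conclusion. Your slab argument is arguably more elementary, avoiding the volume-ratio computation in favor of a direct density bound; the paper's shell argument is more direct in that it stays with the radial quantity throughout and gives the tighter constant.
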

\begin{proof}
Let $\epsilon = \frac{1}{\poly(d)}$ and let $M = \frac{\sqrt{d}}{2R \sqrt{2\epsilon}}$. Let $i \in [d]$ and let $\bc \in \reals^{d-1}$.
We denote $r = \sqrt{\sum_{j \in [d-1]}c_j^2}$.
Note that $\mu_{i|[d] \setminus i}(t|\bc)$ is the uniform distribution over the interval $\left[-\sqrt{R^2-r^2}, \sqrt{R^2-r^2}\right]$.
Hence, for every $t$ in this interval we have
\[
\mu_{i|[d] \setminus i}(t|\bc)
= \frac{1}{2\sqrt{R^2-r^2}}~.
\]
Note that if $r^2 \leq R^2-\frac{1}{4M^2}$ then $\mu_{i|[d] \setminus i}(t|\bc) \leq M$.
Therefore, we have
\begin{align*}
Pr_{\bc \sim \mu_{[d]\setminus i}}\left(\exists t \text{\ s.t.\ }\mu_{i|[d] \setminus i}(t|\bc) > M\right)
&\leq Pr_{\bc \sim \mu_{[d]\setminus i}}\left(\sum_{j \in [d-1]}c_j^2 > R^2-\frac{1}{4M^2} \right) \nonumber
\\
&\leq Pr_{\bx \sim \mu}\left(\sum_{j \in [d]}x_j^2 > R^2-\frac{1}{4M^2} \right)~.
\end{align*}

Let $V_d(R)$ be the volume of the ball of radius $R$ in $\reals^d$. Recall that $V_d(R)=V_d(1) \cdot R^d$.
Note that the above equals to
\begin{align*}
\frac{1}{V_d(R)} \cdot \left(V_d(R) - V_d\left(\sqrt{R^2-\frac{1}{4M^2}}\right)\right)
&= 1 - \frac{\left(\sqrt{R^2-\frac{1}{4M^2}}\right)^{d}}{R^d} \nonumber
\\
&= 1 - \left(1-\frac{1}{4M^2R^2}\right)^{d/2}~.
\end{align*}

By Bernoulli's inequality, for every $z \geq -1$ and $y \geq 1$ we have $(1+z)^y \geq 1 + yz$. Therefore, the above is at most
\[
1 - \left(1-\frac{d}{8M^2R^2}\right)
= \frac{d}{8M^2R^2}~.
\]

Plugging in $M = \frac{\sqrt{d}}{2R \sqrt{2\epsilon}}$, we obtain
\[
Pr_{\bc \sim \mu_{[d]\setminus i}}\left(\exists t \text{\ s.t.\ }\mu_{i|[d] \setminus i}(t|\bc) > M\right)
\leq \epsilon~.
\]
\end{proof}

\subsection{Distributions from existing depth-separation results}

As we described in Section~\ref{sec:intro}, the depth-separation result of \cite{telgarsky2016benefits}, and the results that rely on it (e.g., \cite{safran2017depth,yarotsky2017error,liang2016deep}), are with respect to the uniform distribution on $[0,1]^d$. Thus, each component is chosen i.i.d. from the uniform distribution on the interval $[0,1]$, and therefore its conditional density is bounded by the constant $1$.

The depth-separation result of \cite{daniely2017depth} is for the function $f(\bx_1,\bx_2) = \sin(\pi d^3 \inner{\bx_1,\bx_2})$ with respect to the uniform distribution on $\bbs^{d-1} \times \bbs^{d-1}$, namely, both $\bx_1$ and $\bx_2$ are on the unit sphere. In \cite{safran2019depth}, it is shown that this result can be easily reduced to a depth-separation result for the function $f(\bx)=\sin(\frac{1}{2} \pi d^3 \norm{\bx})$ and an $L_\infty$-type approximation. Moreover, from their proof it follows that this reduction applies also to an $L_2$ approximation with respect to an input $\bx = \frac{\bx_1 + \bx_2}{2}$ where $\bx_1$ and $\bx_2$ are drawn i.i.d. from the uniform distribution on $\bbs^{d-1}$.
We now show that this distribution has an almost-bounded conditional density.
We first find the density function of $\norm{\bx}$.

\begin{lemma}
\label{lemma:marginal daniely}
Let $\bx = \frac{\bx_1 + \bx_2}{2}$ where $\bx_1$ and $\bx_2$ are drawn i.i.d. from the uniform distribution on $\bbs^{d-1}$. Then, the distribution of $\norm{\bx}$ has the density
\[
f_r(r) = \frac{1}{B\left(\frac{1}{2},\frac{d-1}{2}\right)} 2^{d-1} r^{d-2} (1-r^2)^{\frac{d-3}{2}}~,
\]
where $B(\alpha,\beta) = \frac{\Gamma(\alpha)\Gamma(\beta)}{\Gamma(\alpha+\beta)}$ is the beta function, and $r \in (0,1)$.
\end{lemma}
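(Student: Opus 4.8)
The plan is to reduce the problem to the (standard) distribution of the inner product of two independent uniform points on the sphere, and then apply a one-dimensional change of variables.

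First I would note that, since $\snorm{\bx_1} = \snorm{\bx_2} = 1$,
\[
\snorm{\bx}^2 = \tfrac14\snorm{\bx_1 + \bx_2}^2 = \tfrac14\left(2 + 2\inner{\bx_1,\bx_2}\right) = \frac{1 + \inner{\bx_1,\bx_2}}{2}~,
\]
so it suffices to find the law of $s := \inner{\bx_1,\bx_2}$ and then transform. By rotational invariance of the uniform measure on $\bbs^{d-1}$: conditioned on $\bx_1$, we may rotate so that $\bx_1$ is the first standard basis vector, and then $s$ equals the first coordinate of a uniform point on $\bbs^{d-1}$; this conditional law is independent of $\bx_1$, hence it is also the unconditional law of $s$.

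Next I would use the standard fact that the first coordinate $u_1$ of a uniform point on $\bbs^{d-1}$ has density
\[
g(t) = \frac{1}{B\!\left(\tfrac12,\tfrac{d-1}{2}\right)}\,(1-t^2)^{\frac{d-3}{2}}~, \qquad t \in (-1,1)~,
\]
which one obtains by slicing $\bbs^{d-1}$ at height $u_1 = t$ (a sphere $\bbs^{d-2}$ of radius $\sqrt{1-t^2}$, contributing the factor $(1-t^2)^{(d-3)/2}$ after accounting for the slant of the slice), and whose normalizing constant is verified by the substitution $u = t^2$, under which $\int_{-1}^{1} (1-t^2)^{(d-3)/2}\,dt = \int_0^1 u^{-1/2}(1-u)^{(d-3)/2}\,du = B(\tfrac12,\tfrac{d-1}{2})$.

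Finally I would change variables via $r = \sqrt{(1+s)/2}$, i.e.\ $s = 2r^2 - 1$ with $ds = 4r\,dr$, noting that $s \in (-1,1)$ corresponds to $r \in (0,1)$. Using $1 - (2r^2-1)^2 = (2-2r^2)(2r^2) = 4r^2(1-r^2)$, this gives
\[
f_r(r) = g(2r^2-1)\cdot 4r = \frac{4r\,\big(4r^2(1-r^2)\big)^{\frac{d-3}{2}}}{B(\tfrac12,\tfrac{d-1}{2})} = \frac{2^{d-1}\,r^{d-2}(1-r^2)^{\frac{d-3}{2}}}{B(\tfrac12,\tfrac{d-1}{2})}~,
\]
which is exactly the claimed formula. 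There is no genuine obstacle here; the step requiring the most care is the justification that $s$ has the density $g$ (rotational invariance together with the spherical-slice computation), together with the bookkeeping of the powers of $2$ coming from the Jacobian $4r$ and from the identity $1-(2r^2-1)^2 = 4r^2(1-r^2)$.
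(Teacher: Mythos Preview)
Your proof is correct and follows essentially the same approach as the paper: reduce $\snorm{\bx}^2$ to $(1+\inner{\bx_1,\bx_2})/2$, use rotational invariance to identify the law of $\inner{\bx_1,\bx_2}$ with that of a single coordinate of a uniform point on $\bbs^{d-1}$, and then change variables. The only difference is cosmetic: the paper first cites that the \emph{square} of the coordinate is $\betadist(\tfrac12,\tfrac{d-1}{2})$ and then passes to $|z|$, $z$, and finally $r$ in separate steps, whereas you quote the density of the coordinate directly and perform a single change of variables; your version is slightly more streamlined but not a different idea.
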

\begin{proof}
Let $\bx = \frac{\bx_1 + \bx_2}{2}$ where $\bx_1$ and $\bx_2$ are drawn i.i.d. from the uniform distribution on $\bbs^{d-1}$. Note that
\begin{equation}
\label{eq:square norm}
\norm{\bx}^2
= \frac{1}{4} \left(\norm{\bx_1}^2 + \norm{\bx_2}^2 + 2 \bx_1^\top\bx_2\right)
= \frac{1}{4} \left(2 + 2 \bx_1^\top\bx_2\right)
= \frac{1}{2} \left(1 + \bx_1^\top\bx_2\right)~.
\end{equation}

Since $\bx_1$ and $\bx_2$ are independent and uniformly distributed on the sphere, then the distribution of $\bx_1^\top\bx_2$ equals to the distribution of $(1,0,\ldots,0) \bx_2$, which equals to the
marginal distribution of the first component of $\bx_2$. Let $z$ be the first component of $\bx_2$. By standard results (cf. \cite{fang2018symmetric}), the distribution of $z^2$ is $\betadist(\frac{1}{2},\frac{d-1}{2})$, namely, a Beta distribution with parameters $\frac{1}{2},\frac{d-1}{2}$. Thus, the density of $z^2$ is
\[
f_{z^2}(y) = \frac{1}{B\left(\frac{1}{2},\frac{d-1}{2}\right)} y^{-\frac{1}{2}} (1-y)^{\frac{d-3}{2}}~,
\]
where $B(\alpha,\beta) = \frac{\Gamma(\alpha)\Gamma(\beta)}{\Gamma(\alpha+\beta)}$ is the beta function, and $y \in (0,1)$.

Performing a variable change, we obtain the density of $|z|$, which equals to the density of $|\bx_1^\top\bx_2|$.
\[
f_{|\bx_1^\top\bx_2|}(y) = f_{|z|}(y)
= f_{z^2}(y^2) \cdot 2y
= \frac{1}{B\left(\frac{1}{2},\frac{d-1}{2}\right)} y^{-1} (1-y^2)^{\frac{d-3}{2}} \cdot 2y
= \frac{2}{B\left(\frac{1}{2},\frac{d-1}{2}\right)} (1-y^2)^{\frac{d-3}{2}}~,
\]
where $y \in (0,1)$.
Let $f_{\bx_1^\top\bx_2}$ be the density of $\bx_1^\top\bx_2$. Note that for every $y \in (-1,1)$ we have $f_{\bx_1^\top\bx_2}(y) = f_{\bx_1^\top\bx_2}(-y)$. Hence, for every $y \in (-1,1)$,
\[
f_{\bx_1^\top\bx_2}(y) =
\frac{1}{2} f_{|\bx_1^\top\bx_2|}(|y|)
= \frac{1}{B\left(\frac{1}{2},\frac{d-1}{2}\right)} (1-y^2)^{\frac{d-3}{2}}~.
\]

Performing a variable change again, we obtain the density of  $\frac{1}{\sqrt{2}} \cdot \sqrt{1+\bx_1^\top\bx_2}$.
\begin{align*}
f_{\frac{1}{\sqrt{2}} \cdot \sqrt{1+\bx_1^\top\bx_2}}(y)
&= f_{\bx_1^\top\bx_2}(2y^2-1) \cdot 4y
= \frac{1}{B\left(\frac{1}{2},\frac{d-1}{2}\right)} (1-(4y^4-4y^2+1))^{\frac{d-3}{2}} \cdot 4y
\\
&= \frac{1}{B\left(\frac{1}{2},\frac{d-1}{2}\right)} (2y)^{d-3} (1-y^2)^{\frac{d-3}{2}} \cdot 4y
= \frac{1}{B\left(\frac{1}{2},\frac{d-1}{2}\right)} 2^{d-1} y^{d-2} (1-y^2)^{\frac{d-3}{2}}~.
\end{align*}

\stam{
Performing variable changes again, we obtain the density of $1+\bx_1^\top\bx_2$:
\[
f_{1+\bx_1^\top\bx_2}(y) = f_{\bx_1^\top\bx_2}(y-1)
= \frac{1}{B\left(\frac{1}{2},\frac{d-1}{2}\right)} (1-(y^2-2y+1))^{\frac{d-3}{2}}
= \frac{1}{B\left(\frac{1}{2},\frac{d-1}{2}\right)} y^{\frac{d-3}{2}} (2-y)^{\frac{d-3}{2}}~,
\]
the density of $\sqrt{1+\bx_1^\top\bx_2}$:
\[
f_{\sqrt{1+\bx_1^\top\bx_2}}(y) = f_{1+\bx_1^\top\bx_2}(y^2) \cdot 2y
= \frac{1}{B\left(\frac{1}{2},\frac{d-1}{2}\right)} y^{d-3} (2-y^2)^{\frac{d-3}{2}} 2y
= \frac{2}{B\left(\frac{1}{2},\frac{d-1}{2}\right)} y^{d-2} (2-y^2)^{\frac{d-3}{2}}~,
\]
and the density of $\frac{1}{\sqrt{2}} \cdot \sqrt{1+\bx_1^\top\bx_2}$:
\begin{align*}
f_{\frac{1}{\sqrt{2}} \cdot \sqrt{1+\bx_1^\top\bx_2}}(y)
&= f_{\sqrt{1+\bx_1^\top\bx_2}}(\sqrt{2}y) \cdot \sqrt{2}
= \frac{2}{B\left(\frac{1}{2},\frac{d-1}{2}\right)} 2^\frac{d-2}{2} y^{d-2} (2-2y^2)^{\frac{d-3}{2}} \sqrt{2}
\\
&= \frac{1}{B\left(\frac{1}{2},\frac{d-1}{2}\right)} 2^{d-1} y^{d-2} (1-y^2)^{\frac{d-3}{2}}~.
\end{align*}
}%

Note that by Eq.~\ref{eq:square norm} we have
\[
\norm{\bx} = \sqrt{\frac{1 + \bx_1^\top\bx_2}{2}}~,
\]
and therefore the density of $\norm{\bx}$ is
\[
f_r(r) = f_{\frac{1}{\sqrt{2}} \cdot \sqrt{1+\bx_1^\top\bx_2}}(r)
= \frac{1}{B\left(\frac{1}{2},\frac{d-1}{2}\right)} 2^{d-1} r^{d-2} (1-r^2)^{\frac{d-3}{2}}~.
\]
\end{proof}

\begin{proposition}
Let $\bx = \frac{\bx_1 + \bx_2}{2}$ where $\bx_1$ and $\bx_2$ are drawn i.i.d. from the uniform distribution on $\bbs^{d-1}$. Then the distribution of $\bx$ has an almost-bounded conditional density.
\end{proposition}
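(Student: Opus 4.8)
The distribution of $\bx=\tfrac12(\bx_1+\bx_2)$ is invariant under rotations of $\reals^d$, so the plan is to reduce everything to the one-dimensional density $f_r$ of $\norm{\bx}$ obtained in Lemma~\ref{lemma:marginal daniely}. A rotation-invariant density on $\reals^d$ has the form $\bx\mapsto\phi(\norm{\bx})$, and since the density of $\norm{\bx}$ is then proportional to $r^{d-1}\phi(r)$, Lemma~\ref{lemma:marginal daniely} gives $\phi(r)\propto f_r(r)/r^{d-1}\propto (1-r^2)^{(d-3)/2}/r$ on $(0,1)$, with $\phi\equiv0$ outside (the normalizing constant plays no role below, and we may assume $d\ge3$, the only regime that matters). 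Fix $i\in[d]$ and $\bc\in\reals^{d-1}$, write $\rho=\norm{\bc}$ and $R=\sqrt{1-\rho^2}$, and note $\norm{\bc_{i,t}}=\sqrt{\rho^2+t^2}$. Since the conditional density of $x_i$ is proportional, as a function of $t$, to the joint density, I get
\[
f_{i\mid[d]\setminus i}(t\mid\bc)
=\frac{\phi\!\left(\sqrt{\rho^2+t^2}\right)}{\int_{\reals}\phi\!\left(\sqrt{\rho^2+s^2}\right)ds}
=\frac{(R^2-t^2)^{(d-3)/2}\big/\sqrt{\rho^2+t^2}}{\int_{-R}^{R}(R^2-s^2)^{(d-3)/2}\big/\sqrt{\rho^2+s^2}\;ds}
\qquad(|t|<R),
\]
and $0$ otherwise.

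Next I bound this quantity uniformly in $t$. The numerator is at most $R^{d-3}/\rho$ (since $R^2-t^2\le R^2$ and $\rho^2+t^2\ge\rho^2$); for the denominator, using $\rho^2+s^2\le\rho^2+R^2=1$, the substitution $s=Ru$, and the Beta integral $\int_{-1}^{1}(1-u^2)^{(d-3)/2}du=B(\tfrac12,\tfrac{d-1}{2})$, it is at least $R^{d-2}B(\tfrac12,\tfrac{d-1}{2})$. Hence $f_{i\mid[d]\setminus i}(t\mid\bc)\le\frac1{\rho R\,B(1/2,(d-1)/2)}$ for every $t$, and log-convexity of $\log\Gamma$ (which gives $\Gamma(d/2)\le\sqrt{(d-1)/2}\,\Gamma((d-1)/2)$, i.e.\ $1/B(\tfrac12,\tfrac{d-1}{2})\le\sqrt d$) upgrades this to $\le\sqrt d/(\rho R)$. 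Consequently, whenever $\norm{\bc}\in[\tfrac1{16},\tfrac{\sqrt{15}}{4}]$ — which forces $\rho\ge\tfrac1{16}$ and $R=\sqrt{1-\rho^2}\ge\tfrac14$ — the conditional density is bounded by $M=64\sqrt d=\poly(d)$, uniformly in $t$ and in $i$.

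It then remains to show $\Pr_{\bc}\!\big(\norm{\bc}\notin[\tfrac1{16},\tfrac{\sqrt{15}}{4}]\big)$ is below any prescribed $\epsilon=1/\poly(d)$; in fact I will show it is exponentially small in $d$. By permutation symmetry I may take $i=d$, so $\bc=(x_1,\dots,x_{d-1})$ with $\bx\sim\mu$ and $\norm{\bc}^2=\norm{\bx}^2-x_d^2$. For the upper tail, $\norm{\bc}^2>\tfrac{15}{16}$ forces $\norm{\bx}^2>\tfrac{15}{16}$, i.e.\ $\bx_1^{\top}\bx_2>\tfrac78$ by Eq.~\ref{eq:square norm}; since $\bx_1^{\top}\bx_2$ has density $(1-z^2)^{(d-3)/2}/B(\tfrac12,\tfrac{d-1}{2})$ on $(-1,1)$ (derived inside the proof of Lemma~\ref{lemma:marginal daniely}), the bound $1-z^2\le2(1-z)$ makes this probability $O(2^{-d})$. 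For the lower tail, radiality lets me write $\bx=\norm{\bx}\bu$ with $\bu$ uniform on $\bbs^{d-1}$ independent of $\norm{\bx}$, so $\norm{\bc}^2=\norm{\bx}^2(1-u_d^2)$ and hence $\norm{\bc}<\tfrac1{16}$ forces $\norm{\bx}<\tfrac14$ or $1-u_d^2<\tfrac1{16}$; the first has probability at most $\int_0^{1/4}\tfrac{2^{d-1}}{B(1/2,(d-1)/2)}r^{d-2}dr=O(2^{-d})$ (integrating $f_r$), and the second is exponentially small since $u_d^2\sim\betadist(\tfrac12,\tfrac{d-1}{2})$ concentrates near $1/d$. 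A union bound over these three exceptional events, together with the conditional-density bound on their complement, finishes the proof.

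The main obstacle is organizational rather than computational: once one passes to the radial profile $\phi$, the conditional density collapses to an explicit two-parameter expression and the bound above is immediate, but one must verify that a single threshold pair and a single $M=\poly(d)$ work simultaneously for every coordinate $i$ and that the exceptional $\bc$-set has probability below an \emph{arbitrary} $1/\poly(d)$. This goes through painlessly precisely because $\norm{\bx}$ concentrates around $1/\sqrt2$, so constant thresholds already yield exponentially small exceptional probability and no $\epsilon$-dependent cutoffs are needed.
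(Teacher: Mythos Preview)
Your argument is correct and follows essentially the same route as the paper: derive the radial profile $\phi(r)\propto(1-r^2)^{(d-3)/2}/r$ from Lemma~\ref{lemma:marginal daniely}, bound the conditional density on a ``good'' annulus in $\|\bc\|$, and control the probability of the complement. The one noteworthy difference is in the last step. The paper picks $\epsilon$-dependent thresholds $\delta_1,\delta_2=1/\poly(d)$ and shows the exceptional probability is at most $\epsilon$, whereas you use \emph{fixed} constant thresholds $[\tfrac1{16},\tfrac{\sqrt{15}}{4}]$ and exploit the concentration of $\|\bx\|$ (and of $u_d^2\sim\betadist(\tfrac12,\tfrac{d-1}{2})$) to get exponentially small exceptional probability, which beats any $1/\poly(d)$ at once. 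This is a clean simplification; it also yields a slightly sharper conditional bound ($M=O(\sqrt{d})$ via your log-convexity estimate $1/B(\tfrac12,\tfrac{d-1}{2})\le\sqrt d$) compared to the paper's $M=\poly(d)$ coming from $\epsilon$-dependent $\delta_1,\delta_2$. Your denominator bound via the Beta integral $\int_{-1}^1(1-u^2)^{(d-3)/2}du=B(\tfrac12,\tfrac{d-1}{2})$ is also more direct than the paper's explicit antiderivative after the change of variable $z=\sqrt{\|\bc\|^2+t^2}$.
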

\begin{proof}
Let $\epsilon = \frac{1}{\poly(d)}$.
Let $f_r$ be the distribution of $\norm{\bx}$. By Lemma~\ref{lemma:marginal daniely}, we have
\begin{equation}
\label{eq:defining f_r}
f_r(r) = \frac{1}{B\left(\frac{1}{2},\frac{d-1}{2}\right)} 2^{d-1} r^{d-2} (1-r^2)^{\frac{d-3}{2}}~.
\end{equation}

Let $\mu:\reals^d \rightarrow \reals$ be the density function on $\reals^d$ that is induced by $f_r$. That is, $\bx \sim \mu$ has the same distribution as $r \bu$ where $r \sim f_r$ and $\bu$ is distributed uniformly on $\bbs^{d-1}$.
Let $i \in [d]$.
For simplicity, we always assume in this proof that $d \geq 5$ (note that the definition of almost-bounded conditional density is not sensitive to the behavior of the density for small values of $d$).

We will first find $\delta_1,\delta_2 \leq \frac{1}{\poly(d)}$ such that
\begin{equation}
\label{eq:c bounds}
Pr_{\bc \sim \mu_{[d] \setminus i}}\left(\delta_1 \leq \norm{\bc} \leq 1-\delta_2\right) \geq 1-\epsilon~.
\end{equation}
Then, we will show that there is $M = \poly(d)$ such that for every $\bc \in \reals^{d-1}$ with $\delta_1 \leq \norm{\bc} \leq 1-\delta_2$ and every $t \in (-1,1)$ we have
\begin{equation}
\label{eq:bounding the conditional density}
\mu_{i|[d] \setminus i}(t|\bc) \leq M~.
\end{equation}

We start with $\delta_2$. Note that
\begin{equation}
\label{eq:bound B}
B\left(\frac{1}{2},\frac{d-1}{2}\right)
= \frac{\Gamma(\frac{1}{2})\Gamma(\frac{d-1}{2})}{\Gamma(\frac{d}{2})}
\geq \frac{\Gamma(\frac{1}{2})\Gamma(\frac{d}{2}-1)}{\Gamma(\frac{d}{2})}
= \frac{\Gamma(\frac{1}{2})}{\frac{d}{2}-1}
\geq \frac{2\Gamma(\frac{1}{2})}{d}
= \frac{2\sqrt{\pi}}{d}
\geq \frac{1}{d}~.
\end{equation}
Let $\delta_2 = 1- \sqrt{1-\frac{\epsilon}{32d}}$. By the above and Eq.~\ref{eq:defining f_r}, for every $r \in (1-\delta_2,1)$ we have
\[
f_r(r)
\leq d 2^{d-1} r^{d-2} (1-r^2)^{\frac{d-3}{2}}
\leq d 2^{d-1} \left(1-(1-\delta_2)^2\right)^{\frac{d-3}{2}}
= d 2^{d-1} \left(\frac{\epsilon}{32d}\right)^{\frac{d-3}{2}}~.
\]
Hence,
\begin{align}
\label{eq:c upper bound}
Pr_{\bc \sim \mu_{[d] \setminus i}}\left(\norm{\bc} \geq 1-\delta_2 \right)
&\leq  Pr_{r \sim f_r}\left(r \geq 1-\delta_2 \right)
\leq d \cdot 2^{d-1} \left(\frac{\epsilon}{32d}\right)^{\frac{d-3}{2}} \cdot \delta_2 \nonumber
\\
&\leq d \cdot 4^{\frac{d-3}{2}} \cdot 4 \left(\frac{\epsilon}{32d}\right)^{\frac{d-3}{2}}
= 4d \left(\frac{\epsilon}{8d}\right)^{\frac{d-3}{2}}
\leq 4d \cdot \frac{\epsilon}{8d}
=\frac{\epsilon}{2}~.
\end{align}

We now turn to $\delta_1$.
By \cite{fang2018symmetric}, the marginal distribution $\bc \sim \mu_{[d] \setminus i}$ is such that $\norm{\bc} = r \alpha$, where $r$ and $\alpha$ are independent, $r \sim f_r$, and $\alpha^2 \sim \betadist\left(\frac{d-1}{2},\frac{1}{2}\right)$, namely, a Beta distribution with parameters $\frac{d-1}{2},\frac{1}{2}$.
%Since both $f_r$ and $\betadist\left(\frac{d-1}{2},\frac{1}{2}\right)$ are supported on $(0,1)$, we have
Hence, we have
\begin{equation}
\label{eq:c lower bound}
Pr_{\bc \sim \mu_{[d] \setminus i}}\left(\norm{\bc} \leq \delta_1 \right)
\leq Pr_{r \sim f_r}\left(r \leq \sqrt{\delta_1}\right) + Pr_{\beta \sim \betadist\left(\frac{d-1}{2},\frac{1}{2}\right)}\left(\sqrt{\beta} \leq \sqrt{\delta_1}\right)~.
\end{equation}

We now bound the two part of the above right hand side.
For $\delta = \frac{\epsilon}{16d}$, we have by Eq.~\ref{eq:defining f_r} and~\ref{eq:bound B} that for every $r \in (0,\delta)$,
\[
f_r(r)
\leq d 2^{d-1} r^{d-2} (1-r^2)^{\frac{d-3}{2}}
\leq d 2^{d-1} \delta^{d-2}
= d 2^{d-1} \left(\frac{\epsilon}{16d}\right)^{d-2}
= 2d \left(\frac{\epsilon}{8d}\right)^{d-2}
\leq 2d \cdot \frac{\epsilon}{8d}
= \frac{\epsilon}{4}~.
\]
Thus, for $\delta_1 = \delta^2$ we have
\begin{equation}
\label{eq:bound f_r for delta1}
Pr_{r \sim f_r}\left(r \leq \sqrt{\delta_1} \right)
= Pr_{r \sim f_r}\left(r \leq \delta \right)
\leq \delta \cdot \frac{\epsilon}{4}
\leq \frac{\epsilon}{4}~.
\end{equation}

Moreover, we have
\begin{align*}
Pr_{\beta \sim \betadist\left(\frac{d-1}{2},\frac{1}{2}\right)}\left(\sqrt{\beta} \leq \sqrt{\delta_1}\right)
&= Pr_{\beta \sim \betadist\left(\frac{d-1}{2},\frac{1}{2}\right)}\left(\beta \leq \delta_1\right)
\\
&= \int_{0}^{\delta_1}\frac{1}{B\left(\frac{d-1}{2},\frac{1}{2}\right)}\beta^{\frac{d-1}{2}-1}(1-\beta)^{\frac{1}{2}-1} d\beta
\\
&\leq \delta_1 \cdot \frac{1}{B\left(\frac{d-1}{2},\frac{1}{2}\right)} \cdot \delta_1^{\frac{d-3}{2}} \cdot \frac{1}{\sqrt{1-\delta_1}}
\end{align*}
Since $0 < \delta_1 \leq \frac{1}{2}$, and by plugging in Eq.~\ref{eq:bound B}, the above is at most
\begin{equation}
\label{eq:bound beta for delta1}
d \cdot \delta_1^{\frac{d-1}{2}} \cdot \frac{1}{\sqrt{1-\frac{1}{2}}}
\leq d \cdot \delta_1^{\frac{1}{2}} \cdot \sqrt{2}
= \sqrt{2}d \cdot \frac{\epsilon}{16d}
\leq \frac{\epsilon}{4}~.
\end{equation}

Combining Eq.~\ref{eq:c lower bound},~\ref{eq:bound f_r for delta1} and~\ref{eq:bound beta for delta1}, we have
\[
Pr_{\bc \sim \mu_{[d] \setminus i}}\left(\norm{\bc} \leq \delta_1 \right) \leq \frac{\epsilon}{2}~.
\]
Then, Eq.~\ref{eq:c bounds} follows by combining the above with Eq.~\ref{eq:c upper bound}.
Thus, it remains to show that there is $M = \poly(d)$ such that for every $\bc \in \reals^{d-1}$ with $\delta_1 \leq \norm{\bc} \leq 1-\delta_2$ and every $t \in (-1,1)$, Eq.~\ref{eq:bounding the conditional density} holds.

Let $A_d$ be the surface area of the unit sphere in $\reals^d$. Note that for every $\bx \neq \zero$ in the unit ball, we have
\begin{align*}
\mu(\bx)
&= \frac{f_r(\norm{\bx})}{\norm{\bx}^{d-1} A_d}
= \frac{1}{B\left(\frac{1}{2},\frac{d-1}{2}\right)} 2^{d-1} \norm{\bx}^{d-2} (1-\norm{\bx}^2)^{\frac{d-3}{2}} \cdot \frac{1}{\norm{\bx}^{d-1} A_d}
\\
&= \frac{1}{A_d B\left(\frac{1}{2},\frac{d-1}{2}\right)} 2^{d-1} (1-\norm{\bx}^2)^{\frac{d-3}{2}} \cdot \frac{1}{\norm{\bx}}~.
\end{align*}
For $t \in \reals$, we denote $\bc_{i,t} = (c_1,\ldots,c_{i-1},t,c_i,\ldots,c_{d-1}) \in \reals^d$.
Now, we have
\begin{align*}
\mu_{[d] \setminus i}(\bc)
= \int_{-1}^{1} \mu(\bc_{i,t}) dt
= \int_{-\sqrt{1-\norm{\bc}^2}}^{\sqrt{1-\norm{\bc}^2}} \frac{1}{A_d B\left(\frac{1}{2},\frac{d-1}{2}\right)} 2^{d-1} (1-(\norm{\bc}^2+t^2))^{\frac{d-3}{2}} \cdot \frac{1}{\sqrt{\norm{\bc}^2+t^2}} dt~.
\end{align*}
Performing the variable change $z = \sqrt{\norm{\bc}^2+t^2}$, the above equals
\begin{align*}
2 \int_{\norm{\bc}}^{1} &\frac{1}{A_d B\left(\frac{1}{2},\frac{d-1}{2}\right)} 2^{d-1} (1-z^2)^{\frac{d-3}{2}} \cdot \frac{1}{z} \cdot \frac{z}{\sqrt{z^2-\norm{\bc}^2}} dz
\\
&\geq 2 \int_{\norm{\bc}}^{1} \frac{1}{A_d B\left(\frac{1}{2},\frac{d-1}{2}\right)} 2^{d-1} (1-z^2)^{\frac{d-3}{2}} \cdot \frac{1}{z} dz
\\
&= \frac{2^{d}}{A_d B\left(\frac{1}{2},\frac{d-1}{2}\right)} \int_{\norm{\bc}}^{1} (1+z)^{\frac{d-3}{2}}(1-z)^{\frac{d-3}{2}} \cdot \frac{1}{z} dz
\\
&\geq \frac{2^{d}}{A_d B\left(\frac{1}{2},\frac{d-1}{2}\right)} (1+\norm{\bc})^{\frac{d-3}{2}} \int_{\norm{\bc}}^{1} (1-z)^{\frac{d-3}{2}} dz~.
\end{align*}
By plugging in
\[
\int_{\norm{\bc}}^{1} (1-z)^{\frac{d-3}{2}} dz
= \left. -\frac{(1-z)^{\frac{d-3}{2}+1}}{\frac{d-3}{2}+1} \right|_{\norm{\bc}}^1
= \frac{2(1-\norm{\bc})^{\frac{d-1}{2}}}{d-1}~,
\]
we get
\[
\frac{2^{d+1}(1+\norm{\bc})^{\frac{d-3}{2}}(1-\norm{\bc})^{\frac{d-1}{2}}}{A_d B\left(\frac{1}{2},\frac{d-1}{2}\right)(d-1)}~.
%= \frac{1}{A_d B\left(\frac{1}{2},\frac{d-1}{2}\right)} 2^{d+1} (1+\norm{\bc})^{\frac{d-3}{2}} \frac{(1-\norm{\bc})^{\frac{d-1}{2}}}{d-1}
\]
Hence,
\begin{align*}
\mu_{i|[d] \setminus i}(t|\bc)
&= \frac{\mu(\bc_{i,t})}{\mu_{[d] \setminus i}(\bc)}
\\
&\leq \frac{1}{A_d B\left(\frac{1}{2},\frac{d-1}{2}\right)} 2^{d-1} (1-\norm{\bc_{i,t}}^2)^{\frac{d-3}{2}} \cdot \frac{1}{\norm{\bc_{i,t}}} \cdot \frac{A_d B\left(\frac{1}{2},\frac{d-1}{2}\right) (d-1)}{2^{d+1}(1+\norm{\bc})^{\frac{d-3}{2}}(1-\norm{\bc})^{\frac{d-1}{2}}}
\\
&= (1-\norm{\bc_{i,t}}^2)^{\frac{d-3}{2}} \cdot \frac{1}{\norm{\bc_{i,t}}} \cdot \frac{d-1}{2^{2} (1+\norm{\bc})^{\frac{d-3}{2}} (1-\norm{\bc})^{\frac{d-1}{2}}}
\\
&\leq (1+\norm{\bc})^{\frac{d-3}{2}}(1-\norm{\bc})^{\frac{d-3}{2}} \cdot \frac{1}{\norm{\bc}} \cdot  \frac{d-1}{4 (1+\norm{\bc})^{\frac{d-3}{2}} (1-\norm{\bc})^{\frac{d-1}{2}}}
\\
&= \frac{1}{\norm{\bc}} \cdot \frac{d-1}{4 (1-\norm{\bc})}~.
\end{align*}
Now, since $\delta_1 \leq \norm{\bc} \leq 1-\delta_2$, the above is at most
\[
\frac{1}{\delta_1} \cdot \frac{d-1}{4 \delta_2} \leq \poly(d)~.
\]
\end{proof}

\cite{eldan2016power} showed separation between depth $2$ and $3$ for a $\poly(d)$-Lipschitz radial function $f:\reals^d \rightarrow \reals$ with respect to a distribution with density
\[
\mu(\bx) = \left(\frac{R_d}{\norm{\bx}}\right)^d J_{d/2}^2(2 \pi R_d \norm{\bx})~,
\]
where $R_d$ is the radius of the unit-volume Euclidean ball in $\reals^d$, and $J_{d/2}$ is a Bessel function of the first kind. An analysis of its conditional density requires some investigation of Bessel functions and is not included here.
However, it is not hard to show that for every polynomial $p(d)$, there is a distribution $\mu'$ (obtained by applying Gaussian smoothing to $\mu$ and has an almost-bounded conditional density by Proposition~\ref{prop:smoothing}), such that the function $f$ can be expressed by a depth-$3$ network but cannot be approximated by a depth-$2$ network with a Lipschitz constant bounded by $p(d)$. This follows from the fact that if there was a Lipschitz approximating network under $\mu'$, it would also be approximating under the slightly different distribution $\mu$.

\end{document}